\newtheorem{theorem}{Theorem}
\newtheorem{lemma}[theorem]{Lemma}
\newtheorem{corollary}[theorem]{Corollary}
\newdefinition{definition}[theorem]{Definition}
\newdefinition{remark}[theorem]{Remark}
\newdefinition{claim}[theorem]{Claim}
\newproof{proof}{Proof}
\newcommand{\assign}{\leftarrow}
\newcommand{\N}{\mathbb{N}}
\newcommand{\R}{\mathbb{R}}
\renewcommand{\epsilon}{\varepsilon}
\newcommand{\eps}{\varepsilon}
\DeclareMathOperator{\Exp}{E}
\DeclareMathOperator{\E}{E}
\DeclareMathOperator{\Var}{Var}
\definecolor{orange}{RGB}{255,127,0}
\definecolor{col}{RGB}{0,200,50}
\newcommand{\onemax}{\textsc{OneMax}\xspace}
\newcommand{\leadingones}{\textsc{Leading\-Ones}\xspace}
\journal{Theoretical Computer Science}
\begin{document}

\begin{frontmatter}
\title{The linear hidden subset problem for the (1+1)
EA with scheduled and adaptive mutation rates}

%
%
\author[eth]{Hafsteinn Einarsson}
\author[eth]{Marcelo Matheus Gauy}
\author[eth]{Johannes Lengler}
\author[eth]{Florian Meier\corref{cor1}\fnref{fn1}}
\ead{meierflo@inf.ethz.ch}
\author[eth]{Asier Mujika}
\author[eth]{Angelika Steger}
\author[eth]{Felix Weissenberger}
\address[eth]{Department of Computer Science, ETH Z\"urich\\
 Universit\"atstrasse 6, 8092 Z\"urich, Switzerland
}
\cortext[cor1]{Principal corresponding author}

%



%
%


\begin{abstract}
We study unbiased $(1+1)$ evolutionary algorithms on linear functions with an unknown number $n$ of bits with non-zero weight. Static algorithms achieve an optimal runtime  of $O(n (\ln  n)^{2+\epsilon})$, however, it remained unclear whether more dynamic parameter policies could yield better runtime guarantees.
We consider two setups: one where the mutation rate follows a fixed schedule, and one where it may be adapted depending on the history of the run. For the first setup,  we give a schedule that achieves a runtime of $(1\pm o(1))\beta  n \ln n$, where $\beta \approx 3.552$, which is an asymptotic improvement over the runtime of the static setup. Moreover, we show that no schedule admits a better runtime guarantee and that the optimal schedule is essentially unique. For the second setup, we show that the runtime can be further improved to $(1\pm o(1)) e n \ln n$, which matches the performance of algorithms that know $n$ in advance.

Finally, we study the related model of initial segment uncertainty with static position-dependent mutation rates, and derive asymptotically optimal lower bounds. This answers a question by Doerr, Doerr, and K\"otzing.
\end{abstract}
\begin{keyword}
Evolutionary Algorithm, Mutation-Based, Linear Functions, Hidden Subset Problem, Unknown Problem Length, Adaptive Parameters, Parameter Control
\end{keyword}

\end{frontmatter}

\section{Introduction}
\label{sec:Intro}

Mutation-based evolutionary algorithms (EAs) aim to optimize a fitness function $f$ by alternately executing two phases. In the mutation phase, new search points are created by mutating the current search points, while in the selection phase, certain search points (usually the fittest ones) are selected. Then, the optimization process is continued with the selected search points. The most basic EA, the $(1+1)$ EA, keeps at any step only one search point $x  \in \{0,1\}^{\mathbb{N}}$. In the mutation phase,   an \emph{offspring} $y $ of the current search point $x$ is created  by flipping each bit independently with  probability $p$, called the \emph{mutation rate}. In the selection phase,   the fitness values $f(x)$ and $f(y)$ are compared and the better search point is selected.

 The mutation rate is a critical parameter for mutation-based evolutionary algorithms. For example, for linear pseudo-boolean fitness functions $f:\{0,1\}^n \to \R$, Witt has shown in \cite{witt2013tight} that the optimal static\footnote{A dynamic or adaptive choice of the mutation rate can be beneficial, but only in lower order terms, see~\cite{doerr2016optimal}.} mutation rate for the $(1+1)$ EA is $1/n$, which leads to a runtime (the number of function evaluations before a global optimum is found) of $(1\pm o(1))en\ln n$.\footnote{Throughout this paper, we use the Landau notation $o(.)$, $O(.)$, $\Theta(.)$, \ldots  with respect to $n\to \infty$, see for example~\cite{CormenAlgorithms}.}
Interestingly, for any other mutation rate $c/n$, where $c$ is a constant, Witt proved a strictly larger runtime of $(1\pm o(1))e^c/c \cdot n\ln n$. This runtime is worse by roughly a factor of $1/c$ if $c<1$, and it becomes exponentially worse as $c>1$ grows. Thus, finding the optimal mutation rate may not only be difficult but also paramount.

Crucially, even for a simple function like \onemax \footnote{The \onemax function assigns to a bit string $x$ the number of $1$ bits in the string.}, the optimal mutation rate $1/n$ can only be used if the problem size $n$ is known. However, consider the following \emph{hidden subset problem}: the search space is $\{0,1\}^N$, but only a small subset of $n \ll N$ positions are fitness-relevant. We call this hidden set the \emph{support} of the fitness function, and we study fitness functions that depend linearly on the supporting bits. In this case, since $n$ is unknown, the optimal mutation rate is also unknown. This problem was proposed by Cathabard, Lehre, and Yao~\cite{cathabard2011non} and has been studied by Doerr, Doerr, and K\"otzing~\cite{doerr2015solving,doerr2017unknown} in the case of \onemax and \leadingones \footnote{The \leadingones function assigns to a bit string $x$ the number of consecutive ones in the beginning of the string. } instead of linear functions. 

Situations in which the fitness is a function of a small hidden subset of parameters occurs naturally in many practical applications, particularly in the context of big data. For example, complex models like a biospheric model or a neural network may come with an immense number of parameters, and the choice of parameters (which is feasible with sufficient data) often leads to high-dimensional optimization problems. However, it often turns out in hindsight that only a small subset of parameters are relevant, which is exactly the situation captured by the hidden subset problem.

In the aforementioned work~\cite{cathabard2011non,doerr2015solving,doerr2017unknown}, the problems were analyzed for a \emph{static} choice of mutation rates (cf.~below). However, when faced with unknown problem characteristics, it is natural to consider more dynamic parameter handling, either scheduled or adaptive ones. In \cite{doerr2017unknown} it was speculated that dynamic parameter handling could improve the runtime compared to the static setup. In this paper, we quantify the gain or loss of either method. We restrict ourselves to mutation-based $(1+1)$ EAs with standard bit mutation\footnote{I.e., each offspring is generated by flipping each bit with the same probability, but this probability may vary from round to round.}, and we distinguish three different types of parameter handling.
\begin{enumerate}
\item In the \emph{static} setup, a probability distribution $\mathcal D$ is fixed before the algorithm starts, and in each round the mutation rate is drawn from $\mathcal D$.\footnote{In the classification of~\cite{eiben1999parameter}, this is classified as \emph{Parameter Tuning}.}
\item In the \emph{scheduled} setup, a sequence $\mathcal D_t$ of probability distributions is fixed before the algorithm starts. Then, in the $t$-th round of the algorithm the mutation rate is drawn from $\mathcal D_t$.\footnote{This is also known as \emph{Deterministic Parameter Control}~\cite{eiben1999parameter}. As pointed out in~\cite{eiben1999parameter}, this term may cause confusion as the mutation rate is not necessarily deterministic.} 
\item In the \emph{adaptive} setup, the mutation rate at time $t$ may be chosen depending on the history of the run up to time $t-1$.
\end{enumerate}

 \subsection{Previous work and our contribution}\label{sec:results}

As mentioned before, Witt has shown in \cite{witt2013tight} that for known $n$, the optimal mutation rate for any linear function is $1/n$, yielding a runtime of ~$(1\pm o(1))en\ln n$. Strictly speaking, Witt only considered static mutation rates.
 However, his proof is based on a drift argument, and he shows that for a suitable potential function, the drift towards the optimum is strongest for mutation rate $1/n$. Thus, his proof also shows that no \emph{adaptive} policy for the mutation rate can beat the runtime of~$(1\pm o(1))en\ln n$.
  Therefore, in our more difficult setting where $1/n$ is unknown, the bound $(1\pm o(1))en\ln n$ is also a lower bound on the runtime with any parameter handling policy. The question is thus: how much do we lose compared to this lower bound, depending on the parameter handling.\smallskip

\noindent \textbf{Static Mutation Rate.} The static setup has been studied (for \onemax and the non-linear \leadingones function) in~\cite{doerr2015solving,doerr2017unknown}. For \onemax, it turned out that even with the best static setup, the runtime is asymptotically slower if $n$ is unknown.
 More precisely, for any static setup  the runtime is at least $\Omega(n \ln^2 n)$~\cite{doerr2017unknown}, and  this bound is tight up to $\ln \ln n$ factors.\footnote{Actually, the statement is much more beautiful, and they do have matching upper and lower bounds, see the section on Initial Segment Uncertainty for details.}
  Since \onemax is the easiest linear function by~\cite{witt2012optimizing}\footnote{For mutation rates at most $1/2$, see Section~\ref{sec:onemax_easiest}. It may be further seen that larger mutation rates are  detrimental for large $n$.
  }, the lower bound holds for \emph{every} linear function.

\noindent \textbf{Scheduled Mutation Rate.} For the scheduled setup, we show that there is an asymptotic improvement of the runtime over the runtime in the static setup. Moreover, the runtime is only by the  factor $\beta/e \approx 1.307$ larger than in the case where $n$ is known. More precisely, we show that the scheduling policy $\mathcal D^{\text{opt}}_t$ that sets the mutation rate in the $t$-th step deterministically to  $\alpha \ln (t)/t$ for $\alpha \approx 1.545$  leads whp\footnote{\emph{With high probability} denotes with probability $1- o(1)$.} to a runtime of $(1\pm o(1))\beta  n\ln n$ for every linear function with support of size $n$.
This policy is optimal, that is, for every other schedule\footnote{Strictly speaking: every schedule that deviates from $\mathcal D^{\text{opt}}_t$ by at least a constant factor for a significant density of $t$'s, see Remark \ref{remark:uniqueness} for more details.
}, deterministic or randomized,  there are infinitely many $n$ such that the runtime on every linear function with support of size $n$ is whp at least $(1\pm o(1))\beta'  en\ln n$ for some $\beta' > \beta$.

\noindent \textbf{Adaptive Mutation Rate.} Finally, we show that there is no significant price for the unknown $n$ if adaptive schedules are used: there is an adaptive scheduling scheme that achieves whp runtime~$(1\pm o(1))en\ln n$, thus matching the lower bound from the setting with known $n$.\smallskip

There are two ways to interpret the results. Firstly, we may define the \emph{black-box complexity} (BBC) of a function with respect to unbiased $(1+1)$ EAs with the respective updating scheme as the best runtime achievable by algorithms of this kind. In this sense, the result in~\cite{doerr2017unknown} says that the BBC of linear functions for static mutation rate is $\Omega(n\ln^2 n)$, while we show that the BBC for scheduled and adaptive mutation rates is $(1\pm o(1))\beta n\ln n$ and $(1\pm o(1))en\ln n$, respectively.

Secondly, we may consider this result as an analogue to the \emph{price of anarchy}~\cite{koutsoupias1999worst,roughgarden2002bad} in game theory. We may define the \emph{price of non-adaptiveness} of a family $\mathcal F = (f_n)_{n\geq 1}$ of functions, where $f_n$ has support of size $n$, to be
\begin{align*}
\text{PoNA}(\mathcal F) := \limsup_{n\to \infty}\frac{\text{runtime of best scheduled algorithm on $f_n$}}{\text{runtime of best adaptive algorithm on $f_n$}}.
\end{align*}
Then we show in this paper that for every family $\mathcal F = (f_n)$ of linear functions, we have PoNA$(\mathcal F) =\beta/e \approx 1.307$. Note that this definition also makes the somewhat ambiguous concept of ``best'' algorithm precise. For the adaptive case, there is a single algorithm which achieves, up to lower order terms, for all $n$ simultaneously the optimal runtime, so it is clear that this algorithm is best. For the scheduled setup, this is not the case, so we define the ``best'' algorithm as the algorithm which minimizes the PoNA.

\paragraph{Initial Segment Uncertainty}~\label{sec:initial}
Doerr, Doerr, and K\"otzing showed in~\cite{doerr2017unknown} that for \leadingones there is an intimate connection between the (static) hidden-subset problem (HSP) considered in this paper, and the following problem with \emph{initial segment uncertainty} (ISU). The support is an initial segment $\{1,\ldots,n\}$ of unknown length $n$, and for each bit $i$ the algorithm may choose a probability $p_i$. In each round the offspring is generated by flipping the $i$-th bit with probability $p_i$. This ISU variant was historically the first to be studied and  was motivated in~\cite{cathabard2011non} by the study of finite state machines~\cite{lehre2014runtime}. In~\cite{doerr2017unknown} it was conjectured that there is also a connection between the ISU model and the HSP for other problems than \leadingones, specifically for \onemax.

It was proved in~\cite{doerr2017unknown} that for every monotonically decreasing, \emph{summable}\footnote{A sequence $(d_i)_{i\in \mathbb{N}}$ is \emph{summable} if $\sum_{i \in \mathbb{N}} |d_i| < \infty$, and it is \emph{non-summable} otherwise. For details on (non)-summable sequence see the comprehensive exposition in~\cite{doerr2017unknown}.} sequence $(d_i)_{i\geq 1}$ of positive reals  there is an algorithm in the ISU model with runtime $O(\ln (n)/d_n)$ on \onemax. As an open problem, the authors asked for matching lower bounds. In this paper we provide such bounds, in the following sense. For every non-summable monotonically decreasing sequence $(d_i)_{i\geq 1}$ of positive reals there is a constant $c>0$ such that every algorithm in the ISU model has runtime at least $c \ln (n)/d_n$ on \onemax. Interestingly, both the upper and lower bound in the ISU model match the upper and lower bound in the HSP, which were derived in~\cite{doerr2017unknown}. Although this result is less tight than the connection for \leadingones (where the distributions of runtimes exactly coincide with each other), this gives further indication for a fundamental connection between the ISU model and the HSP.

\section{Notation, Algorithmic Setup, Tools}

\subsection{Models of uncertainty}
We consider a large search space $\{0,1\}^N$. In contrast, the function $f$ to be optimized\footnote{Throughout the paper, we assume a minimization problem.} only depends on a small subspace. More precisely, there is a set  $I \subset \{1,\ldots, N\}$ and a function $\tilde{f}$ on $\{0,1\}^n$, where $n \coloneqq |I|$, such that $f(x)=\tilde{f}(x_{|I})$ for all $x \in \{0,1\}^N$. Here, $x_{|I}$ denotes the bit string consisting of the bits $x_i$ of $x$ with $i\in I$. The dimension $N$ of the search space does not affect the results in this paper. Therefore, we assume that the search space is $\{0,1\}^{\mathbb{N}}$. We call the positions $I \subset \mathbb{N}$ that $f$ depends on the \emph{relevant bits} (or \emph{support}) of $f$. To ease notation, we also use the symbol $f$ for $\tilde f$.

We consider two models of uncertainty.
In the \emph{unrestricted uncertainty} model, the set of relevant bits $I$ and the number of relevant bits $n$ are unknown.
In the \emph{initial segment problem}, the set $I$ is the initial segment $[n]\coloneqq \{1, \ldots, n \}$, and the number of relevant bits $n$ is unknown.



\subsection{Algorithmic setup}
The $(1+1)$ EA has the goal of finding a search point that minimizes a function $f$. First, it draws u.a.r. a search point $x  \in \{0,1\}^{\mathbb{N}}$. Then,   an \emph{offspring} $y $ of the current search point $x$ is created in every round by flipping each bit independently with  probability $p$. The parameter $p$ is usually called \emph{mutation strength}, \emph{mutation rate} or \emph{mutation parameter}, in this paper we stick with \emph{mutation rate}.  If $f(y)\leq f(x)$, then  the search point $x$ is replaced by $y$, otherwise $x$ stays the current search point; we say that $y$ was \emph{accepted} or \emph{rejected}, respectively.\footnote{In this work, we only consider elitist algorithms, that is, the algorithm accepts the offspring $y$ of $x$ if and only if $f(y)\leq f(x)$. This is a natural choice since the drift with respect to the \textsc{OneMax} function is maximized by elitist algorithms. Thus, the runtime on \textsc{OneMax} cannot be improved by non-elitist algorithms. We show that the runtime on linear functions matches the one for \textsc{OneMax} in the scheduled and adaptive setup. Therefore, non-elitist algorithms cannot improve the runtime on linear functions in these settings either.}

For compact descriptions of Algorithms \ref{alg:1+1_EA} to \ref{alg:adaptiveEA_upper_bound}, we define the operator $\textsc{Mutate}(x, p)$, which generates a mutation $y$ of $x$ by flipping each bit independently with probability $p$ (if $p$ is a sequence, then each bit $x_i$ is flipped with probability $p_i$). 
In the static setup (see Algorithm \ref{alg:1+1_EA}), for each time $t$ the mutation rate $p_t$ is drawn from a fixed probability distribution $\mathcal{D}$ over the interval $[0,1]$, which is identical for all $t$. 
In the scheduled setup, a sequence of such distributions $\mathcal{D}_t$ is fixed in advance, and the mutation rates $p_t$ at time $t$ is drawn from~$\mathcal{D}_t$, see Algorithm~\ref{alg:tdEA}.
In the adaptive setup, the distributions $\mathcal{D}_t$ can depend on the history of the process. However, we assume that the algorithm is comparison-based, i.e., 
 whenever the fitness values of the search point $x$ and offspring $y$ are compared, the algorithm receives from an oracle the information whether the offspring is accepted or not. Then, it may choose $\mathcal{D}_t$ depending on all bits received from the oracle before time $t$. We also note that  all considered versions of the $(1+1)$ EA are unbiased, i.e., the mutation operator is invariant under the automorphisms of the search space. For more background on comparison-based and unbiased algorithms, see~\cite{doerr2017onemax}.

Finally, for the ISU model, we consider  position dependent mutation rates $\vec{p}$, where an offspring $y$ of $x$ is created by flipping the bit  at position $i$ with probability $p_i$, see Algorithm \ref{alg:position_dependent}. The $p_i$ are fixed over time.

As in previous work, we consider the number of fitness evaluations as the  complexity measure. We define the \emph{runtime} (or \emph{optimization time}) as the number of $f$-evaluations until the search point with minimal  $f$-value is reached.

\subsection{Basic Notation}
We denote sequences $(p_{t})_{t \in \mathbb{N}}$ by $\vec{p}$. In this paper, we consider the \textsc{OneMax} function and the class of linear functions $f$ to be minimized. The \textsc{OneMax} function with support $I$ is defined by $f(x)= \sum_{i\in I} x_i$ for any $x\in \{0,1\}^{\mathbb{N}}$. A \emph{linear function} $f$ with support $I$ depends linearly on the bits in $I$, that is, $f(x)= \sum_{i\in I} w_i x_i$ for some $w_i \in \mathbb{R}$.
 Since $f(x)$ can be written as $ \sum_{i:w_i>0}w_ix_i +\sum_{i:w_i < 0} |w_i| (1- x_i) -  \sum_{i:w_i < 0} |w_i|$, without loss of generality we can assume that $w_i>0$ for all $i \in I$. Therefore,  our target search point  is the all $0$ string from now on.

 Further, we denote by $x^t$ the search point at time $t$.
We say bit $i$ \emph{flips} at time $t$ if $y_i$ is set to $1-x_i^t$ in the mutation step of the algorithm. We say that there is a \emph{single bit flip} in round $t$ if exactly one relevant bit flips, and there is a \emph{multi bit flip} if at least two relevant bits flip. Further, we say bit $i$ \emph{changes} at time $t$ if $x^t_i \neq x^{t-1}_i$, which happens if bit $i$  flips  at time $t$ and the offspring $y$ is accepted.
We say that bit $i$ is \emph{optimized} at time $t$ if $x_i^t=0$.

\begin{algorithm2e}%
	\textbf{Initialization:} Sample $x \in \{0,1\}^{\mathbb{N}}$ uniformly at random\;
	\textbf{Optimization:}
	\For{$t=1,2,3,\ldots$}{
		$p_t \sim \mathcal{D}$\,; \\
		$y \leftarrow \textsc{Mutate}(x, p_t)$\;
		\lIf{$f(y)\leq f(x)$}{$x \assign y$\,; //selection step}
	}
	\caption{ The static $(1+1)$ EA with mutation rate distribution $\mathcal{D}$  minimizing a pseudo-Boolean function  $f\colon\{0,1\}^{\mathbb{N}}\to\mathbb{R}$ }
	\label{alg:1+1_EA}
\end{algorithm2e}

\begin{algorithm2e}%
	\textbf{Initialization:} Sample $x \in \{0,1\}^{\mathbb{N}}$ uniformly at random\;
	\textbf{Optimization:}
	\For{$t=1,2,3,\ldots$}{
		$p_t \sim \mathcal{D}_t$\,; \\
		$y \leftarrow  \textsc{Mutate}(x, p_t)$\;
		\lIf{$f(y)\leq f(x)$}{$x \assign y$\,; //selection step}
	}
	\caption{The scheduled $(1+1)$ EA with mutation rates drawn from a sequence of probability distributions $(\mathcal{D}_t)_{t\in \mathbb{N}}$  minimizing a pseudo-Boolean function  $f\colon\{0,1\}^{\mathbb{N}}\to\mathbb{R}$ with a finite number of relevant bits $n$.}
	\label{alg:tdEA}
\end{algorithm2e}

\begin{algorithm2e}%
	\textbf{Initialization:} Sample $x \in \{0,1\}^{\mathbb{N}}$ uniformly at random\;
	\textbf{Optimization:}
	\For{$t=1,2,3,\ldots$}{
		$y \leftarrow  \textsc{Mutate}(x, \vec{p})$\;
		\lIf{$f(y)\leq f(x)$}{$x \assign y$\,; //selection step}
	}
	\caption{The $(1+1)$ EA with position  dependent mutation rates $\vec{p}$  minimizing a pseudo-Boolean function  $f\colon\{0,1\}^{\mathbb{N}}\to\mathbb{R}$ that depends only on the initial segment of length $n$.}
	\label{alg:position_dependent}
\end{algorithm2e}

\subsection{ONEMAX is the easiest linear function} \label{sec:onemax_easiest}
Let $A$ and $B$ be two random variables that take values in $\mathbb{N}$.  $A$ \emph{stochastically dominates} $B$ if $\Pr(A \geq i) \geq \Pr(B \geq i) $ holds for all $i \in \mathbb{N}$. Witt showed  the following theorem for mutation based EAs with arbitrary population size in \cite{witt2013tight}. Here, we state it slightly less general.
\begin{theorem}(Theorem 6.2 in \cite{witt2013tight})
Consider the static $(1+1)$ EA A with mutation rate $p\leq 1/2$.   Then, the optimization time of algorithm $A$ on any function with a unique global optimum stochastically dominates  the optimization time of algorithm $A$ on \textsc{OneMax}.
\end{theorem}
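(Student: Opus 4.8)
The plan is to prove this by a coupling argument. Since the $(1+1)$ EA is unbiased and $f$ has a unique global optimum, I may first apply the automorphism (a coordinate-wise XOR) mapping this optimum to the all-zeros string, so that afterwards both $f$ and \onemax are minimized at $\vec 0$; moreover the bits outside the support $I$ never influence a selection step nor the event ``a minimizer has been found'', so I may assume the search space is $\{0,1\}^n$ with $n=|I|$. Write $|x|$ for the Hamming weight of $x\in\{0,1\}^n$: this is exactly the \onemax value, and for $f$ it is the Hamming distance to the optimum, so for either function the optimization time is the first hitting time of weight $0$. The goal is to couple a run of $A$ on $f$ with a run of $A$ on \onemax so that, at every step, the weight of the former is at least the weight of the latter; then the first time weight $0$ is reached is pointwise no later for the \onemax run than for the $f$ run, which is exactly the claimed stochastic domination of optimization times.

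The coupling is built from two one-step facts. \emph{Fact 1 (easy).} From an arbitrary point $x$ of weight $a$, run one step of $A$ on $f$ with a mutation mask $M$, and couple a step of $A$ on \onemax from an arbitrary weight-$a$ point so that the masks flip equally many one-bits and equally many zero-bits in the two points. If $s_1$ one-bits and $s_0$ zero-bits flip, the \onemax step yields weight $a-\max(0,s_1-s_0)$, whereas the $f$ step yields weight $a-s_1+s_0$ if the offspring is accepted and $a$ otherwise; in either case this is at least $a-\max(0,s_1-s_0)$, so the $f$-weight never undershoots the \onemax-weight. Hence, started from equal weights, one step on $f$ stochastically dominates one step on \onemax. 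This would already finish the proof if the weights of the two runs stayed equal; but after a couple of steps the \onemax run is typically strictly ahead, so I must also compare a step of $f$ from weight $a$ with a step of \onemax from some smaller weight $b$.

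\emph{Fact 2 (the crux, and where $p\le 1/2$ enters).} The weight process of the $(1+1)$ EA on \onemax is stochastically monotone in its starting weight: for $w\le w'$, one step from weight $w'$ yields a stochastically larger weight than one step from weight $w$. By transitivity it suffices to treat $w'=w+1$, so consider two configurations $c,c'$ of weights $w$ and $w+1$ that differ only in one bit $j$ ($0$ in $c$, $1$ in $c'$). The naive coupling that flips the $n-1$ common bits with shared coins and flips $j$ identically in both fails: flipping $j$ is a harmless up-move for $c$ but a bonus improvement for $c'$, so $c'$ can overshoot $c$. The remedy is to couple the flip of the single bit $j$ \emph{negatively} across the two configurations, so that it is never flipped simultaneously in both; this is a valid coupling of two $\mathrm{Bernoulli}(p)$ variables precisely because $2p\le 1$. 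Flipping all other bits with shared coins, a short case analysis over the four joint outcomes of the two $j$-indicators shows that the resulting weight of $c'$ is always at least that of $c$, which proves the monotonicity.

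Finally I would assemble the pieces: if the $f$ run sits at weight $a$ and the \onemax run at weight $b\le a$, then the next $f$-weight stochastically dominates (Fact 1) the next weight of a \onemax step from weight $a$, which in turn stochastically dominates (Fact 2, applied $a-b$ times) the next weight of a \onemax step from weight $b$; so by the coupling characterization of stochastic domination there is a coupling of the two transitions preserving the invariant ``weight of $f$ run $\ge$ weight of \onemax run''. Since this holds for all states with $|x|\ge|z|$, these one-step couplings compose, in the standard way, into a coupling of the entire trajectories maintaining the invariant, starting from the common uniformly random initial point, and the theorem follows. I expect Fact 2 to be the main obstacle: the monotonicity is simply false for $p>1/2$, and even for $p\le 1/2$ the obvious coupling does not work, so one has to find the negatively correlated coupling above (equivalently, prove directly the tail inequality $\Pr(D_w\ge k)\ge \Pr(D_{w+1}\ge k+1)$ for the difference $D_w$ of the two binomials counting down- and up-flips).
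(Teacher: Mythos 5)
Your proposal is correct, and it is essentially the argument behind this result: the paper does not prove the theorem itself but imports it from Witt (Theorem 6.2 in \cite{witt2013tight}), whose proof likewise proceeds by induction over time using a one-step domination of the Hamming distance together with the stochastic monotonicity of the \onemax weight process, the latter being exactly where the hypothesis $p\leq 1/2$ is needed (your ``Fact 2'' with the negatively correlated coupling of the two $\mathrm{Bernoulli}(p)$ flips). So you have faithfully reconstructed the cited proof rather than found a different route.
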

 In \cite{witt2013tight}, Witt proved this theorem by induction over time $t$. The proof requires that $p\leq 1/2$ for every time step $t$, but does not require that $p$ is fixed.
Thus, the theorem can be extended to  the  setup where the mutation rates $\vec{p}$ are scheduled or adaptively chosen.
Therefore, Witt's proof also implies the following theorem.
\begin{theorem}(Adaptation of Theorem 6.2 in \cite{witt2013tight})\label{thm:onemax_easiest_adaptive}
Let algorithm $A$ be a $(1+1)$ EA with scheduled or adaptively  chosen mutation rates
  $\vec{p}$  satisfying $p_t \leq 1/2$. Then, the optimization time of algorithm A on any function with unique global optimum  stochastically dominates the optimization time of algorithm A on \textsc{OneMax}.

\end{theorem}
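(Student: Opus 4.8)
The plan is to replay Witt's proof of Theorem~6.2 in~\cite{witt2013tight}, which is an inductive coupling argument, and to locate exactly where the hypothesis $p\leq 1/2$ is used so as to check that it carries over. Since all versions of the $(1+1)$ EA considered here are unbiased, I may assume (as was done for linear functions earlier) that the unique global optimum of $f$ is the all-zeros string, so that on both $f$ and \OneMax the target is $\mathbf 0$ and the relevant statistic is $\nu(x)$, the number of one-bits of the current search point $x$ among the relevant positions. The goal is to build a coupling of the run of $A$ on $f$ with the run of $A$ on \OneMax — both with a support of the same size $n$, and with coupled initial search points — such that $\nu(z^t)\leq \nu(x^t)$ holds at every time $t$, where $x^t$ and $z^t$ denote the current search points of the $f$-run and of the \OneMax-run. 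Once this is done, in every realization of the coupling the \OneMax-run reaches $\mathbf 0$ no later than the $f$-run, which is exactly the claimed stochastic domination of optimization times.

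I would maintain the stronger invariant that, after relabelling the support of the \OneMax-run — harmless, since \OneMax is invariant under permutations of its support and therefore so is the \OneMax-run — the one-bits of $z^t$ form a subset of the one-bits of $x^t$. In the mutation step at round $t$, couple the two offspring as follows: on positions where $z^t$ and $x^t$ agree, flip the bit in both runs using the same coin; on the ``excess'' positions, those that are one in $x^t$ and zero in $z^t$, couple the two coins so that the event ``the $x^t$-bit flips to $0$'' and the event ``the $z^t$-bit flips to $1$'' are disjoint. This is possible precisely because each of these events has probability $p_t\leq 1/2$, so their probabilities sum to at most $1$; this single step is the only place where $p_t\leq 1/2$ is used, and it is used entirely within round $t$. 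After the mutation the \OneMax-offspring $w$ is coordinatewise dominated by the $f$-offspring $y$, so $\nu(w)\leq\nu(y)$. Now check the selection step in four cases: if both runs accept, then $z^{t+1}=w$ is still dominated by $y=x^{t+1}$; if both reject, the invariant is inherited; if $f$ rejects but \OneMax accepts, then $\nu(z^{t+1})=\nu(w)\leq\nu(z^t)\leq\nu(x^t)=\nu(x^{t+1})$; and if $f$ accepts but \OneMax rejects, then $\nu(z^{t+1})=\nu(z^t)<\nu(w)\leq\nu(y)=\nu(x^{t+1})$. In the last two cases coordinatewise containment is re-established by a relabelling, which is possible since $z^{t+1}$ has no more one-bits than $x^{t+1}$. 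Nothing in this argument uses that the mutation rate is the same across rounds, so this already proves the scheduled case verbatim, with $p$ replaced by a schedule $(p_t)$ with $p_t\leq 1/2$.

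For the adaptive case the new feature is that the two runs, receiving possibly different accept/reject answers from the oracle, may at round $t$ draw their rates from different distributions $\mathcal D_t$ and hence mutate with different values $p_t^{f}$ and $p_t^{\mathrm{OM}}$, both $\leq 1/2$. The plan is the same induction, now conditioning at each step on the full coupled history; the per-round argument above compares only the two current search points and the two current rates, so one re-does the mutation coupling with $p_t^{f},p_t^{\mathrm{OM}}\leq 1/2$ in place of a single $p_t$, checks that one can still couple the two offspring so as to obtain $\nu(w)\leq\nu(y)$ from $\nu(z^t)\leq\nu(x^t)$, and then the identical four-case analysis of the selection step applies. I expect the robustness of this mutation coupling to differing rates — making the disjoint-coin construction on the agreeing and the excess positions go through when $p_t^{f}\neq p_t^{\mathrm{OM}}$ — to be the main technical obstacle; once that step is settled, the rest is the same bookkeeping as in the scheduled case.
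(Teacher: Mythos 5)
Your treatment of the scheduled case is correct and is in substance what the paper does: the paper merely observes that Witt's induction over $t$ uses $p\le 1/2$ only within a single round, so it survives replacing $p$ by a pre-specified sequence $(p_t)$, and your explicit coupling (same coin on agreeing positions, disjoint coins on the excess positions, the four-case check of the selection step, relabelling via the symmetry of \OneMax) is a clean concrete instantiation of exactly that induction.

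The adaptive case is a genuine gap, and the step you defer as ``the main technical obstacle'' is not bookkeeping --- it is where your construction breaks. With two rates $p_t^{f}\neq p_t^{\mathrm{OM}}$ you can no longer use the same coin on agreeing positions, and a per-position coupling that excludes the bad event $w_i=1\wedge y_i=0$ must make ``the $f$-bit flips'' and ``the \OneMax-bit stays'' disjoint on agreeing $1$-positions (forcing $p_t^{f}\le p_t^{\mathrm{OM}}$) and ``the $f$-bit stays'' and ``the \OneMax-bit flips'' disjoint on agreeing $0$-positions (forcing $p_t^{\mathrm{OM}}\le p_t^{f}$); together these force equality of the rates, so coordinatewise domination of the offspring cannot be maintained position by position. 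More fundamentally, the rates are functions of the accept/reject histories, which diverge under any such coupling (your own case analysis contains rounds where one run accepts and the other rejects) and which already differ in distribution: with $p_1=1/2$ the first offspring is accepted on \OneMax with probability $\tfrac12+\Theta(n^{-1/2})$ because ties are frequent, but on an injective linear function only with probability $\tfrac12+2^{-n-1}$. From the first divergence onward the two runs are driven by different, history-dependent rate sequences, and no round-by-round coupling of the mutation operator alone can restore the invariant; an adaptive rule that reacts badly to a round-$1$ acceptance makes this danger concrete. Any complete argument for the adaptive statement must therefore control the feedback histories themselves, not just the mutations. (The paper also dispatches this case in one sentence by appeal to Witt's induction, so you are not overlooking an argument supplied there --- but the route you propose demonstrably does not close it, and you should not present the adaptive case as following by ``the same bookkeeping.'')
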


\subsection{Tools}
\label{sec:tools}
In the proofs throughout  this paper, we regularly use the following lemmas.
First, we state how sums can be approximated by integrals, see for example Chapter~10 in \cite{apostol2007calculus}.
\begin{lemma}[Integral test]\label{lemma:int_approx_sum}
Let $f: [1, \infty] \rightarrow \mathbb{R}_{\geq 0}$ be a monotone  function. Then, for any integer $n\geq 1$, it holds 
\begin{equation*}
\int_{1}^n f(t)\, \mathrm{d}t \leq \sum_{t=1}^n f(t) \leq \int_{1}^n f(t)\, \mathrm{d}t  + \max\{f(n), f(1)\}.
\end{equation*}
\end{lemma}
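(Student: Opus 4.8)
The plan is to prove the two inequalities by comparing, on each unit interval $[k,k+1]$, the integral of $f$ against the endpoint values $f(k)$ and $f(k+1)$, exploiting monotonicity, and then summing over $k=1,\dots,n-1$. Since a monotone function on a compact interval is Riemann integrable, every integral below is well defined, so there is no integrability issue to address. The argument splits into the two symmetric cases according to whether $f$ is non-increasing or non-decreasing, and the case $n=1$ is immediate, as then the sum is $f(1)$ and the integral is $0$, so the claimed chain $0\le f(1)\le \max\{f(1),f(1)\}$ holds trivially.

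First I would treat the case that $f$ is non-increasing. For every integer $k\ge 1$ and every $t\in[k,k+1]$ we have $f(k+1)\le f(t)\le f(k)$, hence $f(k+1)\le \int_k^{k+1} f(t)\,\mathrm{d}t\le f(k)$. Summing the right-hand inequality over $k=1,\dots,n-1$ gives $\int_1^n f(t)\,\mathrm{d}t\le \sum_{k=1}^{n-1} f(k)\le \sum_{t=1}^n f(t)$, which is the lower bound. Summing the left-hand inequality over $k=1,\dots,n-1$ gives $\sum_{k=2}^n f(k)\le \int_1^n f(t)\,\mathrm{d}t$; adding $f(1)$ to both sides yields $\sum_{t=1}^n f(t)\le f(1)+\int_1^n f(t)\,\mathrm{d}t\le \int_1^n f(t)\,\mathrm{d}t+\max\{f(1),f(n)\}$, which is the upper bound. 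The case that $f$ is non-decreasing is completely analogous: now $f(k)\le \int_k^{k+1} f(t)\,\mathrm{d}t\le f(k+1)$ for every $k\ge 1$. Summing the right-hand inequality gives $\int_1^n f(t)\,\mathrm{d}t\le \sum_{k=2}^n f(k)\le \sum_{t=1}^n f(t)$, the lower bound; summing the left-hand inequality gives $\sum_{k=1}^{n-1} f(k)\le \int_1^n f(t)\,\mathrm{d}t$, and adding $f(n)$ yields $\sum_{t=1}^n f(t)\le f(n)+\int_1^n f(t)\,\mathrm{d}t\le \int_1^n f(t)\,\mathrm{d}t+\max\{f(1),f(n)\}$.

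This is a standard textbook fact, so there is no real obstacle; the only mild care needed is to keep the direction of the monotonicity inequalities straight in each case, and to observe that the single leftover endpoint term (which is $f(1)$ when $f$ is non-increasing and $f(n)$ when $f$ is non-decreasing) is uniformly bounded by $\max\{f(1),f(n)\}$, which is why that symmetric form appears on the right-hand side.
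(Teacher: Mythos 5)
Your proof is correct and is the standard unit-interval comparison argument; the paper itself gives no proof of this lemma, merely citing Apostol, and your argument is exactly the textbook one that reference supplies. The only implicit step worth noting is that discarding the leftover endpoint term in the lower bound (e.g.\ passing from $\sum_{k=1}^{n-1}f(k)$ to $\sum_{t=1}^{n}f(t)$) uses the hypothesis $f\geq 0$, which you have available.
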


Next, the following bounds on $1-x$ for small $x$ turn out to be very useful. They follow easily from Taylor expansion.

\begin{lemma}\label{lemma:e_approx}
Let  $0 \leq x \leq \frac{1}{2}$, then it holds  that $e^{-x}\geq 1-x \geq e^{-x-x^2}\geq e^{-2x}$.
\end{lemma}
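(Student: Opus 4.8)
The plan is to prove the chain by checking the three individual inequalities $e^{-x}\geq 1-x$, $1-x\geq e^{-x-x^2}$, and $e^{-x-x^2}\geq e^{-2x}$ separately, each of which reduces to an elementary one-variable estimate.

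The first inequality $e^{-x}\geq 1-x$ is just the standard bound $e^{y}\geq 1+y$ (valid for every real $y$) applied with $y=-x$; it follows, for instance, from convexity of $t\mapsto e^{t}$ together with its tangent line at $t=0$, or equivalently by truncating the Taylor series of $e^{-x}$. No restriction on $x$ is needed for this one. The third inequality $e^{-x-x^2}\geq e^{-2x}$ is, by monotonicity of $t\mapsto e^{t}$, equivalent to $-x-x^2\geq -2x$, i.e.\ to $x(1-x)\geq 0$, which holds for all $x\in[0,1]$ and in particular for $0\leq x\leq\frac12$.

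The central inequality is $1-x\geq e^{-x-x^2}$. Since both sides are positive for $x\in[0,\frac12]$, it suffices (after taking logarithms) to show that $h(x):=\ln(1-x)+x+x^2\geq 0$ on $[0,\frac12]$. One computes $h(0)=0$ and $h'(x)=-\tfrac{1}{1-x}+1+2x=\tfrac{x(1-2x)}{1-x}$, which is nonnegative exactly when $0\leq x\leq\frac12$. Hence $h$ is nondecreasing on $[0,\frac12]$, so $h(x)\geq h(0)=0$ there. (Alternatively, expand $-\ln(1-x)=\sum_{k\geq 1}x^k/k$ and bound $\sum_{k\geq 2}x^k/k\leq\tfrac{x^2}{2}\sum_{j\geq 0}x^{j}=\tfrac{x^2}{2(1-x)}\leq x^2$ using $x\leq\frac12$.) Combining the three inequalities yields the claimed chain.

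There is no real obstacle here, as this is a routine calculus lemma; the only point needing (minor) care is the sign analysis of $h'$ on $[0,\frac12]$, which is precisely where the hypothesis $x\leq\frac12$ enters.
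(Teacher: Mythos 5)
Your proof is correct, and it matches the paper's (unstated) intent: the paper merely remarks that the lemma ``follows easily from Taylor expansion,'' and your argument --- the standard tangent-line bound for $e^{-x}\geq 1-x$, the sign analysis of $h(x)=\ln(1-x)+x+x^2$ (or equivalently the Taylor-series bound $\sum_{k\geq 2}x^k/k\leq x^2$ for $x\leq\tfrac12$), and the trivial exponent comparison for the last step --- supplies exactly those details.
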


Further, in order to prove concentration of random variables we often use the method of bounded differences (see Theorem~5.3 in \cite{dubhashi2009concentration}), which is often referred to as Azuma's inequality.
\begin{lemma}[Method of Bounded Differences]
\label{lem:azuma}
Let $d_1, \dots, d_n$ be a sequence of reals and let $f:=f (x_1 , \ldots , x_n )$ be a function that satisfies for all $1\le i\le n$ 
\begin{align*}
|f(a)-f(a')| &\leq d_i
\end{align*}
whenever $a$ and $a'$ differ in just the $i$-th coordinate. Let $X_1 , \ldots , X_n$ be independent random variables, let $X=f(X_1,\ldots, X_n)$ and let $S= \sum_{i=1}^n d_i^2$, then
\begin{eqnarray*}
\Pr[X > \mathbb{E}[X]+t] &\leq & e^{-\frac{t^2}{2S}} \quad \text{ and } \\
\Pr[X < \mathbb{E}[X]-t] &\leq & e^{-\frac{t^2}{2S}} \ .
\end{eqnarray*}
\end{lemma}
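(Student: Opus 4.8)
The plan is the standard Doob-martingale argument underlying Azuma's inequality. Set $Z_0 := \mathbb{E}[X]$ and, for $1 \le i \le n$, $Z_i := \mathbb{E}[X \mid X_1,\ldots,X_i]$. Since $X = f(X_1,\ldots,X_n)$ is a function of all the $X_j$, we have $Z_n = X$, and $(Z_i)_{0\le i\le n}$ is a martingale with respect to the filtration generated by $X_1,\ldots,X_n$. Telescoping gives $X - \mathbb{E}[X] = \sum_{i=1}^n (Z_i - Z_{i-1})$, so it suffices to control this sum.

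The key step is to show that each increment is bounded: conditioned on $X_1,\ldots,X_{i-1}$, the random variable $Z_i - Z_{i-1}$ lies in an interval of length at most $d_i$ and has conditional mean $0$; in particular $|Z_i - Z_{i-1}| \le d_i$. This is where both hypotheses enter. Because the $X_j$ are independent, one can write $Z_i = h_i(X_1,\ldots,X_i)$ with $h_i(x_1,\ldots,x_i) = \mathbb{E}\bigl[f(x_1,\ldots,x_i,X_{i+1},\ldots,X_n)\bigr]$, and $Z_{i-1}$ is the further average of $h_i$ over $X_i$. Replacing the $i$-th argument of $f$ by another value changes $f$ by at most $d_i$ pointwise, hence changes $h_i$ by at most $d_i$; averaging over $X_i$ then places $Z_{i-1}$ in the range spanned by the admissible values of $Z_i$, which has width at most $d_i$, while the conditional mean being $0$ is just the martingale property.

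Finally, apply the exponential-moment (Chernoff) method. For $\lambda>0$, condition successively from the inside out: by Hoeffding's lemma, a conditionally mean-zero random variable with $|Z_i - Z_{i-1}| \le d_i$ satisfies $\mathbb{E}\bigl[e^{\lambda(Z_i - Z_{i-1})} \bigm| X_1,\ldots,X_{i-1}\bigr] \le e^{\lambda^2 d_i^2/2}$. Peeling off the factors one at a time through the tower of conditional expectations yields $\mathbb{E}\bigl[e^{\lambda(X - \mathbb{E}[X])}\bigr] \le e^{\lambda^2 S/2}$ with $S = \sum_{i=1}^n d_i^2$. Markov's inequality then gives $\Pr[X > \mathbb{E}[X] + t] \le e^{-\lambda t + \lambda^2 S/2}$ for every $\lambda>0$, and the choice $\lambda = t/S$ produces the first bound $e^{-t^2/(2S)}$. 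The lower-tail bound follows by applying the same argument to $-f$, which has the same bounded differences $d_1,\ldots,d_n$.

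The main obstacle is the clean justification of the increment bound $|Z_i - Z_{i-1}| \le d_i$: it is elementary but relies essentially on the independence of the $X_j$ to decouple the conditioning on $X_1,\ldots,X_{i-1}$ from the averaging over the remaining coordinates, and some care is needed to phrase this in measure-theoretic terms when the $X_j$ take values in an arbitrary space rather than a finite one. Everything after that — Hoeffding's lemma and the optimization over $\lambda$ — is routine.
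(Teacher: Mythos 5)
Your proof is correct: the Doob-martingale decomposition, the increment bound $|Z_i - Z_{i-1}| \le d_i$ via independence, Hoeffding's lemma, and optimization over $\lambda = t/S$ together give exactly the stated bound $e^{-t^2/(2S)}$. The paper does not prove this lemma at all --- it is quoted as a tool from Dubhashi and Panconesi (Theorem~5.3 there) --- and your argument is precisely the standard proof underlying that cited result, so there is nothing to compare beyond noting that it checks out.
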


Finally, we state the lower bound multiplicative drift theorem.

\begin{theorem}[Multiplicative drift, lower bound~\cite{doerr2017bounding}]\label{thm:multiplicative-drift}
Let $(X_t)_{t\geq 0}$ be random variables describing a Markov process over a finite state space $S\subset \mathbb{R}_{+}$. Let $\kappa > 0$, $s_{\text{min}} \geq \sqrt{2}\kappa$ and let $T$ be the random variable denoting the earliest point in time $t\geq 0$ such that $X_t\leq s_{min}$. If there exists a positive real $\delta >0$ such that, for all $x > s_{min}$ and $t\geq 0$ with $\Pr[X_t=s]>0$ it holds that 
\begin{enumerate}
\item $|X_t- X_{t+1}| \leq \kappa$, and 
\item $\mathbb{E}[X_t-X_{t+1}| X_t=s ]\leq \delta s$,
\end{enumerate}
then, for alll $s_0 \in S$ with $\Pr[X_0=s_0] > 0$, 
\begin{align*}
\mathbb{E}[T|X_0=s_0]\geq \frac{1+\ln s_0 - \ln s_{min}}{2 \delta + \frac{\kappa^2}{s_{min}^2-\kappa^2}} \ . 
\end{align*}
\end{theorem}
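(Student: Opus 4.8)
The plan is to reduce the multiplicative drift to an \emph{additive} drift statement by composing with a logarithm and then to invoke (or re-establish by an optional-stopping argument) the additive drift lower bound. Concretely, I would set $g(x):=1+\ln x-\ln s_{\min}$ for $x\ge s_{\min}$ and extend $g$ to $(0,s_{\min})$ by the linear interpolation $g(x):=x/s_{\min}$, so that $g$ is nonnegative on $S$, continuous and in fact $C^1$ and concave at the junction $s_{\min}$, and satisfies $g(s_0)=1+\ln s_0-\ln s_{\min}$ while $g(X_T)\le g(s_{\min})=1$. The goal is then to show that $(g(X_t))$ has expected one-step decrease at most $c:=2\delta+\tfrac{\kappa^2}{s_{\min}^2-\kappa^2}$ from every non-target state; given this, $g(X_{t\wedge T})+c\,(t\wedge T)$ is a submartingale, it is bounded because $S$ is finite, and optional stopping yields $g(s_0)\le \mathbb{E}[g(X_T)]+c\,\mathbb{E}[T\mid X_0=s_0]$, which after rearrangement gives the claimed bound (up to the discussion of the additive ``$+1$'' below).

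The technical core is the drift estimate, which is precisely where hypotheses (1), (2) and $s_{\min}\ge\sqrt 2\,\kappa$ are used. Fix a state $s>s_{\min}$, condition on $X_t=s$ throughout, and write $\Delta:=X_t-X_{t+1}$ and $u:=\Delta/s$. Hypothesis (1) gives $|\Delta|\le\kappa$, and $s>s_{\min}\ge\sqrt2\,\kappa$ then gives $|u|\le\kappa/s_{\min}\le 1/\sqrt2<1$; in particular $X_{t+1}=s(1-u)>0$, so the logarithm is defined throughout, and while $X_{t+1}\ge s_{\min}$ one has $g(X_t)-g(X_{t+1})=-\ln(1-u)$. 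Using the elementary Taylor bound $-\ln(1-u)\le u+\tfrac{u^2}{2(1-|u|)}$ (valid for $|u|<1$), taking expectations, bounding the linear term by $\mathbb{E}[\Delta\mid X_t=s]/s\le\delta$ via hypothesis (2), and bounding the quadratic term by $\tfrac{(\kappa/s)^2}{2(1-\kappa/s)}=\tfrac{\kappa^2}{2s(s-\kappa)}\le\tfrac{\kappa^2}{s^2-\kappa^2}\le\tfrac{\kappa^2}{s_{\min}^2-\kappa^2}$ (the last two steps using only $\kappa<s$ and monotonicity of $s\mapsto s^2-\kappa^2$), one gets $\mathbb{E}[g(X_t)-g(X_{t+1})\mid X_t=s]\le\delta+\tfrac{\kappa^2}{s_{\min}^2-\kappa^2}\le c$. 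For steps in which $X_{t+1}$ drops below $s_{\min}$, concavity of $g$ and the fact that the linear piece of $g$ lies above the naive logarithmic extension near $s_{\min}$ show that the actual decrease of $g$ is no larger than in the all-logarithmic case, so the same bound survives.

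With this drift bound, the submartingale argument already yields $\mathbb{E}[T\mid X_0=s_0]\ge (g(s_0)-1)/c=(\ln s_0-\ln s_{\min})/c$. To recover the full numerator $1+\ln s_0-\ln s_{\min}$ I would treat the terminal phase separately: once the process first enters the shell $(s_{\min},s_{\min}+\kappa]$ it has not yet stopped and hence needs at least one further round, and a geometric-type refinement (again using that one round reduces $g$ by at most $c$ in expectation) shows it needs on average at least $1/c$ further rounds, which together with the $(\ln s_0-\ln s_{\min})/c$ rounds needed to reach the shell gives the stated bound. An arguably cleaner route that sidesteps the piecewise potential is to run the whole argument through the variable drift lower bound theorem with drift function $h(x)=\delta x$, in which case the term $\int_{s_{\min}}^{s_0}\mathrm{d}x/(\delta x)=\ln(s_0/s_{\min})/\delta$ and the $\kappa$-dependent correction come out of the master theorem directly.

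The step I expect to be the main obstacle is exactly this last bookkeeping: pinning down the constant in the numerator (the ``$+1$'' does not fall out of the plain logarithmic potential and needs the terminal-phase estimate or the careful accounting of the cited reference), checking that the drift bound is genuinely not spoiled by steps that cross the threshold $s_{\min}$, and handling the asymmetry of $-\ln(1-u)$ in the sign of $u$ when passing from the pointwise Taylor bound to the bound on the conditional expectation. None of this is conceptually deep, but the constants in the statement are tight enough that each estimate must be carried out with the correct inequality rather than a convenient looser one.
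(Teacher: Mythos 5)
The paper does not actually prove this theorem: it is imported verbatim from \cite{doerr2017bounding}, and the only hint at its proof is the footnote inside the proof of Lemma~\ref{lemma:pos_dep_non_summable}, which records that the cited argument rescales by $Y_t=\ln X_t$ and reduces to additive drift. Your plan is exactly that reduction, and your core computation is sound: the piecewise potential $g$, the pointwise bound $-\ln(1-u)\le u+\tfrac{u^2}{2(1-|u|)}$ for $|u|<1$, the estimates $\E[u\mid X_t=s]\le\delta$ and $\tfrac{(\kappa/s)^2}{2(1-\kappa/s)}=\tfrac{\kappa^2}{2s(s-\kappa)}\le\tfrac{\kappa^2}{s^2-\kappa^2}\le\tfrac{\kappa^2}{s_{\min}^2-\kappa^2}$, and the observation that concavity (the linear piece of $g$ majorizes the logarithmic extension below $s_{\min}$) makes the same bound survive threshold-crossing steps are all correct. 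With optional stopping this gives $\E[T\mid X_0=s_0]\ge (\ln s_0-\ln s_{\min})/\bigl(\delta+\kappa^2/(s_{\min}^2-\kappa^2)\bigr)$, which is formally incomparable to the stated bound but suffices for the only place the theorem is invoked in this paper, where $\ln(s_0/s_{\min})=\Theta(\ln n)$.

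The genuine gap is exactly the one you flagged: the additive $1$ in the numerator, and your terminal-phase repair does not close it. From a state $s\in(s_{\min},s_{\min}+\kappa]$ the residual potential is only $g(s)-1=\ln(s/s_{\min})\le\kappa/s_{\min}$, so the drift argument buys at most about $\kappa/(s_{\min}c)$ further rounds, not the $1/c$ you assert, and no unconditional $1/c$ bound on the residual time can hold. In fact, the statement as transcribed here is falsified by that $1$: take $S=\{9.5,\ 10.001\}$, $s_{\min}=10$, $\kappa=1$, $X_0=10.001$, and a deterministic transition $10.001\to 9.5$. Hypotheses (1) and (2) hold with $\delta=0.0501$, yet $T\equiv 1$ while the claimed bound evaluates to $(1+\ln(10.001/10))/(0.1002+1/99)\approx 9.07$. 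So either the source theorem carries additional hypotheses (e.g.\ on how far $s_0$ sits above $s_{\min}$, or on the admissible $\delta$) that were lost in transcription, or its numerator differs; in either case you should not expect to derive the stated constant by any correct argument, and the slightly weaker bound your argument actually establishes is the defensible and, for this paper, fully sufficient one.
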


\section{Scheduled setup}\label{sec:time_dependent_mutation_probs}
First we give some intuition on how the mutation rates $\vec{p}$ should be chosen. It turns out that nearly all time of the optimization process is spent to optimize the last $\epsilon n$ non optimized bits  (for some $\epsilon >0$). In the regime where only very few $1$-bits are left the probability that the number of $1$-bits decreases given a multi bit flip is much smaller than the same probability given a single bit flip. If there were only single bit flips, then the fitness would improve every time a $1$-bit is flipped, and therefore a coupon collector type argument would imply that $(1\pm o(1))n \ln n$ single bit flips are necessary to optimize a function with $n$ relevant bits. Assuming there are $n$ relevant bits, the probability of a single bit flip is maximized by $p=1/n$. Since $n$ is unknown, we need to solve the problem for all $n$ simultaneously. If we fix $p_t$, then round $t$ contributes substantially to optimizing functions $f$ that have support size $n= \Theta(p_t^{-1})$, because for these $n$ the probability of a single bit flip is $np_t(1-p_t)^{n-1}= \Theta(1)$. We wish to optimize functions $f$ with support size $n$ in time $T_n=\Theta(n\ln n)$.  Since there is more time to optimize functions with large support $n$, for small $t$'s the $p_t$ should contribute to solving functions with small support. More precisely, for any $n$ a significant number of $p_t$'s with $t \leq T_n$  needs to be chosen of order $\Theta(1/n)$. This suggests to choose $p_t= \Theta(\ln (t)/t)$. As we will see, the optimal choice for the hidden factor will be a constant $\alpha$, which we now define together with the constant $\beta$.


\begin{definition}[$\alpha$, $\beta$]\label{def:alpha_beta}
Let $\alpha$ be the unique solution of the equation
 \begin{align}\label{eq:def_alpha}
 \int_{0}^{1}\frac{\alpha^{1-\frac{1}{u}}}{u} \mathrm{d}u=1 \ ,
 \end{align}
and let $\beta= \alpha / \ln \alpha$. The numerical values are approximately  $\alpha \approx 1.54468$ and $\beta \approx 3.55248$.
\end{definition}

\begin{remark} \label{remark:alpha_beta}
 Since the left hand side of Equation \ref{eq:def_alpha} is monotone decreasing in $\alpha$ (mind $u<1$) it is easy to see that there is a unique solution to Equation \ref{eq:def_alpha}. Further, the variable transformation $z= \alpha/\ln \alpha \cdot u$ transforms the integral into $\int_{0}^{\alpha / \ln \alpha}\frac{\alpha}{z}e^{-\frac{\alpha}{z}} \mathrm{d}z$, so the $\alpha, \beta$ from Definition~\ref{def:alpha_beta} satisfy
\begin{align}\label{eq:rem_alpha_beta}
\int_{0}^{\beta}\frac{\alpha}{z}e^{-\frac{\alpha}{z}} \mathrm{d}z = 1. 
\end{align}
Define 
$h(a,b)= \int_{0}^{b}\frac{a}{z}e^{-\frac{a}{z}} \mathrm{d}z$ for $a, b \geq 0$. 
We claim that the constants $\alpha, \beta$ from Definition \ref{def:alpha_beta} satisfy $\frac{\partial h}{\partial a}(\alpha,\beta)=0$. Indeed, this follows from
\[
\frac{\partial h}{\partial a}(\alpha,\beta) = \int_{0}^{\beta}(\frac{1}{z}-\frac{\alpha}{z^2})e^{-\frac{\alpha}{z}} \mathrm{d}z \stackrel{\eqref{eq:rem_alpha_beta}}{=} \frac{1}{\alpha} - \left[e^{-\frac{\alpha}{z}}\right]_{z=0}^{\beta} = \frac{1}{\alpha} - e^{-\frac{\alpha}{\beta}} \, \stackrel{\beta = \frac{\alpha}{\ln \alpha}}{=} \, 0.
\]
Moreover, it is easy to see that for fixed $b=\beta$, the value $a=\alpha$ is the only solution of $\frac{\partial h}{\partial a}(\alpha,\beta)=0$. Since $h$ is a non-negative function with $h(a,\beta)=0$ and $h(a,\beta) \to 0$ for $a\to \infty$, this means that for fixed $b=\beta$, the value $a= \alpha$ is the unique global maximum of $h(a,\beta)$. On the other hand, $h(a,b)$ is obviously increasing in $b$. Thus, for every $\beta' < \beta$ there is a $\delta >0$ such that $h(a, \beta')\leq (1-\delta)$  for all $a \geq 0$. Further, for every $\alpha' \neq \alpha$ there is a $\beta' > \beta$ and  $\delta >0$ such that  $h(\alpha', \beta')\leq (1-\delta)$.
\end{remark}

Now we are ready to state matching upper and lower bounds on the optimization time of the $(1+1)$ EA with scheduled mutation rates on linear functions  (see Algorithm \ref{alg:tdEA}).

\begin{theorem}[Lower bound]
\label{thm:linear_functions_lower_bound}
Let $(\mathcal{D}_t)_{t\in \mathbb{N}}$ be any scheduling policy   and let $\beta$ be as in Definition \ref{def:alpha_beta}. For infinitely many $n$, the optimization time of the $(1+1)$ EA with scheduling policy $(\mathcal{D}_t)_{t\in \mathbb{N}}$ on any linear function with $n$ relevant bits is whp at least $(1 - o(1))\beta n \ln n $.
\end{theorem}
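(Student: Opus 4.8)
The plan is to reduce the statement to \onemax and then to a purely analytic fact about the schedule. Since larger mutation rates are detrimental for large $n$ (cf. the footnote in Section~\ref{sec:onemax_easiest}), we may assume $p_t\le 1/2$ for all $t$, and then Theorem~\ref{thm:onemax_easiest_adaptive} shows that the optimization time on any linear function with $n$ relevant bits stochastically dominates that on \onemax with $n$ bits; hence it suffices to prove the bound for \onemax. Moreover, the quantities $S_n(T):=\sum_{t=1}^{T}p_t(1-p_t)^n$ that will govern the analysis are, for a randomized schedule, sums of independent random variables each of order $O(1/n)$, so by Lemma~\ref{lem:azuma} they are concentrated within $o(1)$ around their expectations; we may therefore treat the schedule as deterministic.

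The second step is a drift estimate. Writing $X_t$ for the number of one-bits, I would show that in the ``endgame'' $X_t=o(n)$ and for any $p_t\le 1/2$ one has $\mathbb{E}[X_t-X_{t+1}\mid\mathcal{F}_t]\le (1+o(1))\,X_t\,p_t(1-p_t)^n$: the fitness decrease is dominated by single-bit flips, with multi-bit flips controlled by a bound of the form $\mathbb{E}[\text{progress}\mid X_t=k]\le k\,p_t(1-p_t)^{n-k}+O(k^2 n p_t^3)$ (the second term is of lower order for $k=o(n)$, using Lemma~\ref{lemma:e_approx}), and steps with large $p_t$ contributing negligibly since progress then requires a large-deviation event. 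It follows that $M_t:=\ln X_t+(1+o(1))\sum_{s<t}p_s(1-p_s)^n$ is a submartingale while $X_t$ is in the relevant range; stopping it when $X_t$ first drops below $\polylog(n)$, and using that jumps of size $\omega(\log n)$ occur with probability $n^{-\omega(1)}$ so that the method of bounded differences (Lemma~\ref{lem:azuma}) applies, one obtains the \emph{necessary condition}: if $S_n(T)\le(1-\delta)\ln n$ for a $\delta$ larger than the $o(1)$ error, then whp the runtime on \onemax with $n$ bits exceeds $T$. (Because the mutation rate varies with $t$, I would work directly with $M_t$ rather than invoke Theorem~\ref{thm:multiplicative-drift} verbatim.)

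The remaining task is the analytic core: for every sequence $(p_t)$ with $0\le p_t\le 1/2$ and every $\epsilon>0$ there are infinitely many $n$ with $S_n\!\big(\lfloor(\beta-\epsilon)n\ln n\rfloor\big)\le (1-\delta(\epsilon))\ln n$ for a fixed $\delta(\epsilon)>0$; combined with the previous step this gives runtime $\ge(\beta-\epsilon)n\ln n$ whp for those $n$, and a diagonalization over $\epsilon\to 0$ (picking, for $\epsilon=1/k$, an $n_k$ from the corresponding infinite set with $n_k>n_{k-1}$) yields the theorem. I would argue by contradiction: assume $S_n\!\big(\lfloor(\beta-\epsilon)n\ln n\rfloor\big)\ge(1-o(1))\ln n$ for all large $n$. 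Substituting $t=zn\ln n$, setting $a_n(z):=n\,p_{\lceil zn\ln n\rceil}$, approximating the sum by an integral (Lemma~\ref{lemma:int_approx_sum}), and using $(1-a/n)^n=(1+o(1))e^{-a}$ for $a=o(\sqrt n)$ while larger $a$ contribute negligibly (Lemma~\ref{lemma:e_approx}), this becomes $\int_0^{\beta-\epsilon}a_n(z)e^{-a_n(z)}\,\mathrm{d}z\ge 1-o(1)$. The key structural fact is the self-similarity $a_{\lambda n}(z)=(1+o(1))\,\lambda\,a_n(\lambda z)$; applying the hypothesis to all $n'=\lambda n$ in a bounded range of $\lambda$ and substituting $w=\lambda z$ shows that a limiting profile $g$ (extracted along a subsequence) satisfies $\int_0^{\lambda(\beta-\epsilon)}g(w)e^{-\lambda g(w)}\,\mathrm{d}w\ge 1$ for all such $\lambda$. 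For the scale-invariant profiles $g(w)=a/w$ the substitution used in Remark~\ref{remark:alpha_beta} gives $\int_0^{\lambda\gamma}(a/w)e^{-\lambda a/w}\,\mathrm{d}w=h(a,\gamma)$, independent of $\lambda$; since $h$ is strictly increasing in its second argument and $\max_a h(a,\beta)=h(\alpha,\beta)=1$ with $(\alpha,\beta)$ the unique critical pair (Remark~\ref{remark:alpha_beta}), we get $\sup_a h(a,\gamma)<1$ for every $\gamma<\beta$. As rescaling $g\mapsto g(\cdot/\sigma)/\sigma$ leaves $\inf_\lambda\int_0^{\lambda\gamma}g(w)e^{-\lambda g(w)}\,\mathrm{d}w$ unchanged, a symmetrization/averaging argument reduces the general profile to the scale-invariant case, contradicting $\gamma=\beta-\epsilon<\beta$.

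The main obstacle is precisely this last step: turning the family of self-similarity constraints into the assertion that any schedule which is near-optimal for all large $n$ must asymptotically coincide with the profile $\alpha/z$, and then invoking the strict variational optimality of $\beta$ from Remark~\ref{remark:alpha_beta}. Carrying this out rigorously — passing to a limiting profile, controlling the $o(1)$ errors uniformly over the relevant band of scales, and accommodating schedules that are efficient only on a sparse set of $n$ (which is why the statement claims ``infinitely many'' rather than ``all large'' $n$) — is the delicate part. By comparison, the reduction to \onemax and the drift estimate are largely routine adaptations of known techniques.
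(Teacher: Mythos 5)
Your reduction to \onemax and the drift/counting step are essentially the paper's as well (the paper counts single-bit flips and finishes with a second-moment argument plus a separate bound on multi-bit flips rather than a logarithmic submartingale, but these are interchangeable). The genuine gap is exactly where you place it: the analytic core. You correctly isolate the statement that for every schedule there are infinitely many $n$ with $\sum_{t\le(\beta-\epsilon)n\ln n}p_t(1-p_t)^n\le(1-\delta)\ln n$, but your proposed proof --- extract a limiting profile $g$ along a subsequence, exploit self-similarity, and reduce by a ``symmetrization/averaging argument'' to the scale-invariant profiles $a/w$ --- is not carried out, and it is not clear it can be: the rescaled schedule $a_n(z)=n\,p_{\lceil zn\ln n\rceil}$ need not converge in any useful topology, the functional $g\mapsto\int g\,e^{-\lambda g}$ is neither convex nor concave so weak limits do not control it, and the reduction of an arbitrary profile to a scale-invariant one is precisely the assertion that would need proof.

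The paper's resolution is a different and much more elementary device that your proposal never finds: a weighted average over problem sizes. Setting $Z_n$ to be the number of single-bit flips up to time $T'_n=\beta'n\ln n$ and $B=\frac1A\sum_{n=N}^{M}n^{-2}\E[Z_n]$ with $A=\sum_{n=N}^M\ln (n)/n$, one \emph{swaps the order of summation}: for each fixed time $t$, the inner sum $\sum_{n\ge t/(\beta'\ln t)}\frac{p_t}{n}(1-p_t)^{n-1}$ is, after the substitution $x=t/(n\ln t)$, bounded by $\frac{\ln t}{t}\int_0^{\beta'}\frac{a}{x}e^{-a/x}\,\mathrm{d}x$ with $a=\alpha'_t$, and Remark~\ref{remark:alpha_beta} gives $\sup_{a\ge0}\int_0^{\beta'}\frac{a}{x}e^{-a/x}\,\mathrm{d}x\le1-\delta$ for every $\beta'<\beta$, \emph{uniformly in the value of $p_t$}. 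Summing over $t$ yields $B\le1-\delta/2$, hence $\E[Z_n]\le(1-\delta/3)n\ln n$ for infinitely many $n$ by averaging --- no limiting profile, no compactness, no scale-invariance reduction. This per-time-step uniform bound is the missing idea; without it (or an equivalent), your proof does not close. A minor secondary issue: your multi-bit error term $O(k^2np_t^3)$ omits the event that two one-bits and no zero-bits flip, whose contribution to the drift is $\Theta(k^2p_t^2(1-p_t)^{n-2})$; the paper's Claim~\ref{lemma_qk_bounds} handles the multi-bit case with some care.
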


It turns out that there is an optimal deterministic scheduling policy. Define $\mathcal{D}^{\text{opt}}_t$ to be the distribution that sets $p_t= \alpha \ln (t)/t$ with probability $1$, where $\alpha$ is defined in Definition $\ref{def:alpha_beta}$.

\begin{theorem}[Upper Bound]\label{thm:linear_function_upper_bound}
Let $\beta$ be defined as in Definition~\ref{def:alpha_beta}. Then the optimization time of the $(1+1)$ EA with scheduled policy $\mathcal{D}^{\text{opt}}_t$ is whp at most  $ (1+ o(1))\beta n \ln n $ for  any linear function $f$ with $n$ relevant bits.
\end{theorem}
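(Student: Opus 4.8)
The plan is to track a weighted potential $\Phi$ for which a time‑varying multiplicative drift statement holds, and to show that the accumulated drift reaches $\ln n$ essentially at time $\beta n\ln n$. Fix a linear function $f$ with $n$ relevant bits, relabelled so that $w_1\ge w_2\ge\dots\ge w_n>0$, and set $\Phi(x)=\sum_{i=1}^n g_i x_i$ with $g_i=(1-\gamma/n)^{i-n}$, where $\gamma=\gamma(n)\to\infty$ grows slowly with $\gamma=o(\ln n)$ (this is, up to the choice of $\gamma$, the potential used by Witt~\cite{witt2013tight} and by Doerr, Johannsen and Winzen for \oea on linear functions, tuned to mutation rate $\approx 1/n$). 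Then $1\le g_i\le(1+o(1))e^{\gamma}$, so $\Phi(x)\ge 1$ unless $x$ is optimal, $\E[\Phi(x^0)]=n^{1+o(1)}$, and $\sum_{j\ge i}g_j\le(n/\gamma)\,g_i$. Note that, unlike for the lower bound, Theorem~\ref{thm:onemax_easiest_adaptive} is of no use here: it identifies \onemax as the \emph{easiest} linear function, whereas we need an upper bound valid for the \emph{hardest} one.

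The technical heart is a drift lemma: for every reachable state $x^t$ and every $t$ for which $c_t:=np_t$ is at most slowly growing,
\[
\E\!\left[\Phi(x^t)-\Phi(x^{t+1})\mid x^t\right]\;\ge\;(1-o(1))\,\frac{c_t e^{-c_t}}{n}\,\Phi(x^t)\;=:\;\delta_t\,\Phi(x^t).
\]
The positive contribution is easy to lower bound: if a one‑bit $i$ is the only relevant bit that flips — probability $p_t(1-p_t)^{n-1}\ge(1-o(1))\frac{c_t}{n}e^{-c_t}$ by Lemma~\ref{lemma:e_approx} — then $f$ strictly decreases, the offspring is accepted, and $\Phi$ drops by $g_i$; summing over one‑bits gives $\ge p_t(1-p_t)^{n-1}\Phi(x^t)$. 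The delicate part — and, I expect, the main obstacle — is bounding the expected \emph{increase} of $\Phi$, caused by zero‑bits $j$ being switched on (which requires simultaneously flipped one‑bits of total weight at least $w_j$), and showing that the geometric weights together with $\gamma\to\infty$ make this increase an $o(1)$‑fraction of the decrease above; this is exactly the step that forces the careful potential in the classical analysis of \oea on general linear functions, and it is also where $c_t$ must not be too large. For arbitrary $p_t\le 1$ we only use the crude bound $\E[\Phi(x^{t+1})\mid x^t]\le\Phi(x^t)+p_t\sum_i g_i$, obtained by discarding the (helpful) decrease and bounding the increase by $\sum_{j\,:\,x_j=0}g_j\Pr[\text{bit }j\text{ flips}]$.

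Given these ingredients I split the run at $T_0=\epsilon(n)\,n\ln n$, with $\epsilon(n)\to 0$ slow enough that on $[T_0,T]$ the rate $c_t\le\alpha/\epsilon(n)$ stays admissible for the drift lemma while the drift forfeited on $[0,T_0]$ is negligible, and at $T=\beta n\ln n+\nu(n)\,n\ln n$ with $\nu(n)\to 0$. On $[0,T_0]$ the crude bound together with $\sum_{t\le T_0}p_t=\Theta((\ln n)^2)$ gives $\ln\E[\Phi(x^{T_0})]\le\ln n+\gamma(n)+O(\ln\ln n)$. On $[T_0,T]$, iterating the drift lemma gives $\E[\Phi(x^T)]\le\E[\Phi(x^{T_0})]\exp\!\big(-\sum_{t=T_0}^{T}\delta_t\big)$. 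Writing $t=z\,n\ln n$ we have $c_t=n\alpha\ln t/t=(1+o(1))\alpha/z$ uniformly for $z\ge\epsilon(n)$, so by the integral test (Lemma~\ref{lemma:int_approx_sum}) and this substitution
\[
\sum_{t=T_0}^{\beta n\ln n}\delta_t=(1-o(1))\,\ln n\int_{\epsilon(n)}^{\beta}\frac{\alpha}{z}e^{-\alpha/z}\,\mathrm dz=\ln n-O(\mathrm{poly}(\ln\ln n)),
\]
using $\int_0^\beta\frac{\alpha}{z}e^{-\alpha/z}\,\mathrm dz=1$ from~\eqref{eq:rem_alpha_beta} and $\int_0^{\epsilon(n)}\frac{\alpha}{z}e^{-\alpha/z}\,\mathrm dz=o(1/\ln n)$. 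On the short extra window $[\beta n\ln n,T]$ the rate is $c_t=(1+o(1))\ln\alpha$, hence $\delta_t=(1+o(1))\frac{\ln\alpha}{\alpha n}$, which contributes $\Theta(\nu(n)\ln n)$ to the sum. Choosing $\gamma$, $\epsilon$ and $\nu$ so that $\nu(n)\ln n$ is $o(\ln n)$ yet dominates $\gamma(n)$ and all the $\mathrm{poly}(\ln\ln n)$ error terms accumulated above, we get $\ln\E[\Phi(x^{T_0})]-\sum_{t=T_0}^{T}\delta_t\to-\infty$, i.e.\ $\E[\Phi(x^T)]=o(1)$. By Markov's inequality $\Pr[\Phi(x^T)\ge 1]=o(1)$, so whp $x^T$ is optimal; as $T=(\beta+\nu(n))\,n\ln n=(1+o(1))\beta n\ln n$, this is the claimed bound.
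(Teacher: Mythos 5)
Your overall architecture --- a weighted potential with multiplicative drift $\delta_t=(1-o(1))p_t(1-p_t)^{n-1}$ per round, a burn-in phase handled by a crude additive bound, the substitution $t=z\,n\ln n$ turning $\sum_t\delta_t$ into $\ln n\int_0^{\beta}\tfrac{\alpha}{z}e^{-\alpha/z}\,\mathrm{d}z=\ln n$ via Equation~\eqref{eq:rem_alpha_beta}, an extra window beyond $\beta n\ln n$ to push $\mathbb{E}[\Phi(x^T)]$ to $o(1)$, and Markov's inequality --- is exactly the paper's, and your evaluation of the integral matches the paper's verification of~\eqref{eq:bound_sum_p_k}. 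The genuine gap is the drift lemma, which you state, explicitly flag as ``the main obstacle,'' and do not prove. This is not an omitted routine step: for the weight-independent geometric potential $g_i=(1-\gamma/n)^{i-n}$ you propose, the known analyses (Doerr--Johannsen--Winzen style) only yield multiplicative drift $\Omega\bigl(p(1-p)^{n-1}\bigr)\Phi$ with a constant-factor loss, which would prove an upper bound $C\beta n\ln n$ for some constant $C>1$ rather than $(1+o(1))\beta n\ln n$. Obtaining the $(1-o(1))$ constant for \emph{arbitrary} linear functions is precisely why Witt replaced the pure geometric weights by $g_i=\min\{\gamma_i,\,g_{i-1}w_i/w_{i-1}\}$, a potential capped by the ratios of the actual weights; your parenthetical claim that your $g_i$ is ``up to the choice of $\gamma$, the potential used by Witt'' conflates the two. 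Concretely, for a function like \binval an accepted step may flip one $1$-bit $i$ together with several lighter $0$-bits whose geometric potential weights sum to more than $g_i$, and showing that the resulting increase of $\Phi$ is an $o(1)$-fraction of the single-bit-flip gain is exactly the part that is not known to work for the uncapped potential.

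The paper sidesteps this entirely: it imports Witt's Equation~4.1 as a black box for the potential built from the fixed pair $p=1/n$, $\zeta=\ln\ln n$, and observes that in round $t$ the \emph{same} potential can be reinterpreted as the one built from $(p_t,\zeta_t)$ with $\zeta_t:=\zeta\,p(1-p_t)^{n-1}/\bigl(p_t(1-p)^{n-1}\bigr)$; whenever $\zeta_t>1$ (which holds once $np_t$ is at most of order $\ln^{(3)}n$, i.e.\ for $t\geq n\ln (n)/\ln^{(4)}n$, consistent with your admissibility window) Witt's inequality delivers your $\delta_t$ with factor $1-1/\zeta_t=1-o(1)$. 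If you either adopt this rescaling of Witt's actual (weight-adapted) potential, or supply a full proof of your drift lemma for such a potential, the remainder of your argument --- including the bookkeeping of the $\gamma$, $\epsilon$, $\nu$ error terms --- goes through.
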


As mentioned in the introduction the lower bound can be strengthened in the sense that Theorem \ref{thm:linear_functions_lower_bound} holds for a subset of $\mathbb{N}$ with positive density, and that the scheduling policy $\mathcal{D}^{\text{opt}}$ is essentially unique.
In order to make this precise, define the following measure $\mu$ on  $\mathbb{N}$. For any $N \subset \mathbb{N}$ define $\mu(N)= \sum_{t \in N} \ln (t)/t$. The \emph{density} of a set $N$ is defined as $\liminf_{n \to \infty} \mu(N \cap [n])/\mu([n])$.  For example, if a set contains for all $n$ at least $\epsilon n$ elements of $[n , 2n]$ for some $\epsilon > 0$, then it has positive density with respect to $\mu$. The proof of Theorem~\ref{thm:linear_functions_lower_bound} also shows the following two remarks.
\begin{remark}\label{remark:positiv_density}
Theorem \ref{thm:linear_functions_lower_bound} holds for a subset $N \subset{\mathbb{N}}$ with positive density with respect to $\mu$.
\end{remark}
\begin{remark}[Uniqueness of $\mathcal{D}^{\text{opt}}$]\label{remark:uniqueness}
Assume that a policy $(\mathcal{D}_t)_{t\in \mathbb{N}}$ deviates from $\mathcal{D}^{\text{opt}}$ on a set $N$ with  positive density with respect to $\mu$, that is, for each $t \in N$ either $p_t \leq (1-\epsilon) \alpha \ln (t)/t$ or  $p_t \geq (1+\epsilon) \alpha \ln (t)/t$ holds. Then, there is a $\beta'>\beta$ such that for infinitely many $n$, the optimization time of the $(1+1)$ EA with scheduling policy $(\mathcal{D}_t)_{t\in \mathbb{N}}$ on any linear function with $n$ relevant bits is whp at least $(1 - o(1))\beta' n \ln n $.
\end{remark}

\subsection{Proof of Lower Bound}
\label{sec:lower_bounds}

As discussed in Section \ref{sec:onemax_easiest}, the optimization time of the scheduled (1+1) EA  (Algorithm \ref{alg:tdEA}) on any function with unique global optimum stochastically dominates the optimization time of Algorithm \ref{alg:tdEA} on \textsc{OneMax}, for any sequence $\vec{p}$ of mutation rates. Therefore, in order to prove Theorem \ref{thm:linear_functions_lower_bound}  it suffices to show the following lemma.

\begin{lemma}
\label{thm:onemax_lower_bound}
Let $(\mathcal{D}_t)_{t\in \mathbb{N}}$ be any scheduling policy   and let $\beta$ be as in Definition \ref{def:alpha_beta}. For infinitely many $n$, the optimization time of the $(1+1)$ EA with scheduling policy $(\mathcal{D}_t)_{t\in \mathbb{N}}$ on the \textsc{OneMax} function with $n$ relevant bits is whp at least $(1 - o(1))\beta n \ln n $.
\end{lemma}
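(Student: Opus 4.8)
The plan is to run a potential/drift argument on the number of remaining one-bits, using Theorem \ref{thm:multiplicative-drift}, but applied only to the final phase of the optimization where roughly $\eps n$ one-bits remain. First I would fix a scheduling policy $(\mathcal D_t)$ and identify, for each target size $n$, the ``dangerous window'' of rounds $W_n$ during which $p_t$ is of order $\Theta(1/n)$ — these are essentially the only rounds in which a single relevant bit is flipped with constant probability, and single bit flips are what drive progress on \onemax when few one-bits remain. The key combinatorial fact is that, summed against the measure $\mu(N)=\sum_{t\in N}\ln(t)/t$, the ``total useful work'' a schedule can deliver up to time $T$ is bounded: for any schedule, the quantity measuring how much each round $t$ with mutation rate $p_t$ helps support size $n$, integrated over $t\le T$, is governed by the function $h(a,b)=\int_0^b \frac{a}{z}e^{-a/z}\,\mathrm dz$ from Remark \ref{remark:alpha_beta}. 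By that remark, $\max_a h(a,\beta')\le 1-\delta$ for every $\beta'<\beta$, so a schedule that tried to optimize every $n$ in time $(1-o(1))\beta' n\ln n$ would need to exceed this bound for a positive-$\mu$-density set of $n$'s — a contradiction. Extracting ``infinitely many $n$'' (Remark \ref{remark:positiv_density} even gives positive density) is then an averaging argument over $n$: if the runtime were small for all but finitely many $n$, the budget inequality would be violated somewhere.

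Concretely, I would proceed as follows. Step 1: restrict attention to a run started (by a union bound / typical-initialization argument) with at least $(1/2-o(1))n$ one-bits, and consider the sub-phase from the first time the number of one-bits drops below $\eps n$ until it hits, say, $s_{\min}=\sqrt n$ (so $s_{\min}\ge\sqrt2\,\kappa$ with $\kappa=1$, since with $p_t\le 1/2$ the number of one-bits changes by at most... — actually one must be careful, a single mutation can flip many bits, so I would either truncate to rounds with $p_t=O(\ln n/n)$ where $|X_t-X_{t+1}|$ is whp $O(\log n)$, or use a version of the drift theorem tolerating rare large jumps; see obstacle below). Step 2: bound the drift. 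When $X_t=s$ one-bits remain and the mutation rate is $p$, the expected decrease $\E[X_t-X_{t+1}\mid X_t=s]$ is at most the expected number of one-bits flipped that are also accepted; a clean upper bound is $s p (1-p)^{n-1}$ coming from single-bit flips plus a lower-order contribution from multi-bit flips that actually decrease the count (this is where \onemax being the easiest function, Theorem \ref{thm:onemax_easiest_adaptive}, and elitism are used — accepting requires the number of one-bits not to increase). So $\delta$ in round $t$ is at most $(1+o(1))\,n\,p_t(1-p_t)^{n-1}\cdot s/n = (1+o(1))\lambda_t\,s$ with $\lambda_t := p_t(1-p_t)^{n-1}$, roughly. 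Step 3: apply Theorem \ref{thm:multiplicative-drift} in a time-inhomogeneous form — since the per-round drift coefficient $\lambda_t$ varies, the natural statement is that the expected number of rounds $T$ to go from $\eps n$ down to $\sqrt n$ satisfies $\sum_{t\in[\tau,\tau+T]}\lambda_t \ge (1-o(1))\tfrac12\ln n$ (the standard multiplicative-drift integral, $\int \delta\,\approx\,\ln(s_0/s_{\min})$). Step 4 (the crux): substitute $p_t=$ whatever the schedule prescribes, change variables $z=t/n$ so that $\lambda_t\approx \frac{1}{n}\cdot\frac{np_t}{1}e^{-np_t}$ and $np_t$ as a function of $z$; the condition ``total $\lambda$-mass over $t\le \beta' n\ln n$ is at least $\tfrac12\ln n(1-o(1))$ simultaneously for all $n$ in a dense set'' forces, after integrating the constraint against $\mu$ and optimizing over the schedule pointwise, exactly the inequality $h(a,\beta')\ge 1$, contradicting $h(a,\beta')\le 1-\delta$ from Remark \ref{remark:alpha_beta}. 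This pins down both the constant $\beta$ and (in the refined version) the a.e.-optimality of $p_t=\alpha\ln t/t$.

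The main obstacle, and where I expect to spend the most care, is Step 3/Step 4: making the time-inhomogeneous drift lower bound rigorous and then correctly exchanging the ``for all $n$'' quantifier with the integration over schedules. Theorem \ref{thm:multiplicative-drift} as stated is for a fixed $\delta$; I would need either to split time into blocks on which $\lambda_t$ is roughly constant and telescope, or to re-derive the drift lower bound directly via $\E[\ln X_{t+1}\mid X_t]\ge \ln X_t - O(\lambda_t)$ (using $-\ln(1-x)\ge x$ and Jensen in the right direction, plus a bounded-jump / concentration input to control $\ln X_t - \E\ln X_t$) and then apply optional stopping. The bounded-differences issue — that a single round with a large $p_t$ could in principle flip many one-bits — is handled by observing that such rounds contribute negligibly to progress anyway (if $np_t\gg\log n$ then $\lambda_t=p_t(1-p_t)^{n-1}$ is superpolynomially small, so whp no such round ever produces an accepted improvement in the relevant phase), so we may condition on their absence. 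The remaining pieces — the coupon-collector intuition formalized as the $\tfrac12\ln n$ drift budget, the typical initialization at $\sim n/2$ ones, and the passage from a budget violation to ``infinitely many (indeed positively dense) $n$'' via Remark \ref{remark:alpha_beta}'s uniform gap $\delta$ — are then routine.
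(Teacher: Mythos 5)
Your overall strategy (a time-inhomogeneous multiplicative drift lower bound on the number of one-bits, combined with averaging the resulting budget constraint over $n$ against $\mu$ and invoking $\max_a h(a,\beta')\le 1-\delta$ from Remark~\ref{remark:alpha_beta}) is a genuinely different skeleton from the paper's, which never uses a drift theorem for the lower bound: the paper instead counts the expected number of \emph{single bit flips} $Z_n$ up to time $\beta' n\ln n$, shows via the weighted average $\sum_n \E[Z_n]/n^2$ that $\E[Z_n]\le(1-\delta/3)n\ln n$ for infinitely many $n$, and then runs a coupon-collector/second-moment (Chebyshev) argument on the survival of the $\Theta(n/\ln^2 n)$ one-bits present at a suitable intermediate time, with multi-bit flips controlled by a one-shot union bound over the whole horizon rather than round by round. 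The two routes share the analytic core (the change of variables leading to $h(a,b)$), but your route has a genuine quantitative gap as written.

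The gap is in Steps 3--4: you run the drift phase from $s_0=\eps n$ down to $s_{\min}=\sqrt n$, so your budget is $\ln(s_0/s_{\min})=\tfrac12\ln n+O(1)$, yet in Step~4 you claim this forces $h(a,\beta')\ge 1$. It does not --- it only forces $h(a,\beta')\ge \tfrac12$, which by Remark~\ref{remark:alpha_beta} is consistent with $\beta'$ well below $\beta$; your argument therefore proves a lower bound with a strictly smaller constant than the one claimed. To recover $\beta$ you must extract the full $(1-o(1))\ln n$ budget, i.e.\ take $s_{\min}=n^{o(1)}$ (e.g.\ $\ln^2 n$, which is still $\gg\kappa=O(\ln n)$, the whp-maximal jump). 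A second, related issue is the starting point $s_0=\eps n$ with constant $\eps$: the multi-bit contribution to the cumulative multiplicative drift (bounded, as in the paper's Claim~\ref{lemma_qk_bounds}, by $O(s/n^2)$ per round per unit of $s$) summed over $\beta' n\ln n$ rounds is $\Theta(\eps\ln n)$, a constant fraction of your budget, so the single-bit term $s p_t(1-p_t)^{n-1}$ is not the $(1+o(1))$-dominant part of the drift in that regime; you need $s_0=o(n)$ (the paper enters at $\Theta(n/\ln^2 n)$ via its Lemma~\ref{lemma:start_with_few_one_bits}, which makes the total multi-bit contribution $O(1/\ln n)=o(\ln n)$). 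Both defects are repairable without new ideas, and your proposed fix for the time-inhomogeneous drift theorem (the $\ln X_t$ potential with optional stopping) is sound, but as stated the proposal does not yield the constant $\beta$.
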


Due to the symmetry of the \textsc{OneMax} function, the relevant bits can be permuted arbitrarily. Therefore, we can assume from now on that the offspring $y$ of $x$ is only accepted if it has strictly better fitness $f(y)<f(x)$.
For the remainder of this section we 
let $T'_n =\beta' n \ln n$ for some arbitrary $\beta' < \beta$. 

Before we can prove Lemma~\ref{thm:onemax_lower_bound}, we first need some preparations. The following lemma follows easily from concentration inequalities. 
\begin{lemma}\label{lemma:start_with_few_one_bits}
It holds whp that   the \textsc{OneMax} function with $n$ relevant bits is  not optimized at time $T'_n$ or there was a point in time  with $cn/\ln^2 n$  relevant non-optimized bits for some $1\leq c \leq 2$.
\end{lemma}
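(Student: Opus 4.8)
The plan is to follow the evolution of $Z_t := f(x^t)$, the number of non-optimized relevant bits of the current search point; since the target is the all-zero string, on \textsc{OneMax} this is exactly the fitness, and under elitist selection $(Z_t)_{t}$ is non-increasing. The underlying intuition is that if the function is optimized by time $T'_n$, then $Z$ descends from its initial value down to $0$, and the only way this descent can avoid ever visiting the ``window'' $[n/\ln^2 n,\, 2n/\ln^2 n]$ is via a single step in which $Z$ drops from $\ge 2n/\ln^2 n$ to $< n/\ln^2 n$. Call such a step a \emph{bad jump}; the crux is to show that whp no bad jump occurs in the first $T'_n$ steps.

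First I would analyse the initial point. We have $Z_0 \sim \Bin(n,1/2)$, so the method of bounded differences (Lemma~\ref{lem:azuma} with all $d_i=1$ and $S=n$) gives $|Z_0 - n/2| \le \sqrt{2 n \ln n}$ with probability $1-o(1)$; in particular, for $n$ large, $Z_0 \in (2n/\ln^2 n,\, 2n/3)$ whp. On this event, monotonicity of $Z$ yields $Z_{t-1} < 2n/3$ for all $t\ge 1$, so any bad jump must occur at a step $t$ with $Z_{t-1}\in[2n/\ln^2 n,\, 2n/3)$. Next I would bound the probability of a bad jump at a fixed such step $t$: conditioning on $x^{t-1}$ and on any value of $p_t$, the number $R_t := \sum_{i\in I} y_i$ of relevant one-bits in the offspring is a sum of $n$ independent indicators with conditional expectation $Z_{t-1}(1-p_t) + (n-Z_{t-1})p_t \ge \min\{Z_{t-1},\, n-Z_{t-1}\} \ge 2n/\ln^2 n$ (for $n$ large). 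A bad jump forces $R_t < n/\ln^2 n$, hence a downward deviation of $R_t$ from its mean of at least $n/\ln^2 n = \omega(\sqrt n)$; since $R_t$ is $1$-Lipschitz in each of the $n$ flip indicators, Lemma~\ref{lem:azuma} with $S=n$ bounds this by $e^{-n/(2\ln^4 n)}$.

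A union bound over the at most $T'_n = \beta' n\ln n$ steps then shows that the probability of any bad jump is at most $\beta' n\ln n \cdot e^{-n/(2\ln^4 n)} = o(1)$. Finally I would assemble the pieces: on the event that (a) $Z_0\in(2n/\ln^2 n,\,2n/3)$ and (b) no bad jump occurs---both holding whp---suppose the function is optimized at time $T'_n$, i.e. $Z_{T'_n}=0$. Let $t^* := \min\{t\ge 0 : Z_t < 2n/\ln^2 n\}$; by (a) and $Z_{T'_n}=0$ we have $1\le t^*\le T'_n$, and by minimality $Z_{t^*-1}\ge 2n/\ln^2 n$. Since (b) rules out $Z_{t^*} < n/\ln^2 n$, we conclude $Z_{t^*}\in[n/\ln^2 n,\, 2n/\ln^2 n)$, i.e. at time $t^*$ there were $cn/\ln^2 n$ relevant non-optimized bits with $c := Z_{t^*}\ln^2 n/n\in[1,2]$. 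Hence whp either the function is not optimized at time $T'_n$ or such a time exists, as claimed.

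The step I expect to be the main obstacle---and the reason the lemma is not entirely trivial---is controlling a single mutation step when $p_t$ is large, since the policy may use any $p_t\in[0,1]$ and a step with $p_t$ close to $1$ flips almost every bit, a priori collapsing a moderately large $Z_{t-1}$ to something tiny. The resolution is the observation that $Z_t$ never exceeds roughly $n/2$ (from the initial value together with monotonicity), so there are always $\Omega(n)$ already-optimized bits present; this keeps the offspring's expected number of relevant ones above $2n/\ln^2 n$ regardless of $p_t$ (either those optimized bits flip and produce ones, or they do not but then the current ones survive), after which Azuma's inequality eliminates the probability of a bad jump. The remaining ingredients---the initialization concentration, the window argument, and the union bound---are routine.
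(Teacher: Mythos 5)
Your proposal is correct and follows essentially the same route as the paper's proof: concentration of the initial search point, a per-step Azuma bound showing the number of non-optimized relevant bits cannot jump over the window $[n/\ln^2 n,\,2n/\ln^2 n]$ (using that the complementary count is always $\Omega(n)$ to control arbitrary $p_t$), and a union bound over the $O(n\ln n)$ rounds. The only cosmetic difference is that you track the number of one-bits while the paper tracks the number of zero-bits; your explicit first-passage-time argument at the end is a slightly more careful write-up of the same conclusion.
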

\begin{proof}
The proof consists of three steps. Firstly, the Chernoff-Hoeffding bounds imply that the initial search point has whp roughly the same amount of $1$ and $0$ bits. Secondly, the number of $1$ bits whp does not jump from above $2n/\ln^2n$ to below $n/\ln^2$ in one time step. Thirdly, an union bound argument concludes that whp this does not happen in any timestep. 
More precisely, in the beginning of Algorithm \ref{alg:tdEA}, the bit string is initialized randomly. 
 For the initial number of $0$-bits $X$, the Chernoff-Hoeffding inequality implies that 
\begin{align*}
\Pr\left( \frac{0.99 n}{2}\leq X \leq \frac{1.01 n}{2} \right) \geq 1- e^{-0.01^2 n/3} \ .
\end{align*} 
  Since we want to prove that the statement of the lemma  holds whp, it is legitimate to assume $\frac{0.99 n}{2}\leq X \leq \frac{1.01 n}{2}$.   
 As the number of $0$-bits never decreases, the current number of $0$-bits $X_t$ is always at least $\frac{0.99 n}{2}$. Denote by $A_t$ the event that  $\frac{0.99n}{2}\leq X_t \leq n-2n/\ln^2 n$. Denote by $Y_t$ the number of $0$-bits in the offspring of the current search point. We bound the probability that $Y_t$ is larger than $n- n/\ln^2 n$. 
 For this purpose, note that $\mathbb{E}[Y_t\mid X_t=x]= x(1-2p)+np$. If $x\geq n/2$ then this value is at most $x$, and otherwise it is at most  $n-x$. Therefore, $\mathbb{E}[Y_t\mid A_t] \leq n-2n/\ln^2 n$.
  Since every bit is flipped independently with probability $p$ and the outcome of a single bit has an effect of at most $1$ on $Y_t|A_t$, Lemma \ref{lem:azuma} can be applied to show concentration of $Y_t|A_t$. It holds 
\begin{align*}
 \Pr \left(Y_t \geq  n-\frac{n}{\ln^2 n} \ \middle|\ A_t \right)
& \leq  \Pr\left(Y_t \geq \mathbb{E}[Y_t\mid A_t]  +  \frac{n}{\ln^2 n} \ \middle|\ A_t \right)
\leq  e^{-\frac{n}{2\ln ^4 n}}
\end{align*}
By a union bound argument over the first $\beta n \ln n $ rounds, it follows that whp the number of $0$-bits will not jump from below $n-2n/\ln^2 n$ to above $n-n/\ln^2 n$ in one step during these rounds. Therefore,  there is whp a round with $c n/\ln^2 n$ $1$-bits for some  $1\leq c \leq 2$ or the \textsc{OneMax} function is not optimized within these rounds. \qed
\end{proof}
To show that a statement holds with high probability like in Lemma~\ref{thm:onemax_lower_bound}, we may assume that other events of high probability do occur. In particular, by Lemma~\ref{lemma:start_with_few_one_bits} we may assume from now on that the process starts with $\ell_0 \coloneqq cn/\ln^2 n$  relevant non-optimized bits for some $1\leq c \leq 2$. It will turn out that in this situation it is rather unlikely that the fitness improves by multi bit flips. Thus, the next lemma which bounds the number of  single bit flips constitutes the core of the proof of Lemma \ref{thm:onemax_lower_bound}.

\begin{lemma} \label{lemma:concentration_one_bit_flips}
Given $n$ relevant bits, denote by $Z_n$ the number of single bit flips until time $T'_n$.
There exists a $\delta >0$ such that for infinitely many $n$ it holds $\mathbb{E}[Z_n] \leq (1-\delta/3) n \ln n $. For each of these $n$ it holds with probability $1-o(n^{-3})$  that $Z_n \leq (1-\delta/6) n \ln n $.
\end{lemma}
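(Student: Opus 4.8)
\textbf{Setting up $\mathbb{E}[Z_n]$.} First I would observe that a single bit flip in round $t$ occurs exactly when precisely one of the $n$ relevant bits flips; this event depends only on the mutation rate drawn in round $t$ (from $\mathcal{D}_t$) and on the coin flips of that round, and in particular \emph{not} on which relevant bits are currently optimized. Hence $Z_n=\sum_{t=1}^{T'_n}\xi_t$ where the $\xi_t$ are independent $\{0,1\}$ variables with $\mathbb{E}[\xi_t]=q_t(n):=\mathbb{E}_{p\sim\mathcal{D}_t}\!\big[np(1-p)^{n-1}\big]$, so $\mathbb{E}[Z_n]=\sum_{t=1}^{T'_n}q_t(n)$. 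The concentration part is then immediate: $Z_n$ is a function of the $T'_n=\beta'n\ln n$ independent per‑round randomnesses and changing one round alters $Z_n$ by at most $1$, so Lemma~\ref{lem:azuma} with $d_t=1$ and $S=T'_n$ gives, whenever $\mathbb{E}[Z_n]\le(1-\delta/3)n\ln n$,
\[
\Pr\!\big[Z_n\ge(1-\delta/6)n\ln n\big]\le\exp\!\Big(-\tfrac{(\delta n\ln n/6)^2}{2\beta'n\ln n}\Big)=\exp\big(-\Omega(n\ln n)\big)=o(n^{-3}).
\]
It therefore remains to produce infinitely many $n$ (indeed a set of positive $\mu$-density) with $\mathbb{E}[Z_n]\le(1-\delta/3)n\ln n$.

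\textbf{The double‑counting estimate.} Fix $\delta>0$ as in Remark~\ref{remark:alpha_beta}, so that $h(a,\beta')\le 1-\delta$ for \emph{all} $a\ge0$ (here we use $\beta'<\beta$). Let $n^*(t)$ be the smallest $n$ with $T'_n\ge t$, so $n^*(t)\ln n^*(t)=t/\beta'$ and hence $n^*(t)=(1+o(1))\,t/(\beta'\ln t)$; thus $t\le T'_n\iff n\ge n^*(t)$. Write each realized mutation rate as $p=a\cdot\ln t/t$ (i.e.\ $a=pt/\ln t$), so $n^*(t)p=(1+o(1))a/\beta'$. The key per‑round bound, valid for every round $t$ and every realization $p_t=a_t\ln t/t$ with $p_t=o(1)$, is
\[
\sum_{n\ge n^*(t)}\frac{p_t(1-p_t)^{n-1}}{n}\le(1+o(1))\,p_t\!\int_{n^*(t)p_t}^{\infty}\!\frac{e^{-u}}{u}\,\mathrm du
=(1+o(1))\,\frac{\ln t}{t}\,h(a_t,\beta')\le(1-\delta+o(1))\,\frac{\ln t}{t},
\]
using $(1-p)^{n-1}\le e^{-(n-1)p}$, the integral test for the decreasing function $u\mapsto e^{-up_t}/u$, the change of variables $u=a_t/z$ in the definition of $h$, and $h(\cdot,\beta')\le1-\delta$; taking the expectation over $p_t\sim\mathcal{D}_t$ preserves the bound since it is pointwise in $a_t$. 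Summing over $n$ and exchanging the order of summation (and temporarily restricting the outer sum to $n_1\le n\le N$ with $n_1=n_1(N)\to\infty$ slowly, e.g.\ $n_1=(\ln N)^2$), one gets
\[
\sum_{n_1\le n\le N}\frac{\mathbb{E}[Z_n]}{n^2}=\sum_{t\le\beta'N\ln N}\ \sum_{\max(n^*(t),n_1)\le n\le N}\frac{q_t(n)}{n^2}\le(1-\delta+o(1))\sum_{t\le\beta'N\ln N}\frac{\ln t}{t}=(1-\delta+o(1))\,\mu([N]),
\]
using $\sum_{t\le\beta'N\ln N}\ln t/t=(1+o(1))(\ln N)^2/2=(1+o(1))\mu([N])$ and $\mu([n_1,N])=(1-o(1))\mu([N])$.

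\textbf{Why the restriction to $n\ge n_1$ and to ``nice'' rounds is harmless.} The $e^{-np}$/integral‑test approximation above is only valid for rounds with $n^*(t)\to\infty$ and $p_t\to0$. For a ``degenerate'' round — one with $p_t$ bounded away from $0$, or with $t$ small — the mass $\sum_{n\ge n_1}q_t(n)/n^2$ is either exponentially small in $n_1$ (because $q_t(n)$ is then concentrated on $n=O(1)$ and decays exponentially for $n\ge n_1$) or at most $O(1)$ while there are only $o((\ln N)^2)$ such rounds; in either case their total contribution is $o(\mu([N]))$, which is why the displayed inequality still holds after absorbing them into the $o(1)$. This is the delicate bookkeeping; everything else is routine.

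\textbf{Conclusion and the main obstacle.} The bound $\sum_{n\le N}\mathbb{E}[Z_n]/n^2\le(1-\delta+o(1))\mu([N])$ is incompatible with $\mathbb{E}[Z_n]>(1-\delta/3)n\ln n$ holding on a set of full $\mu$-density: the latter would give $\sum_{n\le N}\mathbb{E}[Z_n]/n^2\ge(1-o(1))\sum_{n\le N,\ \text{``good''}}(1-\delta/3)\ln n/n\ge(1-\delta/3-o(1))\mu([N])$, contradicting $1-\delta/3>1-\delta$. Hence $\{\,n:\mathbb{E}[Z_n]\le(1-\delta/3)n\ln n\,\}$ has positive $\mu$-density (which also establishes Remark~\ref{remark:positiv_density}), and in particular it is infinite. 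I expect the main obstacle to be exactly the per‑round estimate together with the choice of averaging weight: one must weight problem sizes by $\mu(\{n\})=\ln n/n$ rather than by $1/n$ — only with this weight does the geometric sum $\sum_n q_t(n)/n^2$ collapse to $\tfrac{\ln t}{t}h(a_t,\beta')$, the quantity that Remark~\ref{remark:alpha_beta} controls — and one must carry the error terms (the $e^{-np}$ approximation, the integral test, the arbitrary distributions $\mathcal{D}_t$, and the degenerate rounds) carefully enough that the clean bound survives, which is precisely where the slack between $\delta$ and $\delta/3$ (and then $\delta/6$) is spent.
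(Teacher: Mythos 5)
Your proposal is correct and follows essentially the same route as the paper's proof: the weighted average $\sum_n \mathbb{E}[Z_n]/n^2$ against the measure $\mu$, the exchange of summation, the integral/substitution reducing each round's contribution to $\tfrac{\ln t}{t}\,h(a_t,\beta')\le(1-\delta)\tfrac{\ln t}{t}$ via Remark~\ref{remark:alpha_beta}, and the bounded-differences bound for concentration. The only (immaterial) difference is bookkeeping: the paper splits rounds by the threshold $p_t\gtrless 1/\ln N$ and sums an explicit geometric series where you instead restrict to $n\ge n_1$ and absorb the degenerate rounds into the $o(1)$.
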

\begin{proof}
Let us first give a brief  proof sketch. We would like to bound the number of single bit flips $Z_n$ by $(1-\delta/3)n\ln n$ for certain $n$'s.
 It is clear that this bound is not true for all $n$, for example, if $p_t= 1/m$ for all $t$, then $\Exp[Z_m] = \beta'/e \cdot m\ln m$, which is larger than $m\ln m$ if $\beta' > e$. 
In order to  show that the bound holds for infinitely many $n$,  we consider instead a weighted average  $B=\sum_n \rho(n) Z_n$  over many $n$. 
The choice of $\rho$ is delicate, but it turns out that $\rho(n)=1/n^{2}$ is the right choice.\footnote{For example, any function $\rho(n) = \Theta(1/n^{2})$ would give a non-tight result if the $\Theta$ hides a function that oscillates by at least a factor $1\pm \eps$ for a constant $\eps>0$. The optimal scaling $\rho(n)= 1/n^{2}$ can be found by variational methods, but once it is known (or guessed), the derivation of $\rho$ is no longer required for a proof.}

 In the technical part of the proof, we derive an upper bound on $B=\sum_n Z_n/n^2$. Note that $Z_n$ counts the single bit flips until time $T'_n$. To bound $B$, we first change the order of summation and then approximate the contribution of each $p_t$ to $B$ by an integral. Using variable transformations, it  turns out  that the contribution of $p_t$ can be bounded by $\int_{0}^{\beta'}\tfrac{\alpha}{z}e^{-\alpha/z} \mathrm{d}z$, which is smaller than $1-\delta$ if $\beta' < \beta$. Finally, the upper bound on $B$ implies that $\mathbb{E}[Z_n] \leq (1-\delta/3) n \ln n$ for  infinitely  many $n$. 
 
Let us now make the proof precise. For now, we assume that $\vec{p}$ is given deterministically, and we will comment later how our proof generalizes if it is drawn from distributions $\mathcal{D}_t$. 
For any given $n$ let $Y_{t}^n$ be the indicator random variable that is $1$ if a single bit flip happens at time $t$ and $0$ otherwise. For reasons that will become clear later, we assume from now on that $Z_n$ counts the single bit flips from time $10\beta' =O(1)$ to time $T'_n$. Note that there are only constantly many single bit flips from time $1$ to time $10\beta' $ and therefore we can neglect this smaller order term. Thus, let $Z_n= \sum_{t=10 \beta'}^{T'_n} Y_{t}^n$.

Recall that $T'_n = \beta' n \ln n$ for some $\beta' < \beta$. Thus, there is a $\delta >0 $ such that 
$ \sup_{\alpha\in [0,\infty)} \int_{0}^{\beta'}\frac{\alpha}{z}e^{-\frac{\alpha}{z}} \mathrm{d}z = 1-\delta$ (cf. Remark \ref{remark:alpha_beta}).
For notational convenience define  real numbers $\alpha'_t$ such that $p_t=\alpha'_t\ln t/t$ holds. 
Let $N$ be an integer large enough such that $(1- \delta  +  \beta '/\ln N)(1+ 2/\ln N) \leq 1-5\delta/6$ holds and  let $M$ be an integer larger than $N$.  
Next, let us define $B$, which was mentioned in the proof sketch, precisely. Define $A= \sum_{n=N}^M \ln n/n$ and define
\begin{align*}
B &\coloneqq  \mathbb{E}\Big[\frac{1}{A}\sum_{n=N}^M \frac{1}{n^2} Z_n\Big] 
=  \frac{1}{A}\sum_{n=N}^M   \sum_{t=10 \beta' }^{T'_n}\frac{1}{n^2} \mathbb{E}[Y_{t}^n ].
\end{align*}
Now we switch the order of summation. 
Note that for $t/\beta'\geq 10$, both $\ln (t/\beta') >0 $ and $\ln \ln (t/\beta') > 0$ are satisfied. Therefore, 
 $\beta' n \ln n \geq t$ implies that 
 \[
 n \ln n \geq \frac{t}{\beta'}- \frac{t/\beta' \ln \ln (t/\beta')}{\ln (t/\beta')}= \frac{t/\beta'}{\ln (t/\beta')} \ln \frac{t/\beta'}{\ln (t/\beta')}.
 \]
The last inequality is equivalent to $n \geq \frac{t/\beta'}{\ln (t/\beta')}$, since $n \ln n$ is monotone. Thus, $\{ (n,t)\mid  N\leq n\leq M, 10 \beta' \leq t\leq \beta ' n \ln n \} $ is a subset of $\{ (n,t)\mid  10 \beta' \leq  t\leq \beta ' M \ln M , M\geq n\geq \max\{N, \frac{t/ \beta'}{ \ln(t/\beta')}\} \}$. Define $\hat{N}= \max\{N, \frac{t/ \beta'}{ \ln(t/\beta')}\}$. Then,
\begin{align*}
B&\leq \frac{1}{A}\sum_{t=10\beta'}^{T'_M}\sum_{n=\hat{N}}^M   \frac{1}{n^2}  \mathbb{E}[Y_{t}^n ] \text{ . }
\end{align*}
Define $S_1= \{ t \mid 10 \beta' \leq t \leq T'_M, p_t \geq \frac{1}{\ln N} \}$ and $S_2= \{ t \mid  10 \beta' \leq t \leq T'_M, p_t < \frac{1}{\ln N} \}$. We partition above sum according to $S_1$ and $S_2$ into $B_1$ and $B_2$, respectively. It holds $\frac{\mathbb{E}[Y_{t}^n ]}{n^2}= \frac{p_t}{n} (1- p_t)^{n-1}\leq \frac{1}{n} e^{- p_t(n-1)} \leq (e^{- \frac{1}{\ln N}})^{n-1}$ for $t \in S_1$. Therefore, using the formula for the sum of geometric series, it follows
\begin{align*}
B_1&\coloneqq  \frac{1}{A}\sum_{t\in S_1}\left(\sum_{n=\hat{N}}^M   \frac{1}{n^2}  \mathbb{E}[Y_{t}^n ] \right)
\leq  \frac{1}{A}\sum_{t\in S_1}\left(\sum_{n=\hat{N}}^{\infty}   (e^{- \frac{1}{\ln N}})^{n-1} \right)\\
&\leq  \frac{1}{A}\sum_{t=10 \beta'}^{\infty}  \frac{ \left(e^{- \frac{1}{\ln N}}\right)^{ \frac{t/\beta '}{ \ln (t/\beta')}-1} }{1- e^{-\frac{1}{\ln N}}} \ .
\end{align*}
Let $s_t= (e^{- \frac{1}{\ln N}})^{ \frac{t/ \beta'}{ \ln (t / \beta')}-1} /\big(1- e^{-\frac{1}{\ln N}}\big)$. We bound the sum   $\sum_{t=10 \beta'}^{\infty}  s_t$ by some constant $c$. Let $t_0$ such that for all $t\geq t_0$ it holds $ (e^{- \frac{1}{\ln N}})^{ \frac{t/\beta'}{ \ln (t/\beta')}-1} < \frac{1}{2\sqrt{t}}e^{-\sqrt{t}}$. Clearly, $\sum_{t=10\beta'}^{t_0} s_t< c_1$ for some constant $c_1$. For the second part of the sum it holds: $\sum_{t=t_0}^{\infty} s_t \leq s_{t_0}+ \int _{t_0}^{\infty} \frac{1}{2 \sqrt{t}}e^{-\sqrt{t}}\ \mathrm{d} t = \int _{\sqrt{t_0}}^{\infty} e^{-x} \ \mathrm{d}x=c_2$. Therefore, the whole sum can be bounded by some constant $c=c_1+c_2$. Lemma~\ref{lemma:int_approx_sum} implies that $A \geq \int_{N}^M \frac{\ln n}{n} \ \mathrm{d}n = \frac{\ln^2 M- \ln^2 N}{2}$. It follows that 
\begin{align*}
B_1 &\leq \frac{c}{A} \leq \frac{2c}{\ln ^ 2 M - \ln ^2 N} \leq \frac{\delta}{6} ,
\end{align*}
where the last inequality holds if $M$ is large enough.

Now, we bound the second  term $B_2$  of $B$ from above.
It holds that $ \frac{\mathbb{E}[Y_{t}^n]}{n^2}= \frac{p_t}{n} (1-p_t)^{n-1} \leq  \frac{p_t}{n} (1-p_t)^{n}(1+2p_t) \leq \frac{p_t}{n} e^{-np_t}(1+2p_t) $ and therefore, 
\begin{align*}
B_2 &\coloneqq  \frac{1}{A}\sum_{t\in S_2}\left(\sum_{n=\hat{N}}^M   \frac{1}{n^2}  \mathbb{E}[Y_{t}^n ]\right) 
\leq  \frac{1}{A}\sum_{t\in S_2 }\left( \sum_{n=\frac{t}{\beta' \ln t}}^M \frac{p_t}{n}    e^{-np_t} (1+2p_t)\right)\\
&\leq  \frac{1}{A}\sum_{t=10 \beta'}^{T'_M}\left(\sum_{n=\frac{t}{\beta' \ln t}}^M    \frac{p_t}{n} e^{-np_t} \left(1+ \frac{2}{ \ln N}\right)\right)\\
\end{align*}
Above we used  that all the summands are positive and $p_t \leq \frac{1}{ \ln N}$. Using Lemma \ref{lemma:int_approx_sum} to bound the inner sum by an integral, we get

\begin{align*}
 \sum_{n=\frac{t}{\beta' \ln t}}^M    \frac{p_t}{n} e^{-np_t} 
& \leq  \int_{\frac{t}{\beta' \ln t}}^M \frac{p_t}{n}   e^{-np_t}   \, \mathrm{d}n  + f\left(\frac{t}{\beta' \ln t}\right) \ ,
\end{align*}

where $f(\frac{t}{\beta' \ln t})
=\frac{\beta'\ln t}{t}   p_t e^{-\frac{\alpha'_t}{\beta'}}
\leq \frac{\ln t}{t}   p_t\beta' 
\leq \frac{\ln t}{t} \frac{\beta'}{\ln N} $.  

The integral can be rewritten using the variable transformation $x=\frac{t}{n \ln t}$ (implying $n=\frac{t}{x \ln t}$ and  $\frac{dn}{dx}= -\frac{ t}{x^2\ln t}$). First plugging in $p_t= \frac{\alpha'_t \ln t}{t}$ yields
\begin{align*}
&\int_{\frac{t}{\beta' \ln t}}^M \frac{1}{n}   \frac{\alpha'_t\ln t}{t} e^{-n\frac{\alpha'_t\ln t}{t}}   \, \mathrm{d}n
= \int_{\beta'}^{\frac{t}{M \ln t}} \left(\frac{x \alpha'_t \ln^2 t}{t^2}    e^{-\frac{\alpha'_t}{x}}\right) \left(-\frac{t}{x^2\ln t}\right)\, \mathrm{d}x\\
&\leq  \frac{ \ln t}{t} \int_{0}^{\beta'}  \frac{\alpha'_t}{x}    e^{-\frac{\alpha'_t}{x}}\, \mathrm{d}x   
\leq  \frac{ \ln t}{t} (1-\delta ) \ ,
\end{align*}
where  $\frac{\alpha'_t}{x} e^{-\frac{\alpha'_t}{x}} >0$ for $x>0$ implies the first inequality, and the definition of $\delta$ the second one.

Combining these bounds shows that
\begin{align*}
B_2 & \leq  \frac{1}{A}\sum_{t=10\beta'}^{T'_M}\frac{ \ln t}{t} \left(1- \delta  +  \frac{ \beta '}{\ln N}\right)\left(1+ \frac{2}{\ln N}\right) 
 \leq  \frac{1}{A}\sum_{t=10\beta'}^{T'_M}\frac{ \ln t}{t} \left(1- \frac{5\delta}{6}\right) \\
& \leq  \frac{\ln^2 (\beta' M \ln M) }{\ln ^2 M - \ln ^2 N }\left(1-\frac{\delta}{2}\right)
 \leq   1-\frac{4\delta }{6} \ , 
\end{align*}
where the last inequality holds for $M$ large enough.

It follows that  $B=B_1+B_2 \leq 1- \frac{\delta}{2}$ for $M$ large enough, and therefore,   it holds  $\mathbb{E}[Z_n] < (1-\frac{\delta }{3}) n \ln n$ for infinitely many $n$. This proves the first statement in Lemma~\ref{lemma:concentration_one_bit_flips}. Before we prove the probability tail bound, we give some remarks.

\begin{remark}\label{remark:density_proof}
It can be immediately seen that $\mathbb{E}[Z_n] < (1-\frac{\delta }{3}) n \ln n$ holds for a subset of  $\mathbb{N}$ with positive density with respect to the measure $\mu$ defined above Remark \ref{remark:positiv_density}. \
\end{remark}
\begin{remark}\label{remark:uniqueness_proof}
If  $\alpha'_t$ deviates from  $\alpha$, then for small enough  $\beta'> \beta$,  there is a $\delta' >0$ such that $\int_{0}^{\beta'}  \frac{\alpha'_t}{x}    e^{-\frac{\alpha'_t}{x}}\, \mathrm{d}x <1-\delta'$. Therefore if the $\alpha'_t$'s deviate from $\alpha$ on a subset of $\mathbb{N}$ with positive density $\epsilon'$. Then, $B$ can be bounded by $1-\frac{\epsilon'\delta'}{2}$ which  implies Remark \ref{remark:uniqueness}.
\end{remark}
\begin{remark}\label{remark:probabilistic_schedule}
The bound $B$ still holds if the $p_t$ are drawn from distributions $\mathcal{D}_t$. Assume that $\vec{p}$ is drawn from $\mathcal{D}= \otimes_{t\in \mathbb{N}} \mathcal{D}_t$, and define $B$ in the same way. 
\begin{align*}
B &= \frac{1}{A}\sum_{n=N}^M   \sum_{t=10 \beta' }^{T'_n}\frac{1}{n^2} \mathbb{E}[Y_{t}^n ]
 = \int \frac{1}{A}\sum_{n=N}^M   \sum_{t=10 \beta' }^{T'_n}\frac{p_t}{n} (1-p_t)^{n-1}  \, \mathrm{d} \mathcal{D}(\vec{p})\\
&\leq  \int \left(1-\frac{\delta}{2}\right)  \, \mathrm{d} \mathcal{D}(\vec{p})
= 1-\frac{\delta}{2} \ , 
\end{align*} 
where the inequality follows from the fact that the derived upper bound $1-\frac{\delta}{2}$ holds for $M$ large enough uniformly for all $\vec{p}$.
\end{remark}

For the second statement in Lemma~\ref{lemma:concentration_one_bit_flips}, it is left to show that $Z_n < (1-\frac{\delta }{6}) n \ln n$ with probability $o(n^{-3})$. Let $n$ be such that $\mathbb{E}[Z_n] < (1-\frac{\delta }{3}) n \ln n$. We apply Azuma's inequality to show concentration of $Z_n$.
Since the distributions $\mathcal{D}_t$ are independent, $Z_n= \sum_{t=1}^{T'_n} Y^{n}_{t}$ is  a sum of independent Bernoulli variables. The outcome of each $Y^{n}_{t}$ influences $Z_n$ by at most $1$, therefore by Lemma \ref{lem:azuma} we obtain 
\begin{align*}
   &\Pr \left( Z_n \geq (1-\frac{\delta}{6}) n \ln n\right) 
\leq  \Pr\left( Z_n \geq \mathbb{E}[Z_n] + \frac{\delta}{6} n \ln n  \right)\\
&\leq  e^{-\frac{(\delta  n \ln n/6)^2}{2 T'_n}} =e^{-\frac{\delta ^2 n \ln n}{72 \beta'}}= o( n ^{-3}) \ .  
\end{align*} \qed 

\end{proof} 

Now, we are ready to show Lemma~\ref{thm:onemax_lower_bound}.

\newproof{pot1}{Proof of Lemma \ref{thm:onemax_lower_bound}}
\begin{pot1}
Let $n$ be such that the statement of Lemma~\ref{lemma:concentration_one_bit_flips} holds, that is, the event $A$ that $Z_n \leq (1-\delta/6) n \ln n $ holds with probability $1-o(n^{-3})$, and recall that we assume to start with 
$\ell_0 = cn/\ln^2 n$   relevant non-optimized bits, where $c$ is some constant between $1\leq c \leq 2$.

Denote by $W_i$ the (Bernoulli) indicator random variable that is $1$ if the $i$-th of the initial $\ell_0$ relevant $1$-bits is $1$ at time $T'_n$ and let $W= \sum_{i=1}^{\ell_0} W_i$ be the number of such bits. 
Furthermore, denote by $V_i$ the Bernoulli random variable that is equal to $W_i$ if $Z_n > (1-\delta/6) n \ln n $. If $Z_n \leq (1-\delta/6) n \ln n $, then assume that after time $T'_n$   the optimization process continues with additional $  (1-\delta/6) n \ln n - Z_n $ random  single bit flips (i.e. in every round the offspring $y$ is produced by flipping one random bit of  $x$, and $y$ is accepted if $f(y)< f(x)$).  In this case we set $V_i$ to be $1$ if the $i$-th bit is $1$ after these additional random single bit flips, and let $V_i$ be $0$ otherwise. 
The advantage of the variables $V_i$ is that conditioned on  event $A$ there are exactly $(1-\delta/6) n \ln n $ single bit flips, which will make calculations simpler than with the variables $W_i$.
Denote $V= \sum_{i=1}^{\ell_0} V_i$.  
Clearly, it holds $V_i \leq W_i$, and thus $V\leq W$. Therefore, it is enough to show that whp $V >0$ in order to imply $W>0$ whp, which proves Lemma \ref{thm:onemax_lower_bound}. We will show this with the second moment method.
We claim that  
$  \mathrm{Var} (V) = O(\mathbb{E}[V]^2/\ln n)$. Then,  Chebyshev's inequality implies that
\begin{align*}
\Pr(V=0) & \leq  \Pr\big( |V-\mathbb{E}[V]| \geq \mathbb{E}[V]\big) \leq  \frac{\Var[V]}{\mathbb{E}[V]^2}=  O\Big(\frac{1}{\ln n}\Big) \ .
\end{align*}

In order to prove $  \mathrm{Var} (V) = O(\mathbb{E}[V]^2/\ln n)$, we need some additional  notation.

Let $i\neq j$ be two arbitrary relevant bits. Let $B_t$ be the random variable that denotes the number of flipped relevant bits at time $t$, and let $C_t$ be the random variable that denotes the number of relevant $1$-bits at time $t$.  Define 
\begin{align*}
q_{k,\ell} & \coloneqq  \Pr(x^t_i=0\mid x^{t-1}_i=1,  B_t=k, C_{t-1}=\ell),\\
r_{k, \ell} & \coloneqq  \Pr(x^t_i=0 \vee x^t_j=0 \mid x^{t-1}_i= x^{t-1}_j=1, B_t=k, C_{t-1}=\ell)\text.\\
\end{align*}

Recall that we can assume that the offspring only gets accepted if it has strictly better fitness. Then, the following claim follows easily, we  first will finish the proof of the lemma before we will prove the claim.

\begin{claim}\label{lemma_qk_bounds}
It holds that $q_{0, \ell}=0$, $r_{0,\ell}=0$, $q_{1,\ell} =  1/n$ and $r_{1,\ell} =  2/n$. For $\ell \leq n/20$ and $k \geq2$ it holds that $q_{k,\ell}\leq 160 \ell/n^2$.
\end{claim}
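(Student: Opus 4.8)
The plan is to read both quantities off a single mutation step. Conditioned on the event $B_t=k$, the set $S\subseteq[n]$ of flipped relevant positions is uniform over all $\binom nk$ size-$k$ subsets, independently of which positions currently hold a $1$: conditioning a product of independent $\mathrm{Bernoulli}(p_t)$ bits on their sum gives the uniform law on subsets of that size, irrespective of whether $p_t$ is fixed, scheduled, or adaptively chosen. Since the offspring is accepted only on a strict decrease of the number of $1$-bits, and flipping $S$ changes that number from $C_{t-1}$ to $C_{t-1}+k-2A$ with $A:=|S\cap\{1\text{-bits}\}|$, acceptance is equivalent to $A>k/2$. A $1$-bit $i$ becomes a $0$ exactly when $i\in S$ and the offspring is accepted, which gives the exact formulas
\[
q_{k,\ell}=\Pr\bigl[\,i\in S\ \text{and}\ A>k/2\,\bigr],\qquad
r_{k,\ell}=\Pr\bigl[\,\{i,j\}\cap S\neq\emptyset\ \text{and the offspring is accepted}\,\bigr],
\]
under the uniform-$S$ model with the stated conditioning ($x_i^{t-1}=1$, resp.\ $x_i^{t-1}=x_j^{t-1}=1$, together with $C_{t-1}=\ell$). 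From these, the cases $k\in\{0,1\}$ are immediate: for $k=0$ nothing flips, so $q_{0,\ell}=r_{0,\ell}=0$; for $k=1$ a single relevant bit flips, equal to $i$ with probability $1/n$ and lying in $\{i,j\}$ with probability $2/n$, and in that event a $1$-bit becomes a $0$, so the move is accepted. Hence $q_{1,\ell}=1/n$ and $r_{1,\ell}=2/n$.

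Now take $k\ge 2$ and $\ell\le n/20$. If $k\ge 2\ell$ then $A\le\ell\le k/2$, so $A>k/2$ is impossible and $q_{k,\ell}=0$; hence assume $k\le 2\ell-1$ and put $m:=\lfloor k/2\rfloor$, so that $m\le\ell-1$ and $k\le 2m+1$. Condition on $i\in S$ (probability $k/n$): what remains is a uniform $(k-1)$-subset of the other $n-1$ positions, of which $\ell-1$ hold a $1$, and since $i$ has already flipped, $A=1+A'$ and acceptance $A>k/2$ is precisely $A'\ge m$, where $A'$ is the number of those $\ell-1$ ones that flip. A union bound over the $m$-element subsets $R$ of these ones, using $\Pr[R\subseteq\text{draw}]=\frac{(k-1)\cdots(k-m)}{(n-1)\cdots(n-m)}\le\bigl(\frac{k-1}{n-1}\bigr)^m$, yields $\Pr[A'\ge m]\le\binom{\ell-1}{m}\bigl(\frac{k-1}{n-1}\bigr)^m$, and therefore
\[
q_{k,\ell}\le\frac kn\binom{\ell-1}{m}\Bigl(\frac{k-1}{n-1}\Bigr)^m\le\frac{(2m+1)(4m)^m}{m!\,20^{m-1}}\cdot\frac{\ell}{n^2},
\]
where the last step uses $k\le 2m+1$, $k-1\le 2m$, $\binom{\ell-1}{m}\le\ell^m/m!$, $n-1\ge n/2$, and the hypothesis $\ell\le n/20$ to turn $\ell^m/n^{m+1}$ into $(\ell/n^2)\cdot 20^{-(m-1)}$. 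The prefactor equals $12$ at $m=1$, $8$ at $m=2$, and decreases for larger $m$ (the ratio of consecutive values tends to $e/5<1$), so it never exceeds $12$; hence $q_{k,\ell}\le 12\,\ell/n^2\le 160\,\ell/n^2$, as claimed.

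The one delicate point is the large-$k$ regime of this last case. The naive bound ``$i$ flips and at least one further $1$-bit flips'' is only of order $k^2\ell/n^2$, which can be of order $n$ when $k$ and $\ell$ are both linear in $n$ — far too weak against the target $\ell/n^2$. The remedy is to exploit that acceptance forces $A'\ge\lfloor k/2\rfloor$, a large deviation since the mean of $A'$ is only about $(k-1)/10$, so its probability decays quickly in $k$; the real balancing act is this decay against the binomial coefficient $\binom{\ell-1}{m}$, and the assumption $\ell\le n/20$ is exactly what makes the per-one-bit inclusion probability at most $1/10$, small enough to beat the growth of $\binom{\ell-1}{m}$ and collapse the bound to $O(\ell/n^2)$ with an absolute constant comfortably below $160$. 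The remaining estimates are routine bookkeeping.
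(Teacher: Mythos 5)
Your proof is correct, and while it shares the paper's overall skeleton (the $k=0,1$ cases, the observation that $k\ge 2\ell$ forces $q_{k,\ell}=0$, and the reduction of acceptance to the event that at least $\lceil (k-1)/2\rceil$ of the remaining flips land among the other $\ell-1$ one-bits), the key estimate is carried out by a genuinely different and more elementary device. The paper writes out the exact hypergeometric sum $q_{k,\ell}=\binom{n}{k}^{-1}\sum_{i=\lceil (k-1)/2\rceil}^{k}\binom{\ell-1}{i}\binom{n-\ell}{k-i-1}$, shows the ratio of consecutive summands is at most $\ell/(\tfrac{8}{10}n)$ so that a geometric series reduces everything to the first term, and then estimates that term with explicit binomial inequalities of the form $\frac{(n-k)^k}{k!}\le\binom{n}{k}\le\frac{n^k}{k!}$ and $a\le 2^{a-1}$, ending at $\frac{8}{n}\bigl(\frac{16\ell}{(9/10)n}\bigr)^{\lceil (k-1)/2\rceil}\le 160\ell/n^2$. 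You instead bound the hypergeometric upper tail directly by a first-moment/union bound over the $\binom{\ell-1}{m}$ candidate $m$-sets of one-bits, getting $\Pr[A'\ge m]\le\binom{\ell-1}{m}\bigl(\tfrac{k-1}{n-1}\bigr)^m$ in one line, and then check that the resulting prefactor $\frac{(2m+1)(4m)^m}{m!\,20^{m-1}}$ is maximized at $m=1$. This buys a shorter computation (no geometric-series or Stirling-type bookkeeping) and a better constant ($12$ rather than $160$); the paper's term-by-term approach buys an exact expression for $q_{k,\ell}$ from which one could in principle extract sharper information about which term dominates. Your identification of $\ell\le n/20$ as exactly what makes the per-bit inclusion probability small enough to beat $\binom{\ell-1}{m}$ is the same balancing act the paper performs via its ratio bound $\ell/(\tfrac{8}{10}n)<1$.
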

Conditioning on the event $A$, there are exactly $(1-\delta/6) n \ln n$ single bit flips and at most $T'_n$ multi bit flips. Above claim states that, the probability that bit $i$ changes in a single bit flip is $1/n$ and the probability that it changes in a multi bit flip is at most $\frac{160 \ell_0}{n^2}\leq\frac{320}{n \ln ^2 n}$. Therefore,

\begin{align*}
 \Pr(V_i=1)  &\geq  \Pr (V_i=1 \mid A) \Pr(A)  \\
&\geq   \Big(1- \frac{1}{n}\Big)^{(1-\frac{\delta}{6}) n \ln n} \Big(1- \frac{320}{n \ln ^2 n}\Big)^{T'_n}\Pr(A)\\
&= \Big(1- \frac{1}{n}\Big)^{(1-\frac{\delta}{6}) n \ln n} \Big(1-O\Big(\frac{1}{\ln n}\Big)\Big)\\
\end{align*}
By Lemma \ref{lemma:e_approx} it follows that 
\begin{align*}
\mathbb{E}[V] = \ell_0 \Pr(V_i=1)
\geq  \frac{n^{\delta / 6}}{\ln^2 n}\Big(1-O\Big(\frac{1}{\ln n}\Big)\Big)\text.
\end{align*}

In order to bound $\mathrm{Var}(V)$, we first bound $\Pr(V_i=V_j=1)$ from above. Note that the probability that neither $i$ nor $j$ changes in a single bit flip is $(1-2/n)$.
\begin{align*}
 &\Pr(V_i=V_j=1) \\
&=  \Pr (V_i=1 \wedge V_j=1\mid A) \Pr(A) + \Pr( V_i=1 \wedge V_j=1\mid \bar{A}) \Pr(\bar{A}) \\
&\leq  \Big(1- \frac{2}{n}\Big)^{(1-\frac{\delta}{6}) n \ln n}  +  \Pr(\bar{A}) \leq  \Big(1- \frac{1}{n}\Big)^{2 (1-\frac{\delta}{6}) n \ln n} + o\Big(\frac{1}{n^3}\Big) 
\end{align*}
We use this and the lower bound on $\Pr(V_i=1)$ to bound $\mathrm{Var}(V)$.
\begin{align*}
 \mathrm{Var}(V)&=\sum_{i,j} \Pr(V_i=V_j =1)- \Pr(V_i=1)\Pr(V_j=1)\\
& \leq  \mathbb{E}[V]+ \sum_{i\neq j}\Pr(V_i=V_j=1)- \Pr(V_i=1)\Pr(V_j=1)\\
&\leq  \mathbb{E}[V]+ \sum_{i\neq j} \Big(1- \frac{1}{n}\Big)^{2 (1-\frac{\delta}{6}) n \ln n} O\Big(\frac{1}{\ln n}\Big)\\
&\leq  \mathbb{E}[V]+ \sum_{i\neq j} \mathbb{E}[V_i]\mathbb{E}[V_j] O\Big(\frac{1}{\ln n}\Big)= O\Big(\frac{\mathbb{E}[V]^2}{\ln n}\Big)\text.
\end{align*}

It is left to show the claim. $q_{0, \ell}=0$ and $r_{0,\ell}=0$ holds because the search point does not change if no bit flips. If exactly one bit flips, then the probability that a specific bit is flipped is $1/n$. Therefore, 
$q_{1,\ell}  = 1/n$ and $r_{1, \ell}  =  2/n$ hold.

For $k \geq 2 \ell$, it is clear that the fitness cannot improve and therefore $q_{k,\ell}=0$.

Now, let us consider $2 \leq k \leq n/10$. 

\begin{align*}
q_{k, \ell} &= \Pr(x^t_i=0 \mid x^{t-1}_i=1 ,  B_t=k, C_{t-1}=\ell)=    \frac{\sum_{i=\lceil \frac{k-1}{2} \rceil}^k { \ell -1 \choose i}{n- \ell  \choose k-i-1}}{{ n \choose k }}
\end{align*}
Using that ${ n \choose m +1 } = { n \choose m } (\frac{n-m}{m+1})$, we can bound the $(i+1)$-th summand in terms of the $i$-th summand. 
\begin{align*}
 & { \ell -1 \choose i+1}{n- \ell  \choose k-(i+1)-1} \\
&= { \ell -1 \choose i}{n- \ell  \choose k-i-1} \frac{(\ell -1-i)(k-i-1)}{(i+1)(n-\ell - (k-(i+1)-1))}\\
&\leq  { \ell -1 \choose i}{n- \ell  \choose k-i-1} \frac{(\ell -1-i)}{(n-\ell - (k-(i+1)-1))} \\
&\leq  { \ell -1 \choose i}{n- \ell  \choose k-i-1} \frac{\ell}{\frac{8}{10}n}\ ,
\end{align*} 
where the first inequality follows from $i \geq \lceil\frac{k-1}{2}\rceil$.
Now, we can bound $q_{k,\ell}$ by a geometric series. Note that $\frac{n^k}{k!} \geq {n \choose k}\geq \frac{(n-k)^k}{k!}$ will imply the third inequality and that we use $a\leq 2^{a-1}$ for $a\geq 2$ in the fifth inequality.
\begin{align*}
 q_{k, \ell} &\leq  \frac{{ \ell -1 \choose \lceil \frac{k-1}{2} \rceil}{n- \ell  \choose \lfloor \frac{k-1}{2} \rfloor} \frac{1}{1-\frac{10 \ell}{8n}}}{{n \choose k}}
 \leq  2\frac{{ \ell -1 \choose \lceil \frac{k-1}{2} \rceil}{n- \ell  \choose \lfloor \frac{k-1}{2} \rfloor} }{{n \choose k}} \\
&\leq  2\frac{\left(\lfloor \frac{k-1}{2}\rfloor + 1\right)\ell^{\lceil \frac{k-1}{2} \rceil} (n-\ell)^{\lfloor \frac{k-1}{2} \rfloor}}{(n-k)^k} {k \choose \lceil \frac{k-1}{2} \rceil} \\
& \leq  \frac{2}{n-k}\left(\frac{\ell}{n-k}\right)^{\lceil \frac{k-1}{2} \rceil}\left(\frac{n-\ell}{n-k}\right)^{\lfloor \frac{k-1}{2} \rfloor}2^k\left(\lfloor  \frac{k-1}{2} \rfloor + 1\right) \\ 
& \leq  \frac{8}{n}\left(\frac{\ell}{n-k}\right)^{\lceil \frac{k-1}{2} \rceil}2^{\lfloor \frac{k-1}{2} \rfloor}2^{k-1}2^{\lfloor \frac{k-1}{2} \rfloor} \\
& \leq  \frac{8}{n}\left(\frac{16\ell}{n-k}\right)^{\lceil \frac{k-1}{2} \rceil}  \leq  \frac{8}{n}\left(\frac{16\ell}{\frac{9}{10}n}\right)^{\lceil \frac{k-1}{2} \rceil}  \leq  \frac{160\ell}{n^2} 
\end{align*}
\end{pot1}

\subsection{Proof of Upper Bound}


\newproof{pot2}{Proof of Theorem \ref{thm:linear_function_upper_bound}}
\begin{pot2}
 In \cite{witt2013tight}, Witt proves an upper bound on the optimization time of the standard $(1+1)$ EA on any linear function with $n$ relevant bits. We adapt the proof of  \cite{witt2013tight} to obtain an upper bound on the runtime for the scheduled setup.  
In \cite{witt2013tight}, the author defines a potential function $g(x)$ and the random variables $X^t= g(x_t)$, where $x_t$ is the search point at time $t$.
He bounds the multiplicative drift at time $t$ with respect to this potential function, see Equation 4.1 in \cite{witt2013tight}.  For any $\zeta> 1$ (note that Equation 4.1 in \cite{witt2013tight} this variable is called $\alpha$) and any mutation rate $0 <p<1$ it holds:
\begin{align}
\mathbb{E}[X^{t-1}-X^{t}\mid  X^{t-1}=s] &\geq  s p (1-p)^{n-1}\left(1-1/\zeta\right) \ . \label{eq:expected_drift}
\end{align}

More precisely, let $f$ be a linear function depending on the $n$ bits in $I$. Since Algorithm \ref{alg:tdEA} treats each bit symmetrically, we can assume that $f(x)= w_nx_n + \ldots + w_1 x_1$ with $w_n \geq \ldots \geq w_1$. Then, the function $g$ is defined as $g(x)= \sum _{i=1}^n g_ix_i$, where $g_1=1$, $g_i= \min \{\gamma_i, g_{i-1}\frac{w_i}{w_{i-1}}\}$ and $\gamma_i= (1+\frac{\zeta p}{(1-p)^{n-1}})^{i-1}$. 
Let $g$ be the above defined function with $\zeta = \ln \ln n$ and $p=\frac{1}{n}$. In the scheduled setup the $p_t$ changes every round. However, when choosing $\zeta_t= \zeta \frac{p(1-p_t)^{n-1}}{p_t(1-p)^{n-1}}$, then the potential function defined by $p_t$ and $\zeta_t$ coincides with the one defined by $p$ and $\zeta$. Therefore, for every  fixed round $t$, Equation~\eqref{eq:expected_drift} also holds for $p_t$ and $\zeta_t$. If $\zeta_t >1$, then  the expected drift with respect to $g$ at time $t$ is by \eqref{eq:expected_drift} at least

\begin{eqnarray}
\mathbb{E}[X^{t-1}-X^{t} \mid X^{t-1}=s]\geq s p_{t} (1-p_{t})^{n-1}\left(1-1/\zeta_{t}\right) \ . \label{eq:expected_drift_time_t}
\end{eqnarray}

In the sequel, we use this bound on the drift together with standard techniques  to bound $\mathbb{E}[X^T]$ for some $T=(1+o(1))\beta n \ln n$.  Finally, the theorem will follow by applying Markov's inequality.

For $p_t \leq \ln( \zeta^{1/3})p$ we get that 
\begin{align*}
\zeta_t & = \zeta \frac{p}{p_t}\frac{(1-p_t)^{n-1}}{(1-p)^{n-1}}\geq \zeta	\frac{1}{\ln \zeta^{1/3}} \frac{e^{-2p_tn}}{(1-p)^{n-1}}
= \omega(\zeta^{1/4}), 
\end{align*}
where we used Lemma \ref{lemma:e_approx} for the inequality. Define $S = n \ln (n) h( n)$, where $h(n)= (\ln ^{(4)}n)^{-1}$, then $p_S \leq \ln \zeta^{1/3} p$ holds for $n$ large enough. Especially, $\zeta_t >1$ for all $t\geq S$ implies Equation \eqref{eq:expected_drift_time_t} for these $t$.

Note that 
\begin{align*}
\mathbb{E}[X^t]&= \sum_{s} \mathbb{E}[X^t \mid X^{t-1}=s]\Pr(X^{t-1}=s)\\
&\leq  \sum_{s} s \Pr(X^{t-1}=s)\left(1-p_t(1-p_t)^{n-1}\left(1-\frac{1}{\zeta_t}\right)\right)\\
&= \mathbb{E}[X^{t-1}]\left(1-p_t(1-p_t)^{n-1}\left(1-\frac{1}{\zeta^{1/4}}\right)\right)\\
&\leq   \mathbb{E}[X^{S}] \prod_{k=S+1}^t\left(1-p_k(1-p_k)^{n-1}\left(1-\frac{1}{\zeta^{1/4}}\right)\right)\\
&\leq  \mathbb{E}[X^{S}] e^{-\sum_{k=S+1}^t p_k(1-p_k)^{n-1}\left(1-\frac{1}{\zeta^{1/4}}\right)}\\
\end{align*}
As shown in \cite{witt2013tight}, $X^S$ can be bounded:
\begin{align*}
 \mathbb{E}[X^{S}] & \leq \sum_{i=1}^n g_i \leq \sum_{i=1}^n \gamma_i \leq \frac{\left(1+\frac{\zeta p}{(1-p)^{n-1}}\right)^{n-1}-1}{\zeta p (1-p)^{n-1}}\\
  & \leq \frac{e^{n\zeta p(1-p)^{1-n}}}{\zeta p (1-p)^{1-n}}= O\left(\frac{ne^{\zeta / e }}{\zeta }\right) \ .
\end{align*}
Let $T= \beta n \ln n (1+k(n))$, where $k(n)= (\ln^{(4)}n)^{-1}$, and let $\hat{k}(n)=k(n)\alpha e^{-\alpha/ \beta}$. We claim that 
\begin{align}
\sum_{k=S+1}^T p_k(1-p_k)^{n-1}\left(1-\frac{1}{\zeta^{1/4}}\right) \geq \ln (n) (1 + \hat{k}(n) (1+o(1))) \ , \label{eq:bound_sum_p_k}
\end{align} 
which implies
\begin{align*}
\mathbb{E}[X^T] &\leq  e^{ \ln n +\frac{\zeta}{e} - \ln \zeta - \ln (n) ( 1 + \hat{k}(n) (1+ o(1))}
= e^{- \ln (n)  \hat{k}(n) (1+ o(1))} \ . 
\end{align*}
Since $X^T\geq 0$ and $X^T$ cannot  take  values in the interval $(0,1)$, it follows by Markov's inequality that
$\Pr (X^T \neq 0 ) = \Pr (X^T \geq 1) \leq \mathbb{E}[X^T]= o(1)$, which proves the Theorem. 

It is left to show that \eqref{eq:bound_sum_p_k} holds. This can be done by approximating the sum by an integral. Let $A$ be the left hand side of Equation \eqref{eq:bound_sum_p_k} without the $(1-1/\zeta^{1/4})$ term.

Note that 
\begin{align*}
p_t(1-p_t)^{n-1} & \geq   p_t (1-p_t)^n
\geq   p_t e^{-p_tn-p_t^2n} 
\geq   p_t e^{-p_tn}(1-p_t^2n) \text{,}
\end{align*}
where the last two inequalities follow by Lemma \ref{lemma:e_approx}.
Therefore,
\begin{align*}
A &\geq \sum_{t=S}^T p_t e^{-p_tn}(1-p_t^2n) 
\geq \sum_{t=S}^T p_t e^{-p_tn}(1-p_S^2n) \geq \sum_{t=S}^T p_t e^{-p_tn}(1-O(n^{-0.9}))\ , 
\end{align*}
where we used  $p_t \leq p_{S}$ for $t\geq S$ and $p_S^2n= O(n^{-0.9})$.

Next, we approximate the sum by an integral. Note that the function $f(p)= pe^{-np}$ is monotone increasing until $p\leq \frac{1}{n}$ and monotone decreasing for  $p\geq \frac{1}{n}$ and obtains its maximal value $\frac{1}{en}$ at $p=\frac{1}{n}$. Therefore, by Lemma \ref{lemma:int_approx_sum}
\begin{align*}
A &\geq \left( \int_{S}^{T} p_t e^{-n p_t} \mathrm{d}t -\frac{1}{en}\right) (1-O(n^{-0.9})) \text{ .}
\end{align*}
Next, we lower bound the integral. Recall that $S=n \ln (n)h(n)$ and $T= \beta n \ln( n) (1+ k(n))$.
First, we use the variable transformation $x= \frac{t}{n \ln n}$. Then, we use $\frac{n\alpha \ln (x n \ln n)}{x n \ln n}= \frac{\alpha}{x}+ \frac{\alpha \ln(x \ln n)}{x \ln n}\geq \frac{\alpha}{x}$ and Lemma \ref{lemma:e_approx}.
\begin{align*}
 \int_{S}^{T} p_t e^{-n p_t} \mathrm{d}t
 &= \int_{S}^{T} \frac{\alpha \ln t}{t} e^{-n \frac{\alpha \ln t}{t}} \mathrm{d}t\\
&=  \ln n \int_{h(n)}^{\beta(1+k(n))} \frac{n\alpha \ln (x n \ln n)}{x n \ln n} e^{-\frac{n\alpha \ln (x n \ln n)}{x n \ln n}} \mathrm{d}x\\
&\geq   \ln n \int_{h(n)}^{\beta(1+k(n))} \frac{\alpha}{x} e^{-\frac{\alpha}{x}- \frac{\alpha \ln(x \ln n)}{x \ln n}} \mathrm{d}x\\
&\geq   \ln n \int_{h(n)}^{\beta(1+k(n))} \frac{\alpha }{x} e^{-\frac{\alpha }{x}} \left(1-\frac{\alpha \ln(x\ln n)}{x\ln n}\right) \mathrm{d}x\\
&\geq   \ln n \left(1-\frac{\ln(h(n)\ln n)}{h(n)\ln n}\right) \int_{h(n)}^{\beta(1+k(n))} \frac{\alpha }{x} e^{-\frac{\alpha }{x}}  \mathrm{d}x \  ,
\end{align*}
where in the last step we used the monotonicity of the function $\ln x/x$.

The remaining integral evaluates by Definition \ref{def:alpha_beta} to
\begin{align*}
&  1 +\int_{\beta}^{\beta(1+k(n))} \frac{\alpha }{x} e^{-\frac{\alpha }{x}}  \mathrm{d}x- \int_{0}^{h(n)} \frac{\alpha }{x} e^{-\frac{\alpha }{x}}  \mathrm{d}x\\
 &\geq  1 +\int_{\beta}^{\beta(1+k(n))} \frac{\alpha }{\beta(1+k(n))} e^{-\frac{\alpha }{\beta(1+k(n))}}  \mathrm{d}x   - \int_{0}^{h(n)} \frac{\alpha }{h(n)} e^{-\frac{\alpha }{h(n)}}  \mathrm{d}x\\
 &= 1+ k(n)\frac{\alpha }{(1+k(n))} e^{-\frac{\alpha }{\beta(1+k(n))}} - \alpha  e^{-\frac{\alpha }{h(n)}} \\
 &= 1+ k(n)\alpha e^{-\frac{\alpha}{\beta}} (1+o(1)) 
 = 1+ \hat{k}(n) (1+o(1))
\end{align*}
For the inequality we used the monotonicity of the function $\frac{\alpha}{x}e^{-\frac{\alpha}{x}}$ in the regimes $x> \alpha$ and $x < \alpha$.

Since $1/\zeta^{1/4}= o( \hat{k}(n))$, $n^{-0.9}=o(\hat{k}(n))$, $\frac{1}{en}=o(\hat{k}(n))$, and $\frac{\ln(h(n)\ln n)}{h(n)\ln n}=o(\hat{k}(n))$, there exists $n_0$ such that for $n \geq n_0$ Equation \ref{eq:bound_sum_p_k} holds. \qed
\end{pot2}

\section{Adaptive setup}\label{sec:history_dependent_muation_probabilities}

As mentioned in the introduction, Witt's proof in \cite{witt2013tight} can be generalized to obtain the following lower bound for the adaptive setup. 
\begin{theorem}[Lower bound]\label{thm:adaptive_lower_bound}
For any adaptive choice of mutation rates, the runtime of the $(1+1)$ EA on any linear function with $n$ relevant bits is whp at least $(1-o(1))e n\ln n$.
\end{theorem}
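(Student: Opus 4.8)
The plan is to reduce to \onemax and then run a coupon-collector-type lower bound on its endgame, showing that no adaptive choice of mutation rates can drive out the last relevant one-bits faster than at rate $1/(en)$ each. By Theorem~\ref{thm:onemax_easiest_adaptive}, for any adaptive policy with $p_t\le 1/2$ the optimization time on a linear function with $n$ relevant bits stochastically dominates that on \onemax with $n$ relevant bits, so it suffices to prove the bound for \onemax; since larger mutation rates are detrimental for large $n$ (cf.\ Section~\ref{sec:onemax_easiest}) we may assume $p_t\le 1/2$ throughout, and by the symmetry of \onemax we accept only strict improvements, so the number $X_t$ of relevant one-bits is non-increasing and strictly decreases on accepted steps. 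Finally, the proof of Lemma~\ref{lemma:start_with_few_one_bits} uses only that the number of zero-bits is non-decreasing together with the method of bounded differences, so it applies verbatim for adaptive $p_t$: whp the runtime already exceeds $(1-o(1))en\ln n$ (and we are done) or the process passes through a state with $\ell_0=cn/\ln^2 n$ relevant one-bits for some $1\le c\le 2$; condition on the latter and restrict attention to the phase in which $X_t\le\ell_0=o(n)$.

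The core estimate is that in this phase, for every history-measurable $p_t\in(0,1/2]$ and every $s\le\ell_0$,
\[
\Pr[X_{t+1}<X_t \mid X_t=s] \le \frac{s}{en}(1+o(1)),
\qquad
\mathbb{E}[(X_t-X_{t+1})\mathbb{1}[X_t-X_{t+1}\ge2] \mid X_t=s] = o(s/n).
\]
One writes an accepted improvement as the event $\{j>k\}$, where $j$ and $k$ count the flipped relevant one- and zero-bits, and isolates the single-bit event $\{j=1,\,k=0\}$, whose probability is exactly $s\,p_t(1-p_t)^{n-1}\le\frac{s}{en}(1+o(1))$ after maximizing $p\mapsto p(1-p)^{n-1}$ at $p=1/n$; the remaining contribution requires $j\ge 2$, and a case analysis on the size of $p_t$ — the regime $p_t=O(1/n)$, where already $\Pr[j\ge2]=O(s^2/n^2)=o(s/n)$ since $s=o(n)$, against larger $p_t$, where the $(n-s)p_t=\omega(1)$ zero-bits flipping in expectation make acceptance exponentially unlikely — bounds both the residual probability and the conditional expectation by $o(s/n)$. (A direct argument is needed here rather than the multiplicative drift lower bound of Theorem~\ref{thm:multiplicative-drift}, whose denominator $2\delta+\cdots$ would cost a factor $2$ in the constant.)

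Now the coupon-collector step. Let $\mathcal S\subseteq\{1,\dots,\ell_0\}$ be the set of values actually taken by $X_t$ in the phase. A level $s\notin\mathcal S$ is skipped by some accepted step that jumps over it; blaming each skip on the responsible jump and using the large-jump bound yields $\mathbb{E}[\sum_{s\notin\mathcal S}1/s]=o(T/n)$, so, assuming WLOG $T\le 2en\ln n$ (else we are done), whp $\sum_{s\in\mathcal S}1/s\ge H_{\ell_0}-o(\ln n)=(1-o(1))\ln n$. Given $\mathcal S$, the number of rounds spent at level $s\in\mathcal S$ stochastically dominates an independent geometric random variable $G_s$ of parameter $\frac{s}{en}(1+o(1))$ — uniformly over the mutation rates, by the per-step bound — so the runtime is at least $\sum_{s\in\mathcal S}G_s$, which has mean $(1-o(1))en\sum_{s\in\mathcal S}1/s=(1-o(1))en\ln n$ and variance at most $(1+o(1))(en)^2\sum_{s\ge1}s^{-2}=O(n^2)=o((n\ln n)^2)$; Chebyshev's inequality then gives $\sum_{s\in\mathcal S}G_s\ge(1-o(1))en\ln n$ whp, which proves the theorem.

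The main obstacle is the core estimate — and, beyond the per-step statement, its cumulative use over the $\Theta(n\ln n)$ rounds of the endgame. One must rule out that an adaptively, and in particular a moderately largely, chosen mutation rate lets the algorithm remove several relevant one-bits at once often enough to beat the $1/(en)$-per-one-bit rate, so that only $o(\ln n)$ of the harmonic weight $\sum 1/s$ is lost. The two facts that make this work are that an accepted multi-bit improvement requires at least two of the currently $o(n)$ relevant one-bits to flip — quadratically suppressed at rates $\Theta(1/n)$ — and that at any rate bounded away from $1/n$ the acceptance probability collapses because the $n$ relevant positions are overwhelmingly zero-bits; turning these observations into uniform bounds that survive summation over the whole endgame is the delicate part of the argument.
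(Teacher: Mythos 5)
Your argument is sound, but it is a genuinely different route from the one the paper takes. The paper gives essentially no self-contained proof of Theorem~\ref{thm:adaptive_lower_bound}: it observes that Witt's lower bound in~\cite{witt2013tight} is a drift argument in which the per-step drift bound holds pointwise for \emph{every} mutation rate and is maximized at $p=1/n$, so the bound $(1-o(1))en\ln n$ carries over verbatim to adaptive policies. You instead build the proof from scratch: reduce to \onemax via Theorem~\ref{thm:onemax_easiest_adaptive}, localize to the endgame via (the adaptive-rate version of) Lemma~\ref{lemma:start_with_few_one_bits}, prove the uniform per-step bound $\Pr[X_{t+1}<X_t\mid X_t=s]\le \tfrac{s}{en}(1+o(1))+o(s/n)$ by maximizing $p\mapsto p(1-p)^{n-1}$ and separately suppressing multi-bit improvements, and then extract the constant $e$ by a coupon-collector/second-moment argument over the visited levels. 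Your parenthetical remark is a genuine insight: the multiplicative drift lower bound as stated in Theorem~\ref{thm:multiplicative-drift} has denominator $2\delta+\cdots$ and would only yield $\tfrac{1}{2}en\ln n$, so a direct geometric-waiting-time argument (or a sharper drift theorem) really is needed for the tight constant — this mirrors why the paper's own Lemma~\ref{thm:onemax_lower_bound} uses a second-moment method on surviving bits rather than Theorem~\ref{thm:multiplicative-drift}. What the paper's route buys is brevity, at the cost of asking the reader to re-inspect Witt's potential-function proof; what yours buys is a self-contained and transparent argument, at the cost of two delicate uniformity claims that you correctly flag but do not fully execute: the bound $o(s/n)$ on accepted multi-bit improvements uniformly over all $p\in(0,1/2]$ and all $s\le \ell_0$ (the borderline regime $p=\Theta(\ln n/n)$ needs both the quadratic suppression of $j\ge 2$ and the $e^{-(n-s)p}$-type acceptance penalty simultaneously), and the harmonic-weight accounting for skipped levels, where a single jump from level $s'$ to a much smaller level costs $\approx\ln(s'/X_{t+1})$ rather than $O(d/s')$ and must be controlled via the rapid decay of $\Pr[j=m]$. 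Both claims are true and provable along the lines you indicate, so I see no gap in the approach. Note finally that neither you nor the paper rigorously disposes of mutation rates above $1/2$ (both defer to the remark that such rates are detrimental); this is a shared, acknowledged gap rather than a defect of your proposal.
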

Note that this lower bound on the optimization time coincides with the optimization time of the standard $(1+1)$ EA if the number of relevant bits $n$ is known. Since it is known that mutation rate $p=1/n$ is the optimal choice for $n$ relevant bits, it is not surprising that adaptive mutation rates cannot achieve smaller runtime.

Interestingly, we propose an unbiased, comparison-based $(1+1)$ EA (see Algorithm~\ref{alg:adaptiveEA_upper_bound}) with adaptive mutation rate policy that optimizes any linear function with unkown number $n$ of relevant bits in time $(1+o(1))e n\ln n$. The idea of Algorithm \ref{alg:adaptiveEA_upper_bound} is the following. Assume we would know a value $m= \Theta(n)$, say for concreteness $n/2 \leq m \leq 2n$. To estimate the exact value of $n$, we start from a random search point $x$, and create an offspring $y$ with a mutation rate of $p = m^{-1-\eps}$. Note that $p = o(1/n)$, so it is very unlikely to flip more than one relevant bit. Hence, we may assume that no  multi bit flip occurs. Then, the probability to flip a relevant bit is $np$, and with probability $\approx 1/2$ it is a $0$-bit. Hence, if we repeat the test $m$ times, always starting with a new random $x$, then we expect to see $m\cdot np/2$ cases with $f(y) > f(x)$. Let $S$ be the number of observed cases with $f(y) > f(x)$. This number is concentrated, so $ m' := 2S/(mp)$ is a reasonable estimate of $n$. Afterwards, we optimize with the standard (1+1) EA with mutation rate $p':=1/m'$. This approach works if we start with an $m$ such that $n/2 \leq m \leq 2n$. However, if $m$ is too small, then the same test will tell us so, since we will get an estimate $m' > 2m$ in this case. Therefore, Algorithm \ref{alg:adaptiveEA_upper_bound} consists of two parts. In the estimation part, in every iteration, $m$ is doubled and an estimate $m'$ of $n$ is computed as described above. Only if $m/2 \leq m' \leq 2m$,  then the optimization part is executed, that is, the $(1+1)$ EA  is run  with $p=1 / m'$ for $10m' \ln m'$ steps. We show the following theorem.


\begin{theorem}[Upper Bound]\label{thm:adaptive_upper_bound}
The optimization time of Algorithm~\ref{alg:adaptiveEA_upper_bound} on any linear function $f$  with $n$ relevant bits is whp at most $(1+ o(1))en\ln n $.
\end{theorem}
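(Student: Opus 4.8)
The plan is to analyse Algorithm~\ref{alg:adaptiveEA_upper_bound} by separating the estimation iterations from the optimization runs. The core step is to understand one estimation iteration with parameter $m$ and mutation rate $p=m^{-1-\eps}$: it performs $m$ independent trials, each from a fresh uniformly random search point $x$, creates an offspring $y$ by flipping each bit with probability $p$, and outputs $m'=2S/(mp)$, where $S$ counts the trials with $f(y)>f(x)$. I would show that $m'$ is sharply concentrated around a deterministic value that equals $(1\pm o(1))n$ when $m=\Theta(n)$ but exceeds $2m$ by a constant factor whenever $m$ is a constant factor below $n/2$. Given this, the acceptance test $m/2\le m'\le 2m$ behaves as intended, and what is left is a cost accounting: the estimation iterations up to the relevant scale cost $o(n\ln n)$ in total (a geometric sum), the optimization runs that may be spuriously triggered at tiny scales cost $o(n\ln n)$ in total, whp no run is triggered at the intermediate scales, and the run at the correct scale $m^\star=\Theta(n)$ finds the optimum within $(1+o(1))en\ln n$ of its steps by Witt's analysis.

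For the estimation step, write $\sigma_i=+1$ if $x_i=0$ and $\sigma_i=-1$ if $x_i=1$, and let $F$ be the random set of relevant bits that flip in a trial; as $F$ is independent of $x$ and $x$ is uniform, $f(y)-f(x)=\sum_{i\in F}\sigma_i w_i$ is a $\pm1$-weighted random sum. The sign flip $\sigma\mapsto-\sigma$ shows $q:=\Pr(f(y)>f(x))=\tfrac12\bigl(1-\Pr(f(y)=f(x))\bigr)\le\tfrac12$; conditioning on $\{F\ne\emptyset\}$ (where one sign still has at most probability $\tfrac12$ of hitting the target value) gives $\Pr(f(y)=f(x))\le\tfrac12\bigl(1+(1-p)^n\bigr)$, while trivially $\Pr(f(y)=f(x))\ge(1-p)^n$; hence $q=\Theta(\min\{1,np\})$, and since flipping exactly one relevant bit always changes $f$ and $\Pr(|F|\ge2)=O((np)^2)$, the sharper estimate $q=(1\pm o(1))\,np/2$ holds whenever $np\to0$. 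As $S$ is a sum of $m$ independent $\mathrm{Bernoulli}(q)$ variables, the Chernoff bounds give $S=(1\pm o(1))\,mq$ with probability $1-e^{-\Omega(mq)}$ as soon as $mq\to\infty$, so on that event $m'=2S/(mp)=(1\pm o(1))\,(2q/p)$, where $2q/p=\Theta(\min\{n,m^{1+\eps}\})$ in general and $2q/p=(1\pm o(1))n$ when $np\to0$.

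I would then split the doubling sequence of scales into three ranges. For $m\le n^{1/4}$ the test may pass spuriously, but then $m'\le 2m\le 2n^{1/4}$ and the optimization run costs only $10m'\ln m'=O(n^{1/4}\ln n)$; summed over the $O(\ln n)$ such scales, together with the estimation cost $\sum_{m\le n^{1/4}}O(m)$, this is $o(n\ln n)$. For $n^{1/4}<m$ with $m$ a constant factor below $n/2$, one has $mq\ge n^{\Omega(1)}$ and $2q/p\ge(1+\Omega(1))\cdot 2m$, so by the concentration of $m'$ the test fails with probability $1-e^{-n^{\Omega(1)}}$; a union bound over the $O(\ln n)$ scales shows that whp \emph{no} optimization run is triggered in this range. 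Finally, let $m^\star$ be the scale in the doubling sequence closest to $n$, so that $m^\star=\Theta(n)$ and, once $m'=(1\pm o(1))n$, the test passes at $m^\star$ with room. At this scale $p=o(1/n)$, so a union bound over the $m^\star$ trials gives that whp no trial flips two relevant bits; on that event $q=(1\pm o(1))\,np/2$ and $mq=\omega(1)$, hence whp $m'=(1\pm o(1))n$, and the optimization part then runs the $(1+1)$ EA with mutation rate $1/m'=(1\pm o(1))/n$ for $10m'\ln m'\ge(1+o(1))en\ln n$ steps. By Witt's drift bound~\eqref{eq:expected_drift} with $p=c/n$, $c=1+o(1)$ --- which yields optimization time $(1\pm o(1))\tfrac{e^c}{c}\,n\ln n=(1\pm o(1))en\ln n$, since $e^c/c$ has value $e$ and vanishing derivative at $c=1$ --- this run reaches the optimum within $(1+o(1))en\ln n$ of its own steps whp, whether started from a random point or the current one. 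Adding the three contributions, the number of fitness evaluations until the optimum is first reached is whp at most $o(n\ln n)+o(n\ln n)+(1+o(1))en\ln n=(1+o(1))en\ln n$; the exceptional events --- a non-concentrated $m'$ at some scale, a spurious pass in the intermediate range, a two-bit flip at $m^\star$, or a failed run at $m^\star$ --- together have probability $o(1)$, which proves the theorem.

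The main obstacle is to make the concentration of $m'$ uniform over \emph{all} scales $m$ below $n$: one must check that for every $m$ in the expensive intermediate range the probability that $m'$ drops enough to satisfy $m'\le 2m$ is super-polynomially small in $n$, so that the union bound over the $\Theta(\ln n)$ scales costs nothing. This hinges on the clean behaviour $q\le\tfrac12$, $q=\Theta(\min\{1,np\})$, $q=(1\pm o(1))np/2$ for a \emph{general} linear function --- not just \onemax --- which is exactly why one reasons through the $\pm1$-weighted sum representation of $f(y)-f(x)$. A secondary, but essential, point is that the estimate must be accurate to a $1\pm o(1)$ factor: a merely constant-factor estimate of $n$ would give a mutation rate of order $1/n$ but with the wrong constant, hence optimization time $\Theta(n\ln n)$ but not $(1+o(1))en\ln n$ (cf.\ the $e^c/c$ dependence in~\cite{witt2013tight}); this forces the choices $p=m^{-1-\eps}$ and $m$ trials, which make $\E[S]=mq\to\infty$ polynomially fast and thus $m'=(1\pm o(1))n$ sharp.
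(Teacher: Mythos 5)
Your proposal is correct and follows essentially the same route as the paper's proof: a three-range decomposition over the doubling scales, the symmetry/pairing argument showing $\Pr(f(y)>f(x)\mid F\neq\emptyset)\geq 1/4$ for general linear functions (and $(1\pm o(1))np/2$ when $np\to 0$), Chernoff concentration of $S$ with a union bound over the $O(\ln n)$ scales, and Witt's drift bound for the final run at mutation rate $(1\pm o(1))/n$. The only differences are cosmetic (cutoff $n^{1/4}$ instead of $\sqrt{n}$ for the cheap phase, and a slightly more explicit discussion of which scale first passes the test).
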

\begin{proof}
Algorithm \ref{alg:adaptiveEA_upper_bound} executes  exponential search (the $m$ is doubled in every round). For each $m$ the estimation part of the algorithm needs $2m$ function evaluations and the optimization part needs $O(m \ln m)$ function evaluations if it is executed and $0$ function evaluations otherwise .  In order to bound the number of function evaluations of Algorithm \ref{alg:adaptiveEA_upper_bound}, we  divide an execution of Algorithm \ref{alg:adaptiveEA_upper_bound} into three phases and use standard concentration inequalities.

We define the first phase by all iterations of the for-loop in line $3$ with $m \leq \sqrt{n}$. Note that there are $O(\ln n)$ iterations of the for-loop since $m$ is doubled in every iteration. We can pessimistically assume that the optimization part is executed in every iteration of the for-loop. Then, the  number of function evaluations in this phase is $O( \sqrt{n}\ln^2 n)$.

The second phase is defined by all iterations of the for-loop in line $3$ with $\sqrt{n}\leq m \leq \frac{n}{100}$. As shown below,  the estimate $m'$ of $n$ will whp never be close to $m$, and therefore, the optimization part  will not be executed. Thus, there are $2m$ function evaluations for each $m$. Since  $m$ grows exponentially, the total number of function evaluations in this phase is $O(n)$. 
It is left to show that whp the optimization part is not executed. Let $\sqrt{n}\leq m \leq \frac{n}{100}$ and $p= \frac{1}{m^{1+\epsilon}}$. 
The variable $S$  counts the number of events $f(y)>f(x)$, where $y$ is the offspring of a randomly initialized bit string $x$. Here, we interpret $S$ as a random variable. 
Consider one mutation step and let $A$ be the event that at least one relevant bit flips. Conditioning on $A$, we claim that $\Pr(f(y)>f(x)) \leq 1/4$. 
Without loss of generality, we can first randomly choose the set $I$ of bits that will be flipped according to the distribution determined by $p$, and then initialize the bit string $x$ randomly. Let $\bar{x}$ be the bit string that differs in every bit from $x$. Let  $y$ and $\bar{y}$ be the mutations of $x$ and $\bar{x}$, respectively, when mutating the bits in $I$. It holds $f(x)-f(y)=-(f(\bar{x})-f(\bar{y}))$. Since  $x$ is chosen uniformly, 
$\Pr\left(f(y)>f(x)\right) =\left(1- \Pr(f(y)=f(x)\right)/2$.
 Now, fix an $i\in I$ and let $\hat{x}$ be the bit string that differs only in position $i$ from $x$. Again denote by $y$ and $\hat{y}$ the mutation of $x$ and $\hat{x}$, respectively, when mutating the bits in $I$. Then, $f(x)=f(y)$ excludes $f(\hat{x})=f(\hat{y})$. Thus, $\Pr(f(y)=f(x))\leq 1/2$ and the claim follows.

 Denote by $q_0$ the probability that $A$ does not happen, that is,  no relevant bit flips. We have argued that $\Pr(f(y)>f(x)) \geq (1-q_0)/4$. It follows that
\begin{align*}
\mathbb{E}[S] &\geq   m \frac{1-q_0}{4}
= \frac{m}{4}(1-(1-p)^n)
 \geq  \frac{m}{4} (1- e^{-np})
\end{align*}
It holds $\mathbb{E}[S] \geq 2 m/ m^{\epsilon}$. To see this, we distinguish two cases.
First, if $np \leq 1/2$, then by Lemma \ref{lemma:e_approx} it holds that $e^{-pn} \leq 1- pn/2$. Therefore, $\mathbb{E}[S] \geq \frac{m}{8}np \geq \frac{m}{8}\frac{100 m}{m^{1+\epsilon}}\geq \frac{2m}{m^{\epsilon}}$.
Second, if $np \geq 1/2$, Then $\mathbb{E}[S] \geq (1-e^{-1/2})/4 \cdot m  \geq 2m/m^{\epsilon}$.  Note that $m'= 2 S m^{\epsilon}$. Since $S$ is the sum of $m$ independent Bernoulli variables,  Lemma \ref{lem:azuma} implies that
\begin{align*}
\Pr(m' \leq 2m ) &= \Pr \left(S \leq \frac{m}{m^{\epsilon}}\right)
\leq \Pr \left(S \leq \mathbb{E}[S] - \frac{m}{m^{\epsilon}}\right)\\
&\leq  e^{-\frac{m^2}{2m^{1+2\epsilon}}}
= e^{- \Omega (n^{0.5-\epsilon})} \ .
\end{align*}

By a union bound argument over $O(\ln n)$ iterations of the for loop in line $3$,  $m' > 2m$ holds whp for all of these iterations in the second phase. 

The third phase is defined by $n/100 \leq m \leq 2 n$. We show that in this phase the estimate $m'$ of $n$ is whp within a $1\pm O(m^{-\epsilon})$ factor of the true value. Therefore, the first time that the optimization part is executed, it holds that $(1- O(m^{-\epsilon}))\frac{1}{n}\leq p\leq (1+ O(m^{-\epsilon}))\frac{1}{n}$. Corollary 4.2 in \cite{witt2013tight} implies that the function $f$ will be optimized after $(1 \pm o(1))e n \ln n $ function evaluations. The number of function evaluations until this happens is $O(n)$, because until then the optimization part was never executed. 
In the sequel we show that the estimate $m'$ for $n$ is  concentrated. Assume that $n/100 < m \leq 2n$. Let $p= m^{-1-\epsilon}$, let $x$ be a random bit string, and let $y$ be an offspring of $x$. 
 Note that the probability of a single bit flip is  $nm^{-1-\epsilon}(1-m^{-1-\epsilon})^{n-1}=nm^{-1-\epsilon}(1-O(m^{-\epsilon}))$, the probability of a multi bit flip bit is $O(m^{-2\epsilon})$, and conditioned on a single bit flip, the probability of $f(y)>f(x)$ is $1/2$. Thus,
\begin{align*}
\Pr(f(y)> f(x))&= \frac{1}{2}\frac{n}{m^{1+\epsilon}}(1+ O(m^{-\epsilon})) 
\end{align*} 
Further, the expected number of times  $f(y)>f(x)$ occurs among $m$ mutations is 
\begin{align*}
\mathbb{E}[S] &= \frac{n}{2 m^{\epsilon}}(1+O(m^{-\epsilon})) \ .
\end{align*}
Since every fitness comparison has effect at most $1$ on $S$, Lemma~\ref{lem:azuma} implies
\begin{align*}
& \Pr((1-m^{-\epsilon})n \leq m' \leq (1+m^{-\epsilon} )n)\\
&= \Pr\left((1-m^{-\epsilon})\frac{n}{2m^{\epsilon}} \leq \frac{m'}{2m^{\epsilon}} \leq (1+m^{-\epsilon} )\frac{n}{2m^{\epsilon}}\right)\\
&= \Pr\left((1-O(m^{-\epsilon}))\mathbb{E}[S] \leq S \leq (1+O(m^{-\epsilon}))\mathbb{E}[S]\right)\\
&\geq  1-2e^{\Omega(m^{1-4 \epsilon})} \ . 
\end{align*} \qed
\end{proof}

\begin{algorithm2e}%
	$m \leftarrow 1$ \;
	$\epsilon \leftarrow 0.01$ \;
	
	\For{$i=1,2, \ldots$}{
		$m \leftarrow 2m$\;
		[ \textsf{Estimate whether $m/2 \leq n \leq 2m$: } ] \\
		$p \leftarrow \frac{1}{m^{1+\epsilon}}$\;
		$S \leftarrow 0$\;
		\For{$j=1, \ldots, m$}{
			$x \in _{u.a.r.} \{0,1\}^{\mathbb{N}}$\;
			$y \leftarrow \textsc{Mutate}(x, p)$\;
			\If{$f(y)> f(x)$}{
				$S \leftarrow S +1 $\;
			}
		}
		$m' \leftarrow 2 S m^{\epsilon}$\;
		\If{$\frac{m}{2}\leq m' \leq 2m$}{
			[ \textsf{Optimize $f$ }: ]\\
			\textit{Run Algorithm \ref{alg:1+1_EA} with $p =\frac{1}{m'}$ for $ 10m'\ln m'$ steps;} \\
		}
	}
	\caption{Adaptive setup: The $(1+1)$ EA with adaptive choice of mutation rates $\vec{p}$  minimizing a pseudo-Boolean function  $f\colon\{0,1\}^{\mathbb{N}}\to\mathbb{R}$ with a finite number $n$ of relevant bits.}
	\label{alg:adaptiveEA_upper_bound}
\end{algorithm2e}

\section{Initial Segment Uncertainty Model  }\label{sec:position_dependent_mutation_probs}
In this section we analyze the runtime of the (1+1) EA with position dependent mutation rates $\vec{p}$ on the \textsc{OneMax} function with support on the initial segment $[n]$. For every summable and monotone decreasing sequence $\vec{p}$ the expected runtime is upper bounded by $O(\ln(n)/ p_n)$, cf. Theorem 14 in  \cite{doerr2015solving}. Note that it is advantageous for this upper bound to take a summable sequence that decays as slowly as possible. However, it is known that there exist no slowest decaying summable sequence (cf. Section 2.6 in \cite{doerr2017unknown}).

Our next theorem states a lower bound on the runtime that is asymptotically as tight as possible. To reduce the technicality of the proof, we will assume that the position dependent mutation rates $\vec{p}$ are monotonically decreasing and smaller than $\frac{1}{2}$, that is, $\frac{1}{2}\geq p_1\geq p_2\geq \ldots $. Further, we define $S_n = S_n(\vec{p}) = \sum_{i=1}^n p_i$ and $S = S(\vec p)=\lim _{n \rightarrow \infty}S_n$.

\begin{theorem}\label{thm:position_dependent_lower_bound}
Let $\vec{p}$ be a monotone decreasing sequence with $p_1 \leq 1/2$, and let $\vec{q}$ be an arbitrary non-summmable sequence. Then there is a constant $c>0$ such that for infinitely many $n \in \N$ the expected optimization time of the (1+1) EA with position dependent mutation probabilities $\vec p$ on the \textsc{OneMax} function with support on the initial segment $[n]$ is at least $c\ln(n)/q_n$.
\end{theorem}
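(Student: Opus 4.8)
The plan is to mirror the structure of the scheduled lower bound (Lemma~\ref{thm:onemax_lower_bound}), with the role of ``time'' $t$ replaced by ``bit position'' $i$. The key observation is that in the ISU model, position $i$ is flipped in each round with the fixed probability $p_i$, so the number of rounds needed to ``catch'' bit $i$ by a single bit flip is governed by $p_i$ much as the scheduled case was governed by $p_t$. First, as in Lemma~\ref{lemma:start_with_few_one_bits}, I would use Chernoff bounds to argue that whp the initial string has $(1\pm o(1))n/2$ one-bits among the first $n$ positions, and that whp the process passes through a state with $\Theta(n/\ln^2 n)$ relevant non-optimized bits; these are the bits we will track. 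Second, I would set a target runtime $T'_n = c' \ln(n)/q_n$ for a suitable small constant $c' < c$, and count the number $Z_n$ of single bit flips (among relevant bits) up to time $T'_n$. Each relevant bit $i$ flips in a given round with probability $p_i$, so the expected number of single-bit-flip events landing on bit $i$ within $T'_n$ rounds is roughly $T'_n p_i (1-p_i)^{\cdots} \approx T'_n p_i$, provided $p_i$ is small. Since the relevant bits can be assumed to be (close to) a late initial segment, and since $\sum_i p_i$ may diverge, the delicate point is to choose $n$ so that the total ``single-bit coverage'' $\sum_{i} T'_n p_i$ restricted to the last $\Theta(n/\ln^2 n)$ bits is at most $(1-\delta) n \ln n$; this is exactly where non-summability of $\vec q$ versus the decay of $\vec p$ must be exploited.

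The heart of the argument is the analogue of Lemma~\ref{lemma:concentration_one_bit_flips}: showing that for infinitely many $n$ one has $\mathbb{E}[Z_n] \le (1-\delta) n \ln n$, hence (by Azuma, Lemma~\ref{lem:azuma}) $Z_n$ is concentrated below $(1-\delta/2)n\ln n$ whp. As in the scheduled proof, I do not expect this bound to hold for \emph{all} $n$, so I would again pass to a weighted average $B = \frac{1}{A}\sum_{n=N}^M \frac{1}{n^2} Z_n$ over a dyadic-type range, swap the order of summation (over $n$ and over rounds $t \le T'_n$), and bound the contribution of each round. The contribution of a single round to $B$ is $\sum_{i \text{ relevant}} \sum_n \frac{1}{n^2} \Pr(\text{single flip on } i)$, and after approximating sums by integrals (Lemma~\ref{lemma:int_approx_sum}) and using $1-p_i \ge e^{-2p_i}$ (Lemma~\ref{lemma:e_approx}), this should reduce to an integral expression controlled by $\int_0^{c''} \frac{a}{z} e^{-a/z}\,\mathrm dz$ just as in Remark~\ref{remark:alpha_beta}; choosing the constant $c$ in the theorem small enough makes this strictly below $1$. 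The key structural input replacing the monotonicity in the scheduled case is that $\vec p$ is monotone decreasing while $\vec q$ is non-summable: one uses that $S_n = \sum_{i=1}^n p_i$ grows, and picks the infinitely many ``good'' $n$ to be those where $S_n / (n \ln n)$ (or the relevant ratio involving $q_n$) is comparatively small, which a Cesàro-type / averaging argument guarantees must happen infinitely often.

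Finally, given that $Z_n \le (1-\delta/2) n \ln n$ whp, I would conclude exactly as in the proof of Lemma~\ref{thm:onemax_lower_bound}: define the indicators $V_i$ that the $i$-th of the tracked $\ell_0 = \Theta(n/\ln^2 n)$ one-bits is still one at time $T'_n$ (padding with extra random single bit flips up to the deterministic count $(1-\delta/2)n\ln n$ on the high-probability event, as in that proof), bound the probability a given bit changes by a single flip ($1/n$ each) and by a multi bit flip (negligible, using a claim analogous to Claim~\ref{lemma_qk_bounds} — here $q_{k,\ell} = O(\ell/n^2)$ still holds because the flip probabilities $p_i \le 1/2$ are bounded and the counting argument over which bits flip goes through), so $\mathbb{E}[V] \ge n^{\delta/2}/\ln^2 n \cdot (1-o(1))$, and show $\mathrm{Var}(V) = O(\mathbb{E}[V]^2/\ln n)$ by the same second-moment computation. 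Chebyshev then gives $\Pr(V=0) = O(1/\ln n) = o(1)$, so whp some tracked bit is still one at time $T'_n$, meaning \textsc{OneMax} is not optimized by then; since this holds for infinitely many $n$ and $T'_n = \Omega(\ln n / q_n)$, the theorem follows. The main obstacle, as in the scheduled case, is the middle step: correctly identifying which infinitely many $n$ are ``good'' and verifying the integral bound with the $p_i$ being an essentially arbitrary monotone summable-or-not sequence rather than the clean $\alpha \ln t / t$ schedule — in particular handling the boundary between positions with $p_i$ ``large'' (contributing a geometric tail, analogous to $B_1$) and positions with $p_i$ small (the integral estimate, analogous to $B_2$).
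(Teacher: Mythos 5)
Your proposal transplants the scheduled-case machinery (weighted average of single-bit-flip counts over $n$, then the second-moment method on surviving one-bits) to the ISU model, but this transplant breaks at the two points where the ISU model genuinely differs, and these are exactly the points the paper's proof is built around. First, the symmetry you invoke is absent: in the scheduled case all relevant bits flip with the same rate $p_t$, so conditioned on a single bit flip the flipped position is uniform, giving $\Pr(\text{bit } i \text{ changes in a single bit flip})=1/n$ and the bound $q_{k,\ell}=O(\ell/n^2)$ of Claim~\ref{lemma_qk_bounds}. In the ISU model bit $i$ flips with its own rate $p_i$, so the flipped position in a single bit flip is distributed proportionally to $p_i/(1-p_i)$ and the set of flipped bits in a multi bit flip is biased toward large-$p_i$ positions; your ``$1/n$ each'' and ``the counting argument goes through'' are unjustified. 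Second, and more fundamentally, non-summability of $\vec q$ is only relevant when $\vec p$ itself may be non-summable, in which case $S_n=\sum_{i\le n}p_i\to\infty$ and a typical round flips $\Theta(S_n)$ bits: single bit flips over $[n]$ then occur with probability only about $S_ne^{-S_n}$, and the dominant mechanism fixing a hard bit is a multi-bit flip in which flips among the already-mixed easy positions compensate. Counting single bit flips, which is the heart of your middle step, then gives essentially no control. The paper's Lemma~\ref{lemma:pos_dep_non_summable} spends nearly all of its effort on this case: it extracts by pigeonhole a set $B$ of $n^{1-2\delta}$ positions with mutually comparable rates, analyzes the drift of the number of one-bits in $B$ via the decomposition $D_1,\dots,D_6$ according to the sign of $A_{01}-A_{10}$ in the complement, shows the positive contributions are either $O(\epsilon|B|p_BS_ne^{-S_n})$ or dominated by negative terms, and applies the lower-bound multiplicative drift theorem to get $\Omega(\ln(n)\,e^{S_n}/(S_np_{\lceil n/2\rceil}))$. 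Nothing in your outline produces the factor $e^{S_n}/S_n$, which is where the difficulty lies.

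The final link to $\vec q$ is also not an averaging-over-$n$ argument of the $\sum_n Z_n/n^2$ type. The paper proves a runtime bound $\Omega(\ln(n)M_n)$ with $M_n=\min\{e^{S_n}/(S_np_{\lceil n/2\rceil}),\,n^{1.01}/\ln n\}$ valid for \emph{all} $n$ (Lemmas~\ref{lemma:pos_dep_summable} and~\ref{lemma:pos_dep_non_summable}), and then shows separately (Lemma~\ref{lemma:bound_runtime_by_non_summable_q}) that $\sum_n 1/M_n<\infty$ for every monotone $\vec p$ with $p_1\le 1/2$, whence $M_n\ge 1/q_n$ for infinitely many $n$ whenever $\vec q$ is non-summable. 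Your ``Cesàro-type / averaging argument'' would need to prove precisely this summability statement, and your choice $T_n'=c'\ln(n)/q_n$ presupposes it; as written this is a gap, not a routine adaptation.
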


We denote the $k$-fold iterative logarithm by $\ln^{(k)}(x)$, where we truncate any values smaller than $1$ to avoid negative or undefined terms. I.e., we define iteratively $\ln^{(0)}(x) = \max\{1,x\}$ and $\ln^{(k)}(x) = \max\{1, \ln(\ln^{(k-1)}(x))\}$. It is known~\cite[Lemma 2.4]{doerr2017unknown} that the sequence $p_n = 1/ ( n \prod_{j=1}^{\infty} \ln^{(j)}(n))$ is non-summable. Thus, we obtain the following lower bound.
\begin{corollary}
Let $\vec{p}$ be a monotone decreasing sequence with $p_1 \leq 1/2$. Then there is $c>0$ such that the expected optimization time of the (1+1) EA with position dependent mutation probabilities $\vec p$ on the \textsc{OneMax} function with support on the initial segment $[n]$ is at least $c n \ln ^2(n) \prod_{j=2}^{\infty} \ln^{(j)}(n)$ for infinitely many $n\in \N$.
\end{corollary}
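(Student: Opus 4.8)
The plan is to obtain the corollary directly from Theorem~\ref{thm:position_dependent_lower_bound}. I would apply the theorem with the specific non-summable sequence $q_n := \bigl(n\prod_{j=1}^{\infty}\ln^{(j)}(n)\bigr)^{-1}$: for every $n$ all but finitely many factors of the product equal $1$ by the truncation convention, so $\vec q$ is well defined, positive and monotonically decreasing, and it is non-summable by \cite[Lemma~2.4]{doerr2017unknown}. Since $\vec p$ is assumed monotone decreasing with $p_1\le 1/2$, Theorem~\ref{thm:position_dependent_lower_bound} yields a constant $c>0$ such that for infinitely many $n$ the expected optimization time $\mathbb{E}[T_n]$ of the $(1+1)$ EA with mutation rates $\vec p$ on \onemax with support $[n]$ is at least $c\ln(n)/q_n$.

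It then only remains to rewrite $\ln(n)/q_n$. For $n$ large enough that $\ln n\ge 1$ we have $\ln^{(1)}(n)=\ln(n)$, hence
\[
\frac{\ln n}{q_n}=\ln(n)\cdot n\prod_{j=1}^{\infty}\ln^{(j)}(n)=n\,\ln(n)\cdot\ln(n)\cdot\prod_{j=2}^{\infty}\ln^{(j)}(n)=n\,\ln^{2}(n)\prod_{j=2}^{\infty}\ln^{(j)}(n),
\]
so $\mathbb{E}[T_n]\ge c\,n\ln^{2}(n)\prod_{j=2}^{\infty}\ln^{(j)}(n)$ for infinitely many $n$, which is the corollary with the same $c$. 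At the level of the corollary there is essentially no obstacle; all of the work sits in Theorem~\ref{thm:position_dependent_lower_bound} and in the non-summability statement \cite[Lemma~2.4]{doerr2017unknown}.

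For completeness I indicate how I would prove Theorem~\ref{thm:position_dependent_lower_bound} itself, since that is where the content lies; the skeleton mirrors the scheduled-setup lower bound (Lemmas~\ref{lemma:start_with_few_one_bits}--\ref{lemma:concentration_one_bit_flips}, Claim~\ref{lemma_qk_bounds} and the proof of Lemma~\ref{thm:onemax_lower_bound}). First, by a Chernoff bound at initialization, a one-step concentration estimate (Lemma~\ref{lem:azuma}) for the number of relevant bits flipped in a round, and a union bound, whp the run passes through a configuration with $\ell_0=\Theta(n/\ln^{2}n)$ relevant one-bits; from that point the number of relevant one-bits is non-increasing, and an analogue of Claim~\ref{lemma_qk_bounds} shows that a round flipping two or more relevant bits rarely clears a one-bit, so progress comes almost exclusively from rounds with exactly one relevant flip. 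The per-round probability that bit $i$ is that unique flipping bit is $\pi_i=\tfrac{p_i}{1-p_i}\prod_{j=1}^{n}(1-p_j)$; in particular $\pi_n=p_n\prod_{j=1}^{n-1}(1-p_j)$ and $\pi_i\ge\pi_n$ for all $i\le n$. For the function to be optimized by a candidate time $T$ every surviving one-bit must receive its own such round, so a coupon-collector estimate over a block of bits near index $n$ whose rates lie within a constant factor of $p_n$ — controlled by a second-moment computation as in the proof of Lemma~\ref{thm:onemax_lower_bound} — lower-bounds $\mathbb{E}[T_n]$ by a quantity of order $\ln(n)/\pi_n$. Finally one observes $\sum_{n}\pi_n\le 1$, because $\pi_n=\prod_{j<n}(1-p_j)-\prod_{j\le n}(1-p_j)$ telescopes; hence the sequence $\pi_n$ is summable, and since $\vec q$ is non-summable we must have $q_n\ge\pi_n$ for infinitely many $n$, and for those $n$ the bound $\mathbb{E}[T_n]\gtrsim\ln(n)/\pi_n$ becomes $\mathbb{E}[T_n]\ge c\ln(n)/q_n$.

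The main obstacle in this program is the coupon-collector/block step together with the factor $\prod_{j\le n}(1-p_j)=e^{-\Theta(S_n)}$, $S_n=\sum_{j\le n}p_j$. One has to argue that a constant fraction of the block near $n$ really consists of one-bits at the start of the final phase — plausible because the hardest-to-flip bits are exactly the ones most likely to still be $1$, but it requires a careful tracking of which bits survive — and one has to dispatch separately the regimes in which $\vec p$ decays so slowly, or so irregularly near $n$, that a typical round flips $\omega(1)$ relevant bits and is rejected (there the fitness essentially cannot decrease, the runtime is trivially super-polynomial, and the claimed bound $c\ln(n)/q_n$ holds outright) or in which $\vec p$ decays so fast that only $n^{o(1)}$ bits near $n$ have comparable rates (there the $\ln n$ coupon factor may be absent, but then $\pi_n$ is small enough that the summability argument still closes). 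Making the interplay of these cases quantitative, uniformly in $\vec p$, is the delicate part of the proof.
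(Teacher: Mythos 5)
Your derivation of the corollary is correct and is exactly the paper's: instantiate Theorem~\ref{thm:position_dependent_lower_bound} with the non-summable sequence $q_n = 1/\bigl(n\prod_{j=1}^{\infty}\ln^{(j)}(n)\bigr)$ from \cite[Lemma 2.4]{doerr2017unknown} and rewrite $\ln(n)/q_n$ as $n\ln^2(n)\prod_{j=2}^{\infty}\ln^{(j)}(n)$. Your appended sketch of Theorem~\ref{thm:position_dependent_lower_bound} itself is not needed for this corollary (and the paper's actual route there goes through a multiplicative drift argument rather than a coupon-collector block argument), but that does not affect the correctness of the corollary's proof.
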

Note that this lower bound is tight in the sense that for any $k\geq 0$ the summable sequence $p_n\coloneqq 1/ ( n \prod_{j=1}^{k} \ln^{(j)}(n))$ achieves an optimization time of $O( n \ln ^2 (n)\prod_{j=2}^{k} \ln^{(j)}(n))$ as shown in \cite{doerr2015solving}.\smallskip

\subsection{Proof of Lower Bound}
In the following, we assume that the position dependent mutation rates $\vec{p}$ are monotonically decreasing and smaller than $1/2$. Further, we define $S_n = \sum_{i=1}^n p_i$ and $S=\lim _{n \rightarrow \infty}S_n$.

Before we come to the technical details, we first give an overview over the proof. The crucial step will be to show that the expected runtime is at least $\Omega(\ln(n)M_n)$, where $M_n := \min\{e^{S_n}/(S_np_{\lceil n/2 \rceil}), n^{1.01}/\ln(n)\}$. This will be done in Lemma~\ref{lemma:pos_dep_summable} for the case that $\vec p$ is summable, where the formula can be simplified. The hard part of the proof is to show this bound for non-summable $\vec p$, which is done in Lemma~\ref{lemma:pos_dep_non_summable}. Afterwards, we show by a rather short argument in Lemma~\ref{lemma:bound_runtime_by_non_summable_q} that the inverse of the sequence $M_n$ is summable for every monotone sequence $\vec p$, and that for any non-summable sequence $\vec q$ we have $M_n \geq 1/q_n$ for infinitely many values of $n$.

We start with the lower bound on the optimization time. The first lemma assumes that $\vec p$ is summable, and it follows rather easily from the fact that every bit in $\{ \lceil n/2 \rceil, \ldots, n\}$ that is initialized with $1$ needs to flip at least once. 
\begin{lemma}\label{lemma:pos_dep_summable}
Let $\vec{p}$ be such that $S< \infty$. Then the expected optimization time of the (1+1) EA with position dependent mutation rates $\vec{p}$ on the $\textsc{OneMax}$ function with support on the initial segment $[n]$ is at least $\Omega(\ln(n)/ p_{\lceil n/2 \rceil})$.
\end{lemma}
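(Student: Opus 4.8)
The plan is to use the trivial but effective observation that, for the \textsc{OneMax} function on the initial segment $[n]$, the algorithm can only reach the optimum after \emph{every} relevant bit that started as a $1$ has flipped at least once. I would focus on the bits in the block $J \coloneqq \{\lceil n/2 \rceil, \ldots, n\}$, all of which have mutation rate at most $p_{\lceil n/2 \rceil}$ since $\vec p$ is monotonically decreasing. First I would note that with probability $1$ (or at least with probability bounded away from $0$, which suffices for a lower bound on the \emph{expected} runtime) a constant fraction of the $|J| = \Theta(n)$ bits in $J$ are initialized to $1$; by a Chernoff bound, whp at least, say, $n/4$ of them are. Call this set of bits $J_1$, with $|J_1| = \Omega(n)$.

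The key step is then a coupon-collector-type argument restricted to $J_1$. Let $T$ be the optimization time. For each bit $i \in J_1$, the probability that bit $i$ has \emph{not} flipped in a given round is at least $1 - p_i \ge 1 - p_{\lceil n/2\rceil}$, and since all flips across rounds are independent, the probability that bit $i$ has not flipped within the first $t$ rounds is at least $(1 - p_{\lceil n/2\rceil})^t$. Hence, for $t = \lfloor c\ln(n)/p_{\lceil n/2\rceil}\rfloor$ with a suitably small constant $c>0$, this probability is at least $e^{-2 p_{\lceil n/2\rceil} t} \ge n^{-2c}$ by Lemma~\ref{lemma:e_approx} (using $p_{\lceil n/2\rceil}\le 1/2$). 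Choosing $c < 1/4$, the expected number of bits in $J_1$ that have not yet flipped after $t$ rounds is at least $|J_1| \cdot n^{-2c} = \Omega(n^{1-2c}) = \omega(1)$, so by Markov's inequality (applied to the count of bits that \emph{have} flipped) at least one bit in $J_1$ has not flipped after $t$ rounds with probability bounded away from $0$; in that event $T > t$. Combining, $\E[T] \ge t \cdot \Pr(T > t) = \Omega(\ln(n)/p_{\lceil n/2\rceil})$, which is the claim.

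The main obstacle here is essentially bookkeeping rather than conceptual: one must be careful that the event ``a constant fraction of $J$ is initialized to $1$'' and the event ``some bit in $J_1$ has not flipped by time $t$'' are handled so that their intersection still has constant probability — this is immediate since the initialization and the subsequent mutation randomness are independent, and both events individually have probability bounded away from $0$. A second minor point is that one should phrase everything in terms of the \emph{expected} runtime (as the statement does), so it is enough to exhibit a constant-probability event forcing $T > t$; there is no need for a high-probability statement here. No elitism or fitness-tracking subtleties arise, because the argument only uses the structural fact that each relevant $1$-bit must flip at least once before the all-zero string is reached, which holds for any $(1+1)$ EA regardless of the selection rule.
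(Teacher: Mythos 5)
Your overall strategy is exactly the one the paper uses: restrict attention to the bits in $\{\lceil n/2\rceil,\ldots,n\}$, whose mutation rates are all at most $p_{\lceil n/2\rceil}$ by monotonicity, note that a constant fraction of them start as $1$-bits, and observe that each such bit must flip at least once before the optimum can be reached, so that surviving unflipped bits at time $t=\Theta(\ln(n)/p_{\lceil n/2\rceil})$ force $T>t$. However, your final step has a genuine gap. From $\E[U]=\Omega(n^{1-2c})=\omega(1)$, where $U$ is the number of bits of $J_1$ not yet flipped by time $t$, Markov's inequality (in either direction, applied to $U$ or to the count $|J_1|-U$ of flipped bits) only yields
\[
\Pr(U\geq 1)\;\geq\; \frac{\E[U]}{|J_1|}\;\geq\; n^{-2c},
\]
which tends to $0$ rather than being bounded away from $0$. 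Plugged into $\E[T]\geq t\cdot\Pr(T>t)$ this gives only $\Omega\bigl(n^{-2c}\ln(n)/p_{\lceil n/2\rceil}\bigr)$, strictly weaker than the claim. A first moment of order $\omega(1)$ does not by itself imply that $U\geq 1$ with constant probability.

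The fix is immediate and is what the paper does: the events ``bit $i$ never flips up to time $t$'' are \emph{independent across distinct positions} $i$, since the mutation operator flips each bit independently in every round. Hence
\[
\Pr(U=0)\;\leq\;\bigl(1-n^{-2c}\bigr)^{|J_1|}\;\leq\; e^{-|J_1|\,n^{-2c}}\;=\;e^{-\Omega(n^{1-2c})}\;=\;o(1),
\]
so $\Pr(T>t)=1-o(1)$ and the expected runtime bound follows. You already have every ingredient for this (you even invoke independence, but only across rounds for a single bit, not across bits); replacing the Markov step by this product bound makes your argument coincide with the paper's proof, which takes $T_n=\ln(n)/(4p_{\lceil n/2\rceil})$, lower bounds the per-bit survival probability by $n^{-0.5}$, and concludes $\Pr(\text{all of } I \text{ flipped})\leq (1-n^{-0.5})^{n/6}=o(1)$.
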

\begin{proof}
Let $T_n=\frac{\ln n}{4p_{\lceil n/2 \rceil }}$ and let $I \subset \{\lceil n/2 \rceil,  \ldots, n\}$ be the set of bits that are initialized by $1$. By the Chernoff-Hoeffding bounds,  there are whp at least $n/6$ such bits. The probability that bit $i$ is never flipped until time $T_{n}$ is $(1-p_i)^{T_n}\geq e^{-2p_{i}T_{n}}\geq e^{-\ln(n)/2}=n^{-0.5}$. Since all bits are flipped independently, it holds that the probability that all bits in $I$ are flipped until time $T$ is at most $(1- n^{-0.5})^{ n /6}\leq e^{- n^{0.5}/6}=o(1)$. It follows that the expected optimization time is $\Omega(\ln(n)/p_{\lceil n/2 \rceil })$. \qed
\end{proof}

A similar bound holds for non summable sequences, but is much harder to prove. Note that the first term $\frac{\ln (n)e^{S_n}}{S_n p_{\lceil n/2 \rceil}}$ in the bound in Lemma~\ref{lemma:pos_dep_non_summable} generalizes the bound in Lemma~\ref{lemma:pos_dep_summable}, since there we have $S_n = \Theta(1)$.
\begin{lemma}\label{lemma:pos_dep_non_summable}
Let $\vec{p}$ be such that $S= \infty$. The optimization time of the (1+1) EA with position dependent mutation rates $\vec{p}$ on the $\textsc{OneMax}$ function with support on the initial segment $[n]$ is at least $\Omega(\min\{ \frac{\ln (n)e^{S_n}}{S_n p_{\lceil n/2 \rceil}}, n^{1.01}\})$.
\end{lemma}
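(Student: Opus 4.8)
The plan is to track the number of relevant one-bits $C_t := f(x^t)$. Since an offspring is accepted only if the fitness does not increase, $C_t$ is non-increasing, so the optimization time is the first $t$ with $C_t = 0$; and by a Chernoff--Hoeffding bound on the random initialisation, whp $C_0 \ge n/3$. Writing $\hat M := \ln(n)\, e^{S_n}/(S_n\, p_{\lceil n/2\rceil})$, it suffices to show that whp, hence in expectation, the non-increasing walk $(C_t)$ does not reach $0$ within $\Theta(\min\{\hat M, n^{1.01}\})$ rounds.

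The new ingredient over Lemma~\ref{lemma:pos_dep_summable} is a one-step estimate showing that \emph{improving} steps become scarce near the optimum. Conditioning on a configuration with one-bit positions $P$ and writing $\Sigma_P := \sum_{i\in P} p_i \le S_{|P|}$ (monotonicity of $\vec p$), a step that strictly decreases $C$ must flip strictly more one-bits than zero-bits; since the zero-bits carry rate-mass $S_n - \Sigma_P$, bounding the probability that few zero-bits flip by its Poisson analogue yields, for the number $a$ of flipped one-bits,
\begin{align*}
\mathbb E\bigl[C_t - C_{t+1}\mid x^t\bigr]\ \le\ O\!\bigl((\Sigma_P + \Sigma_P^{2} S_n)\, e^{\Sigma_P - S_n}\bigr).
\end{align*}
In particular, once $C_t$ has dropped below a constant-size threshold $m^\ast$ with $S_{m^\ast}\le 1$, an improving step has probability only $O(S_n e^{-S_n})$; this scarcity of progress is the source of the extra factor $e^{S_n}/S_n$ over the coupon-collector cost of Lemma~\ref{lemma:pos_dep_summable}.

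Next I would split into two regimes. If $\hat M \le n^{1.01}$, one first checks (using $p_{\lceil n/2\rceil}\le 2S_n/n$, which follows from monotonicity) that $S_n = O(\ln n)$, hence whp no step within the first $n^{1.01}$ rounds flips more than $\kappa := O(\ln n)$ bits (a union bound via Lemma~\ref{lem:azuma}). The proof of Lemma~\ref{lemma:pos_dep_summable} in fact works verbatim for every monotone $\vec p$: the $\Theta(n)$ bits of $\{\lceil n/2\rceil,\dots,n\}$ that start at $1$ must each be flipped, which needs $\Omega(\ln(n)/p_{\lceil n/2\rceil})$ rounds. I would refine this by replacing ``bit $i$ is flipped'' with ``bit $i$ is flipped in an accepted step'' and feeding in the drift estimate to show that, during the dominant part of the run in which $\Sigma_P$ is $O(1)$, such an event for a target bit $i$ has probability only $O(p_i S_n e^{-S_n})$ per round --- together with the multiplicative-drift lower bound (Theorem~\ref{thm:multiplicative-drift}) applied to a $\kappa$-truncated endgame process with $\delta = \Theta(S_n e^{-S_n})$ --- so that the coupon-collector and scarcity effects multiply to $\Omega(\hat M)$. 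If instead $\hat M > n^{1.01}$, then either $p_{\lceil n/2\rceil}\le \ln(n)/n^{1.01}$, in which case the plain coupon-collector bound alone already gives $\Omega(n^{1.01})$; or $S_n$ is large, and then summing the drift estimate over the whole run and using concentration of the total fitness decrease $\sum_{t}(C_{t-1}-C_t) = C_0 - C_T$ (Lemma~\ref{lem:azuma}) shows $0$ cannot be reached in fewer than $\Omega(n^{1.01})$ rounds.

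The main obstacle is the reconciliation in the first regime. The drift estimate certifies scarce progress only while $\Sigma_P = O(1)$, which holds only late in the run, whereas the coupon-collector argument needs the full pool of $\Theta(n)$ target bits to produce its $\ln n$ factor, and a single ``large'' accepted step early on can eliminate many target bits at once. One therefore has to carry out a multi-phase analysis: bound how many target bits are still unflipped when $C_t$ first drops into the small-$\Sigma_P$ regime (a polynomial fraction of $n$, so its logarithm is still $(1-o(1))\ln n$), absorb the contribution of the $O(\ln n)$ high-rate positions (where $\Sigma_P$ need not be small, but where the rarity of ``un-optimising'' a zero-bit still makes the process crawl), and keep the step-size bound $\kappa$ required by Theorem~\ref{thm:multiplicative-drift} valid throughout. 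Arranging the constants so that the product of the coupon-collector and scarcity factors is $\Theta(\hat M)$ rather than $\Theta(e^{c S_n})$ for some $c<1$ is the delicate point.
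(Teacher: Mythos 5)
Your high-level reading of the difficulty is right (progress must become scarce like $S_n e^{-S_n}$ near the optimum, and the hard part is gluing this to a coupon-collector effect over the second-half bits), but the route you propose has a gap that I do not see how to close, and it is exactly the point where the paper's proof does something different. Your one-step estimate bounds the drift of the global fitness $C_t$ in terms of $\Sigma_P$, the rate-mass sitting on the current one-bits. To feed this into Theorem~\ref{thm:multiplicative-drift} you need $\mathbb{E}[C_t-C_{t+1}\mid C_t=s]\leq \delta s$ with $\delta=O(p_{\lceil n/2\rceil}S_ne^{-S_n})$ for \emph{every} configuration with $C_t=s$; but the worst such configuration places the $s$ one-bits on positions $1,\dots,s$, where $\Sigma_P=S_s$ can be as large as $s\,p_1\gg s\,p_{\lceil n/2\rceil}$. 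So the drift theorem applied to $C_t$ only yields $\Omega(\ln(n)e^{S_n}/(S_n p_1))$ — indeed your own ``$\delta=\Theta(S_ne^{-S_n})$'' gives only $\Omega(\ln(n)e^{S_n}/S_n)$ — and the factor $1/p_{\lceil n/2\rceil}$, which is the whole content of the bound, is lost. The paper recovers it by \emph{not} tracking the global fitness: it uses the pigeonhole principle to extract a set $B\subset[\lceil n/2\rceil,n]$ of size $n^{1-2\delta}$ on which all rates agree up to a factor $2$ and satisfy $p_i\leq 2p_{\lceil n/2\rceil}$ and $\sum_{i\in B}p_i=o(1)$, and applies the multiplicative drift lower bound to $B_1(t)$ alone (from $s_0=\Theta(n^{0.5+2\delta})$ down to $s_{\min}=n^{0.5+\delta}$, which also supplies the $\ln n$ factor without any separate coupon-collector argument).

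The second gap is that your plan conditions on the ``small-$\Sigma_P$ regime'' and asserts that $n^{1-o(1)}$ target bits are still unoptimized when it begins. This is not implied by the flip probabilities alone (a target bit can be flipped $\Theta(n^{0.01}\ln n)$ times within $n^{1.01}$ rounds), so it must come from the acceptance dynamics being slow — which is what you are trying to prove, a circularity; moreover the survival events of different target bits are correlated through the global acceptance events, so the coupon-collector lower bound would additionally need a second-moment argument under this conditioning. The paper avoids both issues because its drift bound~\eqref{eq:drift-of-Delta-t} is a per-configuration statement that holds \emph{for every state of $A=[n]\setminus B$}, including badly optimized ones. Establishing that is the genuinely hard step you have no counterpart for: when $A$ is far from optimal, the event ``many bits improve in $A$ and one $1$-bit flips in $B$'' contributes $\Theta(|B_1|p_B)$ positively to the drift of $B_1$ — vastly more than $|B_1|p_BS_ne^{-S_n}$ — and the proof must show this is cancelled by the even larger negative contribution of ``the same flips in $A$ and one $0$-bit flips in $B$'' (the decomposition into $D_1,\dots,D_6$ and the claim $D_1+D_3+D_5\leq 0$). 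Without either this cancellation argument or a replacement for it, the scarcity estimate cannot be made unconditional, and the proposal does not go through.
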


\begin{proof}
Let us first sketch the proof and give some intuition for the problem.
First, we observe that the bound is easy in some cases: if $p_{n} \leq n^{-1-\delta}$ for $\delta := 0.01$, then it takes a long time to flip the $n$-th bit, and if $S_n \geq 2 \ln n$, then it takes a long time to make the very last step towards the optimum, because we typically flip many bits at once. So we assume that none of these cases happen.
Then, we argue by pigeonhole principle that there is a medium sized set $B$ of bits ($|B| = n^{1-2\delta}$) such that all $p_i$, $i\in B$, differ by at most a factor of $2$, and such that $p_i = O(p_{n/2})$. In particular, it can be shown that $\sum_{i\in B} p_i = o(1)$. We consider the case that $B$ is close to optimal, i.e., that the number $B_1(t)$ of $1$-bits in $B$ is at most $\eps |B|$ for some small $\epsilon$. Then, we study the drift  $\Delta _t := B_1(t)-B_1(t+1)$. The main part of the proof is to show that $\E[\Delta_t] \leq C\epsilon|B|p_{n/2} S_n e^{-S_n}$ for a constant $C$, from which the theorem follows by a lower bound multiplicative drift theorem~\cite{witt2013tight}.

Note that the term $C\epsilon|B|p_{n/2} S_n e^{-S_n}$ roughly resembles the probability that exactly one $1$-bit is flipped in $B$ (probability $\approx \epsilon|B|p_{n/2}$) and at most one bit is flipped in $A\coloneqq [n]\setminus B$ (probability $\approx S_n e^{-S_n}$). However, it would be incorrect to say that this is the leading term of the drift. It is not even necessarily a leading term among those terms that contribute positively to the drift. For example, consider the case that $A$ is not well-optimized, for illustration we may imagine that all of these bits are $1$-bits. Then, a much more likely scenario for an improvement in $B$ is that many bits in $A$ are flipped (which improves the fitness, and has probability $\Theta(1)$ instead of $O(S_n e^{-S_n})$), and one $1$-bit in $B$ is flipped. However, in this case there is an even more likely scenario: a similar combination of bits in $A$ is flipped, a $0$-bit in $B$ is flipped, and thus $B_1(t)$ moves away from the optimum. In general, the situation is more complex than for the case that $A$ consists only of $1$-bits. So what we really show is that all terms that contribute positively to the drift are either at most $C\epsilon|B|p_{n/2} S_n e^{-S_n}$, or they are counterbalanced by even larger terms that contribute negatively to the drift. Nevertheless, this gives a drift bound of $\E[\Delta_t] \leq C\epsilon|B|p_{n/2} S_n e^{-S_n}$, as required.

Let us formalize above ideas. Let $ \delta = 0.01$. If $p_n \leq 	\frac{1}{n^{1+\delta}}$, then the expected  optimization time is at least $\Omega(n^{1+\delta})$, because with probability $\frac{1}{2}$ the $n$-th bit is initialized with $1$ and the expected time until it is flipped for the first time is $\frac{1}{p_n}= n^{1+\delta}$.
 
 If $S_n \geq 2 \ln n$, then the probability that the offspring $y$ of any $x$ is the all $0$ string is 
\begin{align*}
\prod_{x_i=0, \ i \in [n]}(1-p_i)\prod_{x_i=1, \ i \in [n]}p_i
&\leq  \prod_{  i \in [n]}(1-p_i) \leq  e^{-S_n} \ ,
\end{align*}
where the first inequality follows from $p_i\leq 1/2$.
It follows that the expected time until the all $0$-string is hit is at least $e^{S_n}= \Omega(n^{1+\delta})$.

So let us assume that $p_n\geq \frac{1}{n^{1+\delta}}$ and $S_n \leq 2\ln n$. Let $X$ be the expected number of bits  in $[n]$ that flip in one mutation step. The Chernoff-Hoeffding bounds states that $\Pr(X\geq t) \leq 2^{-t}$ for $t\geq 2 e \mathbb{E}[X]$.  Note that $\mathbb{E}[X]=S_n$. Therefore, the Chernoff-Hoeffding bounds imply that $\Pr(X\geq 4 e \ln n) \leq 2^{-4 e \ln n} \leq n^{-2}$. Therefore, a union bound argument over the $n^{1+\delta}$ time steps imply that whp there is no step in which more than $4e\ln n$ bits are flipped. Since we want to prove a lower bound on the expected optimization time, we can assume for the remainder of this proof that at most $4e\ln n$ bits are flipped in every step.

Consider the intervals
$I_k=[ 2^{-k},2^{-(k+1)}]$ for 
$k=1, \ldots  \lceil \log_2 n^{1+\delta} \rceil $. By the pigeon-hole principle there exists a $k$  such that
the set $J_k=\{ i  \mid  n/2 \leq i \leq n, p_i \in I_k \}$ has size at  least 
$\frac{n}{\log_2^3 n}$. Let $\hat{k}$ be the largest such $k$.
Define $B$ to be an arbitrary subset of $ J_{\hat{k}}$ of size $n^{1-2\delta}$, and let $A\coloneqq [n] \setminus B$. Let $p_B=\frac{1}{2^{-(\hat{k}+1)}}$. Note that $2p_B\geq p_i\geq p_B$ for all $i \in B$, and $p_B \leq p_{\lceil n/2 \rceil}$ because $\vec{p}$ is monotone. Since  $|J_k| < \frac{n}{\ln^3_2 n}$ holds for $k> \hat{k}$, there are $(1-o(1))n$ positions $i$ with $p_B \leq p_i$. It follows $(1-o(1))np_B\leq S_n$, which implies $p_B\leq (1+o(1)) 2\ln n /n$. 
Further, for a search point $x^t$ define  $A_0$ and $A_1$ to be the set of positions of $A$ where $x^t$ is $0$ and $1$, respectively. Define $B_0$ and $B_1$ analogously. Next, define
\begin{align*}
S_0^A &\coloneqq \sum_{i \in A, \ x_i^{t}=0} p_i \ , \qquad 
 S_1^A \coloneqq \sum_{i \in A, \ x_i^{t}=1} p_i \ , \qquad
 S^A \coloneqq S_0^A+S_1^A \ ,
\end{align*}
and define $S_0^B$, $S_1^B$, $S^B$ analogously. Note that $S^B \leq  2p_B|B|\leq (1+o(1))2 \ln n / n \cdot n^{1-2\delta} \leq  n^{-\delta}$ for $n$ large enough.

In order to prove the asymptotic lower bound on the optimization time, it turns out that it is enough to consider the time until there are less than $n^{0.5+\delta}$ one bits. Let us assume $n^{0.5+\delta} \leq |B_1|\leq 2n^{0.5+2\delta}$ and define $\epsilon \coloneqq |B_1|/|B|$. In the sequel, we will consider the drift $\Delta _t \coloneqq B_1(t)-B_1(t+1)$, and we will show that there is a constant $C>0$ such that 
\begin{align}\label{eq:drift-of-Delta-t}
\mathbb{E}[\Delta_t \mid |B_1| = \eps |B|] \leq C |B_1| \max\{p_B S_ne^{-S_n},n^{-1-\delta}\}
\end{align}
for all $n^{-0.5+3\delta}\leq \eps \leq 2n^{-0.5+4\delta}$. This will then allow us to lower bound the optimization time with a multiplicative drift theorem. Note that an analogous lower bound on $\mathbb{E}[\Delta_t]$ does not need to hold. Depending on the bits in $A$, the absolute value $|\mathbb{E}[\Delta_t]|$ can be much larger than the right hand side of~\eqref{eq:drift-of-Delta-t}, but only if $\mathbb{E}[\Delta_t] <0$.

So let us show~\eqref{eq:drift-of-Delta-t}. Indeed, this argument will constitute the main part of the proof. Denote by $A_{01}$ the number of $0$-bits in $A$ that flip to $1$ and by $A_{10}$ the number of $1$-bits in $A$ that flip to $0$, and define $B_{01}$ and $B_{10}$ analogously.
In order to bound $\mathbb{E}[\Delta_t]$, we write it as a sum of $6$ terms $D_1, \ldots, D_6$ defined as follows. Note that in the sequel, we always condition on $|B_1|= \epsilon |B|$ but omit this for ease of notation.
\begin{align*}
D_1 & \coloneqq   \Pr(A_{01}< A_{10})\mathbb{E}\left[\Delta_t \mid A_{01} < A_{10} \right]\ ,\\
D_2 & \coloneqq  \Pr(A_{01}=A_{10}=0)\mathbb{E}\left[\Delta_t \mid A_{01}=A_{10}=0 \right]\ , \\
D_3 & \coloneqq  \Pr(A_{01}=A_{10}>0)\mathbb{E}\left[\Delta_t \mid A_{01}=A_{10}>0 \right]\ , \\
D_4 & \coloneqq   \Pr(A_{01}=1+A_{10}=1)\mathbb{E}\left[\Delta_t \mid A_{01}=1+A_{10}=1 \right]\ ,\\
D_5 & \coloneqq   \Pr(A_{01}=1+A_{10}>1)\mathbb{E}\left[\Delta_t \mid A_{01}=1+A_{10}>1 \right]\ ,\\
D_6 & \coloneqq  \Pr(A_{01}\geq 2+A_{10})\mathbb{E}\left[\Delta_t \mid A_{01}\geq 2+A_{10} \right] \ .
\end{align*} 

 The idea is the following. We show that $D_1+D_3+D_5 \leq 0$, while for $D_2, D_4$ and $D_6$ we derive precise upper bounds. Intuitively, it is very unlikely that more than one bit flips in $B$. Therefore, if $A_{01}=A_{10}=0$ or $A_{01}=A_{10}+1=1$ (this corresponds to the terms $D_2$ and $D_4$), then the leading term of $\mathbb{E}[\Delta_t]$  is caused by the event that one $1$-bit and no $0$-bits flip in $B$. If $A_{01}\geq 2+A_{10}$ (this corresponds to $D_6$), then $\mathbb{E}[\Delta_t]$  is small since at least two $1$-bits in $B$ need to be flipped such that the offspring is accepted. Otherwise,  it turns out to be likely that  the number of $1$-bits decreases in $A$ (this corresponds to $D_1$, $D_3$ and $D_5$), which will cause $\mathbb{E}[\Delta_t]$  to be negative because the leading term is caused by the event that one $0$-bit  and no $1$-bit flips in $B$.

 In order to formalize the steps outlined above, define  $F_k$ be the event that $k$ bits flip in $B$.  $\Delta_t=0$ if no bit flips in $B$, and thus, $\mathbb{E}[\Delta_t]= \sum_{k=1}^B \Pr(F_k)\mathbb{E}[\Delta_t\mid  F_k]$. Further, define $\hat{\Delta}_t$ to be the drift $B_1(t)-B_1(t+1)$ assuming that the offspring would  be accepted if and only if $B_1(t)>B_1(t+1)$. More formally, let $\bar{B}_1(t)$ be the number of $1$-bits in $B$ of the offspring at time $t$. Then, $\hat{\Delta}_t= i$ if $B_1(t)-\bar{B}_1(t)=i$ for $i \geq 1$, and $\hat{\Delta}_t =0 $ otherwise. Note that $\Delta_t \leq \hat{\Delta}_t$ holds no matter which events in $A$ we condition on. 
 \begin{claim}\label{claim:F_k}
 It holds $(1-o(1))|B| p_B\leq \Pr(F_1)\leq 2Bp_B$. Further,  it holds that $\Pr(F_k)\leq { B \choose k} (2p_B)^k \leq (2n^{-\delta})^{k}$. Moreover, there is a constant $c_1$ such that  $\mathbb{E}[\hat{\Delta}_t\mid  F_k] \leq c_1\epsilon^2$ for $2 \leq k\leq 4e \ln n$. 
 
 \end{claim}

Let us show this claim.
It holds that $\Pr(F_1)= \sum_{i \in B}\frac{p_i}{1-p_i} \prod_{j \in B} (1-p_j)$ and the first statement of the claim follows by $\prod_{j \in B} (1-p_j)=1-O(S^B) = 1-o(1)$. 

For the second statement note that $\Pr(F_k)= \sum_{I \subset B : |I|=k} \prod_{i \in I}p_i\prod_{i \in B\setminus I }(1-p_i)$ and the first part of the statement follows from $p_i \leq 2 p_B$ and $(1-p_i)\leq 1$. The second part of the statement follows from ${ |B| \choose k} \leq |B|^k$, $|B|= n^{1-2\delta}$ and $p_B\leq (1+o(1))2 \ln n /n $.

Let us prove the third statement. For two $k$-subsets $I$ and $J$ of $B$ denote by $F_I$ and $F_J$ the events that exactly the bits in $I$ and $J$ are flipped, respectively.  $p_i\leq 2p_j$ and $p_i, p_j \leq (1+o(1))4\ln n / n $ implies    $\frac{\Pr(F_I)}{\Pr(F_J)}= \prod _{i \in I} p_i/(1-p_i) \prod_{j \in J} (1-p_j)/p_j\leq 2^k(1+o(1))$. Let us assume that $k$ is odd. 
Let us count the number of $k$-sets $I$ such that $\hat{\Delta}_t= 1+2i$.  The number of such sets $I$ is 
\begin{align*}
{ \epsilon |B| \choose \frac{k+1}{2}+i} {(1-\epsilon)|B| \choose \frac{k-1}{2}-i}
&\leq  \frac{(\epsilon |B|)^{\frac{k+1}{2}+i}}{(\frac{k+1}{2}+i)!}\frac{((1-\epsilon) |B|)^{\frac{k-1}{2}-i}}{(\frac{k-1}{2}-i)!} \ .
\end{align*}

Using Stirlings formula, the total number of $k$ subsets of $B$ is 
\begin{align*}
{ |B| \choose k} &= \frac{|B|!}{k!(|B|-k)!}
= (1+o(1)) \frac{|B|^{|B|}}{k! e^k(|B|-k)^{|B|-k}}\frac{\sqrt{|B|}}{\sqrt{|B|-k}}\\
&= (1+o(1)) \frac{|B|^k}{k! e^k(1-\frac{k}{|B|})^{|B|-k+0.5}}\\
&\geq  (1+o(1))\frac{|B|^k}{k! e^k e^{-k-k(k-0.5)/|B|}}=  (1+o(1))\frac{|B|^k}{k! } \ , 
\end{align*}
where we used Lemma \ref{lemma:e_approx} for the inequality and  $k\leq 4e \ln n $.

Recall that the probabilities $F_I$ and $F_J$ for two $k$-subsets $I$ and $J$ are by at most a factor $2^k(1+o(1))$ apart. In order to obtain an upper bound on $\Pr(\hat{\Delta}_t=2i+1)$, we assume pessimistically that the subsets for which the number of $0$-bits improve are by a factor $2^k(1+o(1))$ times as likely. Therefore, the probability that $\hat{\Delta}_t = 2i +1 $ is at most the quotient of the above two terms multiplied by $2^k(1+o(1))$:
\begin{align*}
\Pr(\hat{\Delta}_t = 2i +1\mid F_k ) &\leq  (1+o(1))2^k \epsilon^{\frac{k+1}{2}+i} (1-\epsilon)^{\frac{k-1}{2}-i}{ k \choose \frac{k+1}{2}+i}\\
&\leq  (1+o(1)) (2^4\epsilon)^{\frac{k+1}{2}+i} \ . 
\end{align*}

It follows easily that there is a constant $c_1$ such that $\mathbb{E}[\hat{\Delta}_t\mid F_k]= \sum_{i=0}^{(k-1)/2} (1+2i) \Pr(\hat{\Delta}_t= 2i+1\mid  F_k)\leq c_1\epsilon^2$ for all odd $k$ in $\{2, \ldots , 4e\ln n\}$.
The calculations for even $k$  are  analogous and Claim \ref{claim:F_k} follows.

\begin{claim}\label{claim:D2}
For $n$ large enough it holds $D_2\leq |B_1|\max \{8e^{-S_n} p_B , n^{-1-\delta}  \} $.
\end{claim}
Denote by $E$ the event $A_{01}=A_{10}=0$.  Note that $\Pr(E)= \prod_{i \in A}(1-p_i)\leq  e^{-S^A}\leq 2e^{-S_n}$.  By Claim \ref{claim:F_k} it holds $\Pr(F_1|E)= \Pr(F_1) \leq 2 B p_B$ and $\mathbb{E}[\Delta_t\mid F_k,E]\leq \mathbb{E}[\hat{\Delta}_t\mid F_k]\leq c_1\epsilon^2$ for $2 \leq k \leq 4e\ln n$. Further, note that $\mathbb{E}[\Delta_t\mid F_1,E]= \epsilon$ and $\sum_{k\geq 2} \Pr(F_k)\leq 1$. It follows that 
\begin{align*}
D_2 \leq 2 e^{-S_n} (2Bp_B\epsilon+ c_1 \epsilon^2) \leq |B_1|\max \{8e^{-S_n} p_B , n^{-1-\delta}  \} \ , 
\end{align*}
where $4e^{-S_n}c_1\epsilon^2/ |B_1| \leq n^{-1-\delta}$ holds for $n$ large enough.


\begin{claim}\label{claim:D4}
For $n$ large enough it holds $D_4\leq |B_1|\max \{8S_ne^{-S_n} p_B , n^{-1-\delta}  \} $.
\end{claim}

Denote by $E$ the event $A_{01}=1+A_{10}=1$. It holds that $\Pr(E)= \sum_{i \in A_0} p_i  \prod_{j \in A\setminus i}(1-p_i)$
which is smaller than $ \sum_{i \in A_0} p_i  e^{-S^A+p_i}\leq 2S_ne^{-S_n}$, where the last inequality follows from $S_B \leq (1+o(1))n^{-\delta}$ and $e^{0.5}< 2$ . 
As above, by Claim \ref{claim:F_k} it holds $\Pr(F_1|E)= \Pr(F_1) \leq 2 B p_B$ and $\mathbb{E}[\Delta_t\mid F_k,E]\leq \mathbb{E}[\hat{\Delta}_t\mid F_k]\leq c_1\epsilon^2$ for $2 \leq k \leq 4e\ln n$. Further, note that $\mathbb{E}[\Delta_t\mid F_1,E]= \epsilon$ and $\sum_{k\geq 2} \Pr(F_k)\leq 1$. It follows that 
\begin{align*}
D_4 \leq 2S_n e^{-S_n} (2Bp_B\epsilon+ c_1 \epsilon^2) \leq |B_1|\max \{8S_ne^{-S_n} p_B , n^{-1-\delta}  \} \ , 
\end{align*}
where $4S_ne^{-S_n}c_1\epsilon^2/ |B_1| \leq n^{-1-\delta}$ holds for $n$ large enough.


\begin{claim}\label{claim:D6}
For $n$ large enough it holds $D_6\leq |B_1|n^{-1-\delta} $.
\end{claim}
Denote by $E$ the event $A_{01}\geq 2+A_{10}$. Clearly, $\Pr(E) \leq 1$. If $A_{01}\geq 2+A_{10}$, then there  need to be at least $2$ bits more that flip in $B_1$ than in $B_0$ such that the number of $0$-bits in $B$ increases. Therefore, at least $2$ bits need to flip in $B$ such that the number of $0$-bits increases. Thus, $\mathbb{E}[\Delta_t|E] \leq \sum_{k\geq 2} \Pr(F_k) \mathbb{E}[\Delta_t|F_k,E] \leq c_1\epsilon^2$, where the last step follows by Claim \ref{claim:F_k} and $\Delta_t \leq \hat{\Delta}_t$. Thus, for $n$ large enough it holds $D_6\leq c_1 \epsilon^2 \leq |B_1|n^{-1-\delta}$. 


\begin{claim}\label{claim:Dodd}
For $n$ large enough it holds $D_1+D_3+D_5\leq 0 $.
\end{claim}
If we condition on  $A_{01}<A_{10} $ and $F_1$, then the offspring will be accepted and therefore $\Delta_t=-1$ happens with probability $1-\epsilon$,  and $\Delta_t=1$ happens with probability $\epsilon$. Thus, $\mathbb{E}[\Delta_t\mid F_1,  \ A_{01}<A_{10}]= -1+2\epsilon$. Since $\Delta_t \leq \hat{\Delta}_t$ and since $\hat{\Delta}_t$ does not depend on the flips in $A$, we can bound $\mathbb{E}[\Delta_t\mid F_k,  \ A_{01}<A_{10}]\leq 
\mathbb{E}[\hat{\Delta}_t\mid F_k]
= O(\epsilon^2)$ for $k\geq 2$. It follows that 
\begin{align} 
\mathbb{E}[\Delta_t\mid  A_{01}<A_{10}] &\leq \Pr(F_1) (-1+2\epsilon) +O(\epsilon^2) \ .  \label{eq1}
\end{align} 
Note that this is smaller than $0$ since by Claim \ref{claim:F_k} it holds $\Pr(F_1)\geq (1+o(1)) B p_B \geq (1+o(1))(n^{-3 \delta})= \omega(\epsilon^2)$.

If $A_{01}=A_{10}>0$, then the number of $0$-bits in $B$ increases if and only if more $1$-bits than $0$-bits flip in $B$ . It holds that  $\mathbb{E}[\Delta_t\mid  F_1, \ A_{01}=A_{10}>0]= \epsilon$ and  $\mathbb{E}[\Delta_t\mid  F_k,\ A_{01}=A_{10}>0]\leq 
\mathbb{E}[\hat{\Delta}_t\mid F_k]=O(\epsilon^2)$ for $k \geq 2$. It follows that 
\begin{align}
\mathbb{E}[\Delta_t\mid A_{01}=A_{10}>0] \leq \Pr(F_1)\epsilon +O(\epsilon^2) \ . \label{eq2}
\end{align}

If $A_{01}=1+A_{10}>1$, then the number of $0$-bits in $B$ increases if and only if more $1$-bits than $0$-bits in $B$ flip. It holds that  $\mathbb{E}[\Delta_t\mid  F_1, \ A_{01}=1+A_{10}>1]= \epsilon$, and  $\mathbb{E}[\Delta_t\mid  F_k, \ A_{01}=1+A_{10}>1]\leq 
\mathbb{E}[\hat{\Delta}_t\mid F_k]=O(\epsilon^2)$ for $k\geq 2$. It follows that 
\begin{align}
\mathbb{E}[\Delta_t\mid  A_{01}=1+A_{10}>1] \leq \Pr(F_1)\epsilon +O(\epsilon^2) \ . \label{eq3}
\end{align}

Recall that
\begin{align*}
 D_1+D_3+D_5
= &\Pr(A_{01}< A_{10})\mathbb{E}[\Delta_t\mid  A_{01}<A_{10}]\\
 &  + \Pr(A_{01}=A_{10}>0)\mathbb{E}[\Delta_t\mid  A_{01}=A_{10}>0]\\
 &  + \Pr(A_{01}=1+A_{10}>1)\mathbb{E}[\Delta_t\mid  A_{01}=1+A_{10}>1] \ .
\end{align*}

Note that the assumption that at most $4e\ln n$ bits are flipped in every step implies (mind the inequality in the first line)
\begin{align}
\Pr(A_{01}< A_{10}) &\geq  \sum_{i=1}^{4e\ln n} \Pr(A_{01}=i-1)\Pr(A_{10}=i) \ , \label{eq4} \\ 
\Pr(A_{01}=A_{10}>0)&=\sum_{i=1}^{4e\ln n} \Pr(A_{01}=i)\Pr(A_{10}=i) \ , \\
\Pr(A_{01}=1+A_{10}>1)&=\sum_{i=1}^{4e\ln n} \Pr(A_{01}=i+1)\Pr(A_{10}=i) \ .
\end{align}

Next, define 
\begin{align*}
\hat{D}_i \coloneqq  &\Pr(A_{01}=i-1)\mathbb{E}[\Delta_t\mid  A_{01}<A_{10}] \\ 
 & + \Pr(A_{01}=i)\mathbb{E}[\Delta_t\mid  A_{01}=A_{10}>0]\\
  & +\Pr(A_{01}=i+1)\mathbb{E}[\Delta_t\mid  A_{01}=1+A_{10}>1] \ .
\end{align*}
In the sequel, we show that $\hat{D}_i\leq 0 $ for all $1\leq i \leq 4e\ln n$. This implies the claim since because  it is fine to underestimate $\Pr(A_{01}< A_{10})$ as done in \ref{eq4} since 
 $\mathbb{E}[\Delta_t\mid  A_{01}<A_{10}]\leq 0$.
$\Pr(A_{01}=k)$ can be lower bounded by
\begin{align*}
 & \frac{\sum_{i\in A_0} p_i }{S_n} \Pr(A_{01}=k)
= \frac{1}{S_n}\sum_{i\in A_0} p_i \sum_{J \subset A_0 : |J|=k }\left(\prod _{j \in J} p_j \prod_{j \in A_0\setminus J}(1-p_j)\right)\\
&\geq  \frac{k+1}{2S_n}  \sum_{J \subset A_0 : |J|=k+1} \left(\prod _{j \in J} p_j \prod_{j \in A_0\setminus J}(1-p_j)\right)
= \frac{k+1}{2S_n}\Pr(A_{01}=k+1) \ ,
\end{align*}
where we used $1-p_j \geq \frac{1}{2}$ and the fact that every factor appears $k+1$ times in the sum. Using $\Pr(A_{01}=k)/\Pr(A_{01}=k+1)\geq (k+1)/(2S_n)\geq 1/(4\ln n)$, we can upper bound $\hat{D}_i/ \Pr(A_{01}=i-1)$ by
\begin{align*}
 & \mathbb{E}[\Delta_t\mid  A_{01}<A_{10}] 
 + 4\ln n\mathbb{E}[\Delta_t\mid  A_{01}=A_{10}>0] \\
 &+ (4\ln n)^2\mathbb{E}[\Delta_t\mid A_{01}=1+A_{10}>1] \\
\leq &  \Pr(F_1)(-1+2\epsilon) + O(\epsilon^2) 
  + 4\ln n\left(\Pr(F_1)\epsilon + O(\epsilon^2)\right)\\ 
& + (4\ln n)^2\left(\Pr(F_1)\epsilon + O(\epsilon^2)\right)\\
\leq  & 0 \ , 
\end{align*}
 where  \eqref{eq1}, \eqref{eq2} and \eqref{eq3} imply the first inequality, and the second inequality follows for $n$ large enough since $\Pr(F_1) \geq (1-o(1))|B|p_B \geq n^{-3\delta}$ and $\epsilon \leq 2n^{-0.5+2\delta}$. This proves Claim~\ref{claim:Dodd}. 
 
 Altogether, the claims~\ref{claim:D2},~\ref{claim:D4},~\ref{claim:D6}, and~\ref{claim:Dodd} prove ~\eqref{eq:drift-of-Delta-t}.

In the following, we apply the lower bound multiplicative drift theorem, Theorem~\ref{thm:multiplicative-drift}, on the process $B_1(t)$. Since we assume that in every step at most $4e\ln n$ bits are flipped,  there is a point in time with $B_1(t)= c n^{0.5 +2\delta}$ for some $1\leq c \leq 2$ or the optimum will not be found. Define $\kappa = 4e\ln n$, then condition~(1) of Theorem \ref{thm:multiplicative-drift} is satisfied. Further, let $s_{\min}=n^{0.5+\delta}$ and $s_0=  cn^{ 0.5+2\delta}$.  
Let $T$ be the first point in time such that $B_1(t) \leq s_{\min}$. Let $\hat{\delta}= C\max\{p_BS_ne^{-S_n}, n^{-1-\delta}\}$. By Equation \ref{eq:drift-of-Delta-t} it holds $\mathbb{E}[\Delta_t\mid  B_1(t)= s] \leq\hat{\delta}\cdot s$ for $n^{0.5+\delta}\leq s \leq 2n^{0.5+2\delta}$, which is Condition~(2) in Theorem~\ref{thm:multiplicative-drift}.
\footnote{Here we cheat slightly, as we only show this condition for $s_{\min} \leq s \leq 2n^{0.5+2 \delta }$, not for all $s \geq s_{\min}$. However, the proof of Theorem~\ref{thm:multiplicative-drift} in~\cite{doerr2017bounding} reduces multiplicative drift to additive drift by considering the rescaled random variable $Y_t := \ln(B_1(t))$. It was shown in~\cite{doerr2017bounding} that this has still at most constant additive drift whenever $s_{\min} \leq B_1(t) \leq s_0$, and the additive drift theorem only requires drift for these values.}
Therefore, the multiplicative drift theorem implies that 
\begin{align*}
&\mathbb{E}[T\mid B_1(0)= s_0]\geq \frac{1+\ln s_0 - \ln s_{\min}}{2\hat{\delta}+ \frac{\kappa^2}{s_{\min}^2-\kappa^2}} \\
&\geq \frac{1+\ln s_0 - \ln s_{\min}}{\max \{4 p_BS_ne^{-S_n}, 4 n^{-1-\delta}, 2\frac{\kappa^2}{s_{\min}^2-\kappa^2}\}}
\geq c_2 \ln n \min \{\frac{e^{S_n}}{S_np_B}, n^{1+\delta}\} \ ,
\end{align*}
where the last inequality  holds for $n$ and $c_2$ large enough.
 This concludes the proof since $p_B \leq p_{\lceil n/2 \rceil}$.

 \qed
\end{proof}

The next lemma links the bound on the runtime with non-summable sequences.

 \begin{lemma}\label{lemma:bound_runtime_by_non_summable_q}
Let $(p_n)_{n\geq 1}$ and $(q_n)_{n\geq 1}$ be sequences of positive reals with $p_n \leq 1$ for all $n \in \N$. Let $I \subseteq \N$ be the set of all indices $n$ such that
\begin{align}\label{eq:discrete_variable_transform}
\min\left\{\frac{n^{1.01}}{\ln n},\frac{\exp\{\sum_{i = 1}^n p_i\}}{p_{\lceil n/2 \rceil} \cdot \sum_{i = 1}^n p_i} \right\}\leq \frac{1}{q_n}.
\end{align}
Then $\sum_{n\in I}q_n <\infty$. In particular, if $\sum_{n=1}^{\infty}q_n = \infty$  then $|\N \setminus I| = \infty$.
\end{lemma}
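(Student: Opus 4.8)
The plan is to reduce the whole statement to one summability estimate. Write $M_n$ for the left-hand side of \eqref{eq:discrete_variable_transform} and $S_n := \sum_{i=1}^n p_i$, so that $I = \{\,n : M_n \le 1/q_n\,\}$. Since $M_n,q_n>0$, membership $n\in I$ is equivalent to $q_n \le 1/M_n$, hence $\sum_{n\in I} q_n \le \sum_{n\in I} 1/M_n \le \sum_{n\ge 1} 1/M_n$. So it suffices to prove $\sum_{n\ge1} 1/M_n < \infty$. Granting that, the ``in particular'' part is immediate: if $\N\setminus I$ were finite, then $\sum_n q_n = \sum_{n\in I}q_n + \sum_{n\notin I}q_n$ would be a sum of two finite quantities, contradicting $\sum_n q_n=\infty$.

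To estimate $\sum_n 1/M_n$, note that $\frac1{M_n} = \max\bigl\{\tfrac{\ln n}{n^{1.01}},\; p_{\lceil n/2\rceil}\,S_n\,e^{-S_n}\bigr\} \le \tfrac{\ln n}{n^{1.01}} + p_{\lceil n/2\rceil}\,S_n\,e^{-S_n}$ (for $n=1$ the first term is read as $+\infty$, which only lowers $M_1$ and adds one finite term to every sum below). The series $\sum_n \tfrac{\ln n}{n^{1.01}}$ converges by the integral test (Lemma~\ref{lemma:int_approx_sum}), so the only real task is to bound $\sum_n p_{\lceil n/2\rceil}\,S_n\,e^{-S_n}$.

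For this I would first remove the awkward factor $S_n$ using the elementary bound $xe^{-x}\le \tfrac2e e^{-x/2}$ for $x\ge 0$ (maximize $xe^{-x/2}$ at $x=2$), giving $S_n e^{-S_n} \le \tfrac2e e^{-S_n/2}$. Then group the sum over $n$ by the value $m=\lceil n/2\rceil$: each $m\ge1$ is hit by exactly the two indices $n\in\{2m-1,2m\}$, and since $(S_n)$ is non-decreasing and $2m-1\ge m$, both of them satisfy $S_n\ge S_m$, so $e^{-S_n/2}\le e^{-S_m/2}$. This yields $\sum_n p_{\lceil n/2\rceil}\,S_n\,e^{-S_n} \le \tfrac4e\sum_{m\ge1} p_m\,e^{-S_m/2}$. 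Finally, the last sum is dominated by a convergent integral: writing $p_m = S_m - S_{m-1}$ with $S_0=0$ and using that $e^{-x/2}$ is decreasing, $p_m e^{-S_m/2} \le \int_{S_{m-1}}^{S_m} e^{-x/2}\,\mathrm dx$, and summing the telescoping intervals gives $\sum_{m\ge1} p_m e^{-S_m/2} \le \int_0^\infty e^{-x/2}\,\mathrm dx = 2$. Altogether $\sum_n p_{\lceil n/2\rceil}\,S_n\,e^{-S_n} \le 8/e < \infty$, and hence $\sum_n 1/M_n<\infty$, which completes the proof.

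There is no genuinely hard step here; the points to be careful about are the index bookkeeping for $p_{\lceil n/2\rceil}$ and recognizing that the ``continuous substitution'' $u=S_n$ is what turns $\sum_m p_m e^{-S_m/2}$ into $\int e^{-u/2}\,\mathrm du$. The extra factor $S_n$ in the numerator is the one mild nuisance, and it is disposed of cheaply by the inequality $xe^{-x}\le \tfrac2e e^{-x/2}$ so that no second substitution is needed. Note that no monotonicity of $\vec p$ is required for this lemma — monotonicity entered only in Lemma~\ref{lemma:pos_dep_non_summable}.
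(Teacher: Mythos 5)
Your proof is correct. The initial reduction is the same as the paper's: both of you bound $\sum_{n\in I}q_n$ by $\sum_n a_n+\sum_n b_n$ with $a_n=\ln n/n^{1.01}$ and $b_n=p_{\lceil n/2\rceil}S_ne^{-S_n}$, dispose of $\sum a_n$ trivially, and concentrate on $\sum b_n$. Where you diverge is the key estimate for $\sum_n b_n$. The paper distinguishes the cases $S<\infty$ and $S=\infty$, and in the latter partitions $\N$ into blocks $\{2n_{k-1}-1,\dots,2n_k-2\}$ on which $S_n$ passes from $k-1$ to $k$; the hypothesis $p_n\le 1$ is used to show each block carries total weight $\sum p_{\lceil n/2\rceil}\le 4$, and the tail is then controlled by $\sum_k \sup_{x\ge k-1}xe^{-x}<\infty$. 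You instead apply the pointwise bound $xe^{-x}\le \tfrac{2}{e}e^{-x/2}$, regroup the sum by $m=\lceil n/2\rceil$ using monotonicity of $S_n$, and dominate $\sum_m p_m e^{-S_m/2}$ by the telescoping Riemann-sum comparison $p_me^{-S_m/2}\le\int_{S_{m-1}}^{S_m}e^{-x/2}\,\mathrm{d}x\le$ a piece of $\int_0^\infty e^{-x/2}\,\mathrm{d}x=2$. Your route needs no case distinction between summable and non-summable $\vec p$, and in fact never invokes $p_n\le 1$, so it proves a marginally more general statement with an explicit constant ($8/e$); the paper's block decomposition is the discrete analogue of your substitution $u=S_n$ but requires more bookkeeping. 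Your observation that monotonicity of $\vec p$ is not needed is also correct and consistent with the paper. The only point worth making explicit in a write-up is the $n=1$ boundary case ($\ln 1=0$), which you already flag.
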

\begin{proof}
We abbreviate $a_n := \ln n/n^{1.01}$ and $b_n := p_{\lceil n/2 \rceil} \sum_{i = 1}^n p_i \cdot \exp\{-\sum_{i = 1}^n p_i\}$, so the left hand side of~\eqref{eq:discrete_variable_transform} is $\min\{a_n^{-1},b_n^{-1}\} = (\max\{a_n,b_n\})^{-1}$. By taking the inverse of~\eqref{eq:discrete_variable_transform} and summing over all $n \in I$, we get
\begin{align}\label{eq:discrete_variable_transform2}
\sum_{n\in I} q_n \leq \sum_{n\in I}\max\{a_n,b_n\} \leq \underbrace{\sum_{n=1}^{\infty} a_n}_{=:S_a} + \underbrace{\sum_{n=1}^{\infty} b_n}_{=:S_b}.
\end{align}
Obviously $S_a < \infty$, so it remains to show $S_b<\infty$. To ease notation, let $f(x) := xe^{-x}$, so $S_b = \sum_{n=1}^{\infty}p_{\lceil n/2 \rceil} f(\sum_{i=1}^n p_i)$. The function $f$ is easily seen to be increasing from $0$ to $1$, and to be decreasing afterwards.

If $\sum_{n=1}^{\infty} p_n = c < \infty$, then $S_b \leq \sum_{n=1}^{\infty}p_{\lceil n/2 \rceil} c \leq 2 c^2$, and we are done. So assume $\sum_{n=1}^{\infty} p_n = \infty$. Then for all $k\in \N$ the index $n_k := \min\{n \geq 1 \mid \sum_{i = 1}^n p_i \geq k\}$ is well-defined, and we set $n_0 := 1$. Since $p_n \leq 1$, we have
\[
\sum_{n= 2n_k-1}^{2n_{k+1}-2} p_{\lceil n/2 \rceil} =2\bigg(p_{n_k} +\sum_{i = 1}^{n_{k+1}-1} p_i - \sum_{i= 1}^{n_{k}} p_i\bigg) \leq 4.
\]
Therefore,
\begin{align*}
& S_b  = \sum_{k=1}^{\infty}\sum_{n=2n_{k-1}-1}^{2n_k -2} p_{\lceil n/2 \rceil} f\bigg(\sum_{i = 1}^n p_i\bigg) \leq \sum_{k=1}^{\infty}\sum_{n=2n_{k-1}-1}^{2n_k -2} p_{\lceil n/2 \rceil} \sup_{x \in [k-1,\infty)}\{f(x)\} \\
& \leq \sum_{k=1}^{\infty} 4\cdot  \sup_{x \in [k-1,\infty)}\{f(x)\} = 4\bigg(f(1) + \sum_{k=2}^{\infty}f(k-1)\bigg)  = \frac{4}{e} + \frac{4e}{(e-1)^2} < \infty,
\end{align*}
as required. \qed

\end{proof}

\newproof{pot7}{Proof of Theorem \ref{thm:position_dependent_lower_bound}}

\begin{pot7}
Let $\vec{p}$ be any monotone decreasing sequence with $p_1\leq 1/2$, and let $\vec{q}$ be any non-summable sequence. If $\vec{p}$ is summable, then Lemma \ref{lemma:pos_dep_summable}, otherwise  Lemma \ref{lemma:pos_dep_non_summable}, implies that the expected optimization time is $\Omega(\min\{ \frac{\ln (n)e^{S_n}}{S_n p_{\lceil n/2 \rceil}}, n^{1.01}\})$. Then, Lemma \ref{lemma:bound_runtime_by_non_summable_q} implies the theorem. \qed
\end{pot7}

\section{Conclusions}
\label{sec:conclusions}
We have precisely analyzed the optimal strategies for the hidden subset problem  for the scheduled and the adaptive setup. Both are asymptotically faster than the best strategy for the static setup. For the adaptive setup, the unknown $n$ does not increase the runtime. For the non-adaptive setup, there is a price to pay, namely we lose a factor of $\beta/e \approx 1.307$ in the runtime. The best algorithm in this case follows a rather natural schedule $p_t = \alpha t/\ln t$, except for the surprising factor $\alpha$. The best schedule is surprisingly rigorously determined, and even slight deviations from the optimal schedule lead to a loss in performance. On the other hand, the algorithm that achieves runtime $(1\pm o(1))en\ln n$ in the adaptive case is arguably rather artificial and ad hoc. Most common strategies like the $1/5$-rule adapt the mutation rate in small steps, see~\cite{eiben1999parameter,karafotias2015parameter} for reviews. It is an interesting question whether the same runtime can be achieved with such strategies.

Another intriguing question is on the connection between the hidden subset problem and the initial segment uncertainty model. On all studied fitness functions, the optimal runtimes of these algorithms are asymptotically equal -- for \leadingones the connection is even more intimate. It remains an open question whether a general connection can be found between the two models.

\subsection*{Acknowledgments}
Marcelo Matheus Gauy was supported by CNPq grant no. 248952/2013-7.  Asier Mujika was supported by the Swiss National Science Foundation CRSII5-173721.


\begin{footnotesize}
\bibliographystyle{elsarticle-num}
\bibliography{ref}
\end{footnotesize}

\end{document}